\newcolumntype{2}{D{.}{}{2.0}}
\newcolumntype{L}{>{\centering\arraybackslash}m{3cm}}
\date{}
\author{
  Murad Tukan \\
  \texttt{muradtuk@gmail.com}
  \And
  Alaa Maalouf \\
  \texttt{alaamalouf12@gmail.com}
  \And
  Dan Feldman \\
  \texttt{dannyf.post@gmail.com}
  \AND
  \normalfont{The Robotics and Big Data Lab,}\\Department of Computer Science,\\University of Haifa,\\Haifa, Israel
}
\newcounter{myCounter}
\begin{document}

\title{Coresets for Near-Convex Functions}
\maketitle
\begin{abstract}

Coreset is usually a small weighted subset of $n$ input points in $\mathbb{R}^d$, that provably approximates their loss function for a given set of queries (models, classifiers, etc.). Coresets become increasingly common in machine learning since existing heuristics or inefficient algorithms may be improved by running them possibly many times on the small coreset that can be maintained for streaming distributed data. Coresets can be obtained by sensitivity (importance) sampling, where its size  is proportional to the total sum of sensitivities. Unfortunately, computing the sensitivity of each point is problem dependent and may be harder to compute than the original optimization problem at hand. 

We suggest a generic framework for computing sensitivities (and thus coresets) for wide family of loss functions which we call near-convex functions.
This is by suggesting the $f$-SVD factorization that generalizes the SVD factorization of matrices to functions. Example applications include coresets that are either new or significantly improves previous results, such as SVM, Logistic regression, M-estimators, and $
\ell_z$-regression. Experimental results and open source are also provided.

\label{abstract}

\end{abstract} 
\section{Introduction} \label{introduction}
In common machine learning problems, we are given a set of input points $P \subseteq \REAL^d$ (training data), and a loss function $f:P\times \Q \to [0, \infty)$, where the goal is to solve the optimization problem of finding a query (model, classifiers, centers) $x^*$ that minimizes the sum of fitting errors $\sum_{p\in P} f(p,x)$ over every query $x$ in a given (usually infinite) set. For example, in $k$-median (or $k$-mean) clustering, each query is a set of $k$ centers and the loss function is the distance (or squared distance) of a point to its nearest center. In linear regression or SVM, every input point includes a label, and the loss function is the fitting error between the classification of $p$ via a given query to the actual label of $p$. Empirical risk minimization (ERM) may be used to generalize the result from train to test data.



%
%
\paragraph{Modern machine learning. }In practice, many of these optimization or learning problems are usually hard even to approximate. Instead, practical heuristics with no provable guarantees may be used to solve them. Even for well understood problems, which have close optimal solution, such as linear regression or classes of convex optimization, in the era of big data we may wish to maintain the solution in other computation models such as: streaming input data (``on-the-fly") that provably uses small memory, parallel computations on distributed data (on the cloud, network or GPUs) as well as deletion of points, constrained optimization (e.g. sparse classifiers). Cross validation~\cite{hastie2009elements} or hyper-parameter tuning techniques such as AutoML~\cite{feurer2015efficient,guyon2015design} need to evaluate many queries for different subsets of the data, and different constraints.

\paragraph{Coresets. }One approach is to redesign existing machine learning algorithms for faster, approximate solutions and these new computation models. A different approach that is to use data summarization techniques. \emph{Coresets} in particular were first used to solve problems in computational geometry~\cite{AHV04} and got increasing attention in both the industry~\cite{bachem2018scalable,pmlr-v84-bachem18a,buadoiu2008optimal,curtain2019coresets,karnin2019discrepancy} and academy~\cite{balcan2013distributed, pmlr-v97-braverman19a,feldman2010coresets, feldman2014coresets} over the recent years; see surveys in~\cite{feldman2020core,munteanu2018coresets,phillips2016coresets}. Informally, coreset is a small weighted subset of the input points (unlike e.g. sketches, or dimension-reduction techniques) that approximates the loss of the input set $P$ for \emph{every} feasible query $x$, up to a provable bound of $1\pm\eps$ for a given error parameter $\eps\in(0,1)$. The size of the coreset is usually polynomial in $1/\eps$ but independent or near-logarithmic in the size of the input. Since such a coreset approximates every query (and not just the optimal one), it supports constraint optimization, and the above computation models using merge-and-reduce trees; see details in~\cite{feldman2020core}. Moreover, coresets may be computed in time that is near-linear in the input, even for NP-hard optimization problems. Existing heuristic or inefficient algorithms may then be applied many times on the small coreset to obtain improved or faster models in such cases.

Example coresets in machine learning include~\emph{SVM}~\cite{har2007maximum,tsang2006generalized,tsang2005core,tsang2005very,tukan2020coresets}, $\ell_z$-regression~\cite{cohen2015lp, dasgupta2009sampling, sohler2011subspace}, clustering~\cite{bachem2015coresets, cohen2015dimensionality, feldman2011scalable,gu2012coreset,jubran2020sets,lucic2015strong, schmidt2019fair}, logistic regression~\cite{huggins2016coresets,munteanu2018coresets}, LMS solvers and \emph{SVD}~\cite{feldman2013turning, maalouf2019fast,maalouf2019tight, sarlos2006improved},
where all of these works have been dedicated to suggest a coreset for a specific problem.


\paragraph{A generic framework} for constructing coresets was suggested in~\cite{feldman2011unified, langberg2010universal}. It states that, with high probability, non-uniform sampling from the input set yields a coreset. Each point should be sampled i.i.d. with a probability that is proportional to its importance or sensitivity, and assigned a multiplicative weight which is inverse proportional to this probability, so that the expected original sum of losses over all the points will be preserved. Here, the sensitivity of an input point $p\in P$ is defined to be the maximum of its relative fitting loss $s(p)=f(p,x)/\sum_{q\in P}f(q,x)$ over every possible query $x$. The size of the coreset is near-linear in the total (sum) $t$ of these sensitivities; see Theorem~\ref{thm:coreset} for details. It turns out in the recent years that many classical and hard machine learning problems~\cite{braverman2016new, lucic2017training, sohler2018strong} have total sensitivity that is near-logarithmic or independent of the input size $\abs{P}$ which implies small coresets via sensitivity sampling.

\paragraph{Paper per problem.} The main disadvantage of this framework is that the sensitivity $s(p)$, as defined above, is problem dependent: namely on the loss function $f$ and the feasible set of queries. Moreover, maximizing $s(p)=f(p,x)/\sum_{q\in P}f(q,x)$ is equivalent to minimizing the inverse $\sum_{q\in P}f(q,x)/f(p,x)$. Unfortunately, minimizing the enumerator is usually the original optimization problem which motivated the coreset in the first place. The denominator may make the problem harder, in addition to the fact that now we need to solve this optimization problem for each and every input point in $P$. While approximations of the sensitivities usually suffice, sophisticated and different approximation techniques are frequently tailored in papers of recent machine learning conferences for each and every problem.


\subsection{Problem Statement}
To this end, the goal of this paper is to suggest a framework for sensitivity bounding of a \emph{family} of functions, and not for a specific optimization problem. This approach is inspired by convex optimization: while we do not have a single algorithm to solve any convex optimization, we do have generic solutions for family of convex functions. E.g., linear programming, Semi-definite programming, and so on.

%

We choose the following family of near-convex loss functions, with example supervised and unsupervised applications that include support vector machines, logistic regression, $\ell_z$-regression for any $z \in (0, \infty)$, and functions that are robust to outliers. In the Supplementary Material we suggest a more generalized version that handles a bigger family of functions; see Definition~\ref{def:familyOfConvexFuncs_Gen}, and hope that this paper will inspire the research of more and larger families.

\begin{Definition}[Near-convex functions]
\label{def:familyOfConvexFuncs}
Let $P \subseteq \REAL^d$ be a set of $n$ points, and let $f : P \times \Q \to [0,\infty)$ be a loss function. We call $f$ a near-convex function if there are a convex loss function $g : P \times \Q \to [0, \infty)$ (see Definition~\ref{def:conv} at Supplementary Material), a function $h : P \times \Q \to [0,\infty)$, and a scalar $z > 0$ satisfying:
\begin{enumerate}[label=(\roman*), nolistsep]
\item \label{prop:reductionToFam} There exist $c_1,c_2 > 0$ such that for every $p \in P$, and $x \in \Q$,
\[
c_1 \term{g(p,x)^z + h(p,x)^z} \leq f(p,x) \leq c_2 \term{ g(p,x)^z+ h(p,x)^z}.
\]
\item \label{prop:def} For every $p \in P$, $x \in \Q$ and $b > 0$, we have $
g(p,bx) = b \cdot g(p,x)$.
\item \label{prop:h_trait} For every $p \in P$ and $x \in \Q$, we have $
\frac{h(p,x)^z}{\sum_{q \in P} h(p,x)^z} \leq \frac{2}{n}$.
\item \label{prop:defLvl} The set $\mathcal{X}_g = \br{x \in \Q \middle| \sum_{p \in P } g(p,x)^{\maxArgs{1,z}} \leq 1}$ is centrally symmetric, i.e., for every $x \in \mathcal{X}_g$ we have $-x \in \mathcal{X}_g$, and there exist $R,r \in (0,\infty)$ such that
$
B(0_d,r) \subset \mathcal{X}_g \subset B(0_d,R),
$
where $B(0_d,y)$ denotes a ball of radius $y > 0$, centered at $0_d$.
\end{enumerate}
We denote by $\set{F}$, the union of all functions $f$ with the above properties.
\end{Definition}

We are interested in a generic algorithm that would get a set of input points, and a loss function as above, and compute a sensitivity for each point, based on the parameters of the given loss function. In addition, we wish to use worst-case analysis and prove that for every input the total sensitivity (and thus size of coreset) would be small, depending on the ``hardness" of the loss function that is encapsulated in the above parameters $z, R$, etc.

\section{Related Work}
\label{sec:applications}

\paragraph{Logistic Regression.}
A coreset construction algorithms for the problem of logistic regression were suggested by~\cite{huggins2016coresets},~\cite{ tolochinsky2018generic}, and~\cite{munteanu2018coresets}. All of these works handled variations of the problem, e.g., they all lack the incorporation of the bias term (intercept) in their loss function.
Specifically speaking, both \cite{huggins2016coresets} and \cite{munteanu2018coresets} didn't account for the regularization term and its parameter. Furthermore, the coreset's size established by~\cite{munteanu2018coresets}, was dependant on the structure of the input data. As for~\cite{tolochinsky2018generic}, the coreset only succeed for a small subset of queries (a ball in $\REAL^d$ of radius $r$, where the coreset's size is near linear in $r$).
Contrary to previous works, our coreset approximates the logistic regression loss function including the bias parameter (intercept) and the regularization term for every possible query. This is the loss function that is usually used in practice, e.g., see Sklearn library in~\cite{scikit-learn}. Finally, our coreset's size is independent of the structure of the data.


\paragraph{SVM.} \cite{clarkson2010coresets, tsang2006generalized, tsang2005core} addressed the problem of coreset construction for SVM, yet they used squared hinge loss to enforce the SVM cost function to be strongly convex. At \cite{tukan2020coresets}, the coreset is constructed with respect to the hinge loss which most used form of SVM in practice (see Sklearn library at~\cite{scikit-learn}). However for the coreset to be constructed, a (sub-)optimal solution was required for the problem itself. In addition, the coreset size depended heavily on on the ratio between the variance of each class of points.
In this paper, we also address a coreset with respect to the hinge loss, yet we don't require any (sub-)optimal solution to construct the coreset, and our coreset's size depends on the ratio between the number of points of each class (see Corollary~\ref{lem:SVM}).

\paragraph{$\ell_z$-Regression.} A notable line of work~\cite{clarkson2005subgradient, cohen2015lp, dasgupta2009sampling, sohler2011subspace, woodruff2013subspace} addressed the construction of coresets and sketches in this area, however, all such papers addressed the case of $z \geq 1$. Most of these works used tools similar to the well-conditioned basis which was first suggested at~\cite{dasgupta2009sampling} to compute such coresets. Intuitively it can be thought of as a generalization of the \emph{SVD} factorization of an input set with respect to the loss function of $\ell_z$-regression for any $z \geq 1$. In our framework we generalize this factorization in order to compute coresets for the near-convex functions. To our knowledge, we suggest the first coreset for the problem of $\ell_z$-regression for any $z \in (0,1)$.

\paragraph{Outlier resistant functions. (similar to $M$-estimators)} To our knowledge, we present the first coreset for such problem; see Corollary~\ref{thm:restrictedLp}.

\section{Our contribution}
In this paper, we suggest an $\eps$-coreset construction algorithm with respect to any near-convex function. Specifically speaking, we provide:
\begin{enumerate}[label=(\roman*)]
  \item A generalization of the well conditioned bases of~\cite{dasgupta2009sampling} to a broader family of functions, i.e., not just for $\ell_z$-Regression problems where $z \geq 1$. This informally describes a factorization of the input data with respect to a given near-convex loss function. We call such factorization the $f$-SVD of $P$ (see Definition~\ref{def:svdWRTF}).
  \item A framework for bounding the sensitivity of each point in an input set with respect to any near-convex function. The heart of the framework relies on computing the $f$-SVD factorization described in~(i); see Lemma~\ref{thm:sensitivityBound} and Algorithm~\ref{alg:mainAlg}.
  \item By (ii), we provide the first $\eps$-coreset for the problem of $\ell_z$-regression where $z \in (0,1)$, and the first $\eps$-coreset for certain outlier resistant functions. We also generalize existing works of coreset construction for the problems of logistic regression and SVM; see Section~\ref{sec:analysis}.

  \item Experimental results on real-world and synthetic datasets for common machine learning solvers (supported by our framework) of Scikit-learn library~\cite{scikit-learn}, assessing the practicability and efficacy of our algorithm.
  \item An open source code implementation of our algorithm, for reproducing our results and future research.
\end{enumerate}


\subsection{Novelty}

\paragraph{$f$-SVD factorization.}
In this work, we suggest a novel factorization technique of an input dataset with respect to a specific loss function $f$, we call it the $f$-SVD factorization. Roughly speaking, the heart of the $f$-SVD factorization lies in finding a diagonal matrix $D \in [0,\infty)^{d \times d}$ and an orthogonal matrix $V \in \REAL^{d \times d}$ such that the total loss $\sum_{p \in P} f(p,x)$ for any query $x \in \REAL^d$ can be bounded from above by $\sqrt{d} \norm{DV^Tx}_2$ and from below by $\norm{DV^Tx}_2$.
In some sense, this can be thought of as a $\left(1-1/\sqrt{d}\right)$-coreset (or a sketch) since it approximates the total loss for any query in $\REAL^d$ up to a multiplicative factor of $\left(1-1/\sqrt{d}\right)$.
In order to obtain such factorization, we forge a link between the L\"{o}wner ellipsoid~\cite{john2014extremum} and the properties of near-convex functions; see Fig.~\ref{fig:into} for a detailed illustrative explanation, Definition~\ref{def:svdWRTF} and Lemma~\ref{lem:LownerToF} for the formal details.

Note that \emph{SVD} factorization is a special case of $f$-SVD due to that fact that \emph{SVD} handles functions of the form $\sqrt{\sum_{p \in P} \abs{p^Tx}^2}$ and attempts to achieve the same purpose. The $f$-SVD factorization is a generalization of the \emph{well-conditioned bases} of~\cite{dasgupta2009sampling}.

\paragraph{From $f$-SVD to sensitivity bounds.}
With the lower bound on the total loss that is guaranteed by the $f$-SVD, we show how to bound the sensitivity of each point in the dataset. On the other hand, the upper bound on the total loss provided by the $f$-SVD factorization, helps us in bounding the total sensitivity. Having this being said, we use the $f$-SVD factorization to suggest a sensitivity bounding framework for a set of points with respect to any generalized bi-Lipshcitz function $f\in \set{F}$; see Lemma~\ref{thm:sensitivityBound}.





\begin{figure}[b!]
\centering
\includegraphics[width=\textwidth, height=4cm]{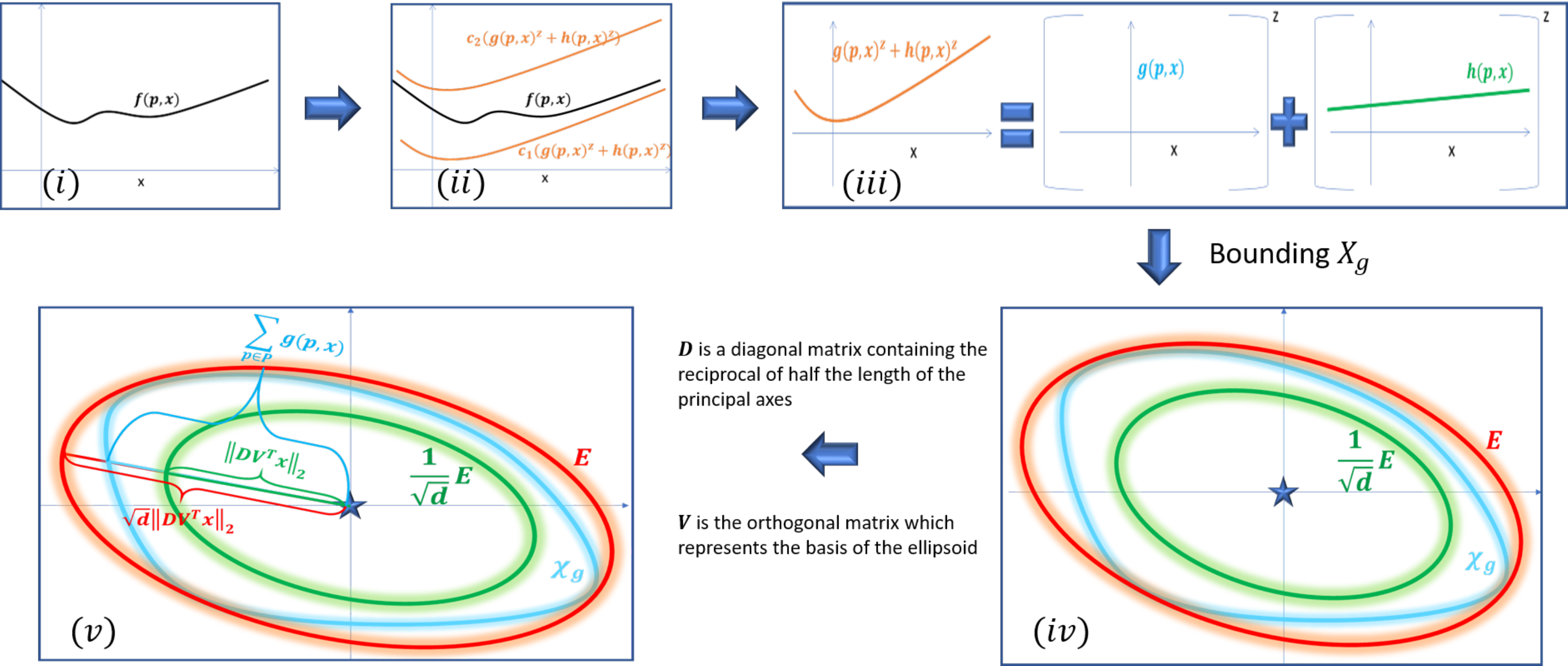}
\caption{\textbf{How to compute $f$-SVD}: (i) Given a set $P \subseteq \REAL^2$, and a function $f : P \times \REAL^2 \to [0, \infty)$, (ii) find a function which can bound $f(p, \cdot) \times \REAL^2 \to [0, \infty)$ from above and below for every $p \in P$, (iii) decompose this function into two functions $g$ and $h$ where for every $p,q \in P$ and $x \in \REAL^2$, $g(p, \cdot)$ is a convex function (e.g., $g(p,x) = |p^Tx|^4$), and $h(p,x) \approx h(q,x)$ (e.g., $h(p,x) = \norm{x}_1 + 10$), here $z = 1$. (iv) Since $g$ is convex, we find the L\"{o}wner ellipsoid $E$ which contains $\set{X}_g = \{ x \in \REAL^2 | \sum_{p \in P} g(p,x) \leq 1 \}$, and the dilated ellipsoid $1/\sqrt{d} E$ is inscribed in $\set{X}_g$.
Following this, we compute a diagonal matrix $D \in [0,\infty)^{2 \times 2}$ and an orthogonal matrix $V \in \REAL^{2 \times 2}$ such that $E = \{ x \in \REAL^2 | \norm{DV^Tx}_2 \leq 1 \}$. (v) By properties of the L\"{o}wener ellipsoid, we show that  the total loss of $g$ (cyan line) for any query $x \in \REAL^2$ is in the range  $[\norm{DV^Tx}_2, \sqrt{d} \norm{DV^Tx}_2]$ (green and red lines). When combined with the bounds on $f$, we obtain an upper bound on the sensitivity of each point in $P$ and on the total sensitivity.}
\label{fig:into}
\end{figure}

\section{Preliminaries}

\paragraph{Notations.} For integers $n,d \geq 2$, we denote by $0_d$ the origin of $\REAL^d$, and by $[n]$ the set $\br{1,\cdots,n}$. The set $\REAL^{n \times d}$ denotes the union over every $n \times d$ real matrix, and $I_d \in \REAL^{d \times d}$ denotes the identity matrix.
We say that a matrix $A \in \REAL^{d \times d}$ is \emph{orthogonal} if and only if $A^TA = AA^T = I_d$.
Finally, throughout the paper, vectors are addressed as column vectors.


In what follows, we provide formally the notion of $\eps$-coreset in our context.

\begin{Definition}[$\eps$-coreset]
\label{def:epsCore}
Let $P \subseteq \REAL^d$ be a set of $n$ points, $f : P \times \Q \to [0, \infty)$ be a near-convex function, and let $\eps \in (0,1)$. An $\eps$-coreset for $P$ with respect to $f$, is a pair $(S,v)$ where $S \subseteq P$, $v:S\to (0,\infty)$ is a weight function, such that for every $x \in \Q$,
$$\abs{1 -  \frac{\sum_{q \in S} v(q) f(q,x)}{\sum_{p \in P} f(p,x)}} \leq \eps.$$
\end{Definition}


The following theorem formally describes how to construct an $\eps$-coreset based on the sensitivity sampling framework.

\begin{theorem}[Restatement of Theorem 5.5 in~\cite{braverman2016new}]
\label{thm:coreset}
Let $P \subseteq \REAL^d$ be a set of $n$ points, and let $f : P \times \Q \to [0, \infty)$ be a loss function. For every $p \in P$ define the \emph{sensitivity} of $p$ as
\[
\sup_{x \in \Q} \frac{f(p,x)}{\sum_{q \in P} f(q,x)},
\]
where the sup is over every $x \in \Q$ such that the denominator is non-zero.
Let $s: P \to [0,1]$ be a function such that $s(p)$ is an upper bound on the sensitivity of $p$.
Let $t = \sum_{p \in P} s(p)$ and $d'$ be the~\emph{VC dimension} of the triplet $(P,f,\Q)$; see Definition~\ref{def:dimension}. Let $c \geq 1$ be a sufficiently large constant, $\varepsilon, \delta \in (0,1)$, and let $S$ be a random sample of $\abs{S} \geq \frac{ct}{\varepsilon^2}\left(d'\log{t}+\log{\frac{1}{\delta}}\right)$
i.i.d points from $P$, such that every $p \in P$ is sampled with probability $s(p)/t$. Let $v(p) = \frac{t}{s(p)\abs{S}}$ for every $p \in S$. Then, with probability at least $1-\delta$, $(S,v)$ is an $\varepsilon$-coreset for $P$ with respect to $f$.
\end{theorem}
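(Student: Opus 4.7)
The plan is to follow the standard sensitivity-sampling template: first normalize each sample's contribution so that it lies in a bounded interval and the estimator is unbiased, then upgrade a pointwise concentration inequality to a uniform bound over the query space $\Q$ by invoking a VC/pseudo-dimension guarantee.

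First I would fix a query $x \in \Q$ with $F(x) := \sum_{p \in P} f(p,x) > 0$ and define, for each sampled $q$,
\[
Z_q(x) \;:=\; \frac{v(q)\, f(q, x)}{F(x)} \;=\; \frac{t \cdot f(q,x)}{s(q)\cdot \lvert S\rvert \cdot F(x)}.
\]
By the defining property of the sensitivity upper bound $s(q)$, we have $0 \le Z_q(x) \le t/\lvert S\rvert$ uniformly in $q$ and in $x$. A one-line expectation computation, summing over $p \in P$ with sampling probabilities $s(p)/t$, gives $\mathbb{E}[Z_q(x)] = 1/\lvert S\rvert$, so $\sum_{q \in S} v(q) f(q,x)$ is an unbiased estimator of $F(x)$. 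Queries with $F(x) = 0$ are handled trivially, since then $f(q,x) = 0$ for every $q \in P \supseteq S$ and both sides of the coreset inequality vanish.

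Next, for a single fixed $x$, a standard Bernstein/Hoeffding bound applied to the $\lvert S\rvert$ i.i.d.\ variables $\{Z_q(x)\}$ yields, for an absolute constant $c' > 0$,
\[
\Pr\!\left[\,\left|\sum_{q \in S} Z_q(x) - 1\right| > \varepsilon\,\right] \;\le\; 2\exp\!\left(-\,c'\, \varepsilon^2\, \lvert S\rvert / t\right).
\]
I would then upgrade this pointwise bound to a uniform bound over all $x \in \Q$ by invoking an $\varepsilon$-sample theorem for range spaces of bounded pseudo-dimension: the induced function family $\{x \mapsto v(\cdot) f(\cdot, x)/F(x)\}_{x \in \Q}$ inherits pseudo-dimension at most $d'$ from the triplet $(P, f, \Q)$, and the sample size $\lvert S\rvert \ge \tfrac{ct}{\varepsilon^2}(d' \log t + \log(1/\delta))$ in the statement is exactly the threshold such theorems require for a uniform $(1 \pm \varepsilon)$-deviation with probability $1 - \delta$. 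Multiplying the uniform inequality through by $F(x)$ recovers the $\varepsilon$-coreset guarantee of Definition~\ref{def:epsCore}.

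The main obstacle is this last step: converting the single-query concentration bound into a uniform bound over the possibly infinite query space $\Q$. This is precisely where the VC/pseudo-dimension $d'$ and the $\log t$ overhead enter, and it is the technical core of the Feldman--Langberg and Braverman--Feldman--Lang sensitivity-sampling framework that the present statement restates. Since the theorem is cited verbatim from~\cite{braverman2016new}, the proof essentially reduces to checking that the normalization in Step~1 places the estimator in the form required by their uniform-deviation theorem, after which their result is applied as a black box.
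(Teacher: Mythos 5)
The paper does not prove this statement at all---it imports it verbatim as Theorem~5.5 of Braverman, Feldman and Lang~\cite{braverman2016new}---and your proposal ultimately does the same, reducing to that theorem as a black box after verifying unbiasedness and the range bound $Z_q(x)\le t/\lvert S\rvert$, so your treatment matches the paper's, with your preliminary sketch being the standard sensitivity-sampling outline that underlies the cited result. One minor technical remark: the pointwise tail $\exp\left(-c'\varepsilon^2\lvert S\rvert/t\right)$ requires the variance bound (Bernstein), since Hoeffding with range $t/\lvert S\rvert$ only yields $\exp\left(-2\varepsilon^2\lvert S\rvert/t^2\right)$; but since you invoke the cited theorem for the uniform-convergence step anyway, this does not affect the conclusion.
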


\section{Coreset for near-convex loss functions}
\label{sec:method}
For brevity purposes, proofs of the technical results have been omitted from this manuscript; we refer the reader to the supplementary material for the proofs. In addition, for simplicity of notation, we assume that the weight of each point in the input set is $1$, while in the supplementary material, we handle the general case where each point may have any nonnegative weight. We also discuss generalized versions of Definition~\ref{def:familyOfConvexFuncs} and Definition~\ref{def:svdWRTF}.

\subsection{Bounding the sensitivity}
The following provides the generalization of the well-conditioned basis of~\cite{dasgupta2009sampling}, which will be used to bound the sensitivities.

\begin{Definition}[$f$-SVD of $P$]
\label{def:svdWRTF}
Let $P \subseteq \REAL^d$ be a set of $n$ points, $f \in \set{F}$ be a near-convex loss function (see Definition~\ref{def:familyOfConvexFuncs}), and let $g,h,c_1,z$ be defined as in the context of Definition~\ref{def:familyOfConvexFuncs} with respect to $f$. Let $D \in [0, \infty)^{d \times d}$ be a diagonal matrix, and let $V \in \REAL^{d \times d}$ be an orthogonal matrix, such that for every $x \in \Q$,
\[
c_1 \term{\term{\norm{DV^Tx}_2}^z + \sum_{p \in P}  h(p,x)^z} \leq \sum_{p \in P}  f(p,x),
\]
and let $\alpha \in \Theta\left( \sqrt{d} \right)$ such that for every $x \in \Q$,
\[
\sum_{p \in P} g(p,x)^{\max\br{1, z}} \leq \term{\alpha \norm{DV^Tx}_2}^{\max\br{1, z}}.
\]
Define $U:P\to \Q$ such that $U(p) = \left( V D \right)^{-1} p$ for every $p\in P$. The tuple $\left( U,D,V\right)$ is the $f$-SVD of $P$.
\end{Definition}

\textbf{Note} that (i) such factorization exists for any set of points $P$ and any near-convex loss function $f : P \times \Q \to [0, \infty)$ satisfying Definition~\ref{def:familyOfConvexFuncs}, and (ii) the matrix $VD$ is invertible due to the fact that $D$ is of full rank which is a result of Property~\ref{prop:defLvl} of Definition~\ref{def:svdWRTF}. Both (i)-(ii) hold by using L\"{o}wner ellipsoid; see Fig.~\ref{fig:into} for intuitive explanation, and Lemma~\ref{lem:LownerToF} at the Supplementary Material for formal proof.

In what follows, we proceed to bound the sensitivity of each point and the total sensitivity, with respect to a loss function $f \in \set{F}$. This is by using the $f$-SVD of $P$.

\begin{restatable}{lemma}{sensitivityBound}
\label{thm:sensitivityBound}
Let $P \subseteq \REAL^d$ be a set of $n$ points, and let $f \in \set{F}$ be a near-convex loss function as in Definition~\ref{def:familyOfConvexFuncs}. Let $g,h,c_1,c_2,z$ be defined as in the context of Definition~\ref{def:familyOfConvexFuncs} with respect to $f$, $(U,D,V)$ be the $f$-SVD of $P$, and let $\alpha \in \Theta\term{\sqrt{d}}$ which satisfies the conditions in Definition~\ref{def:svdWRTF}. Suppose that there exists a set $\br{v_j}_{j=1}^d \subseteq \REAL^d$ of $d$ unit vectors and $c > 0$, such that for every unit vector $y \in \REAL^d$ and $p \in P$,
\[
g\term{p,(D V^T)^{-1}y}^z \leq c \sum_{j=1}^{d} g\term{p,(D V^T)^{-1}v_j}^z.
\]
Then, for every $p\in P$, the sensitivity of $p$ is bounded by
\[
s(p) \leq \frac{2c_2}{c_1 n} + \frac{cc_2}{c_1}\sum_{j=1}^d \term{g\left( p, \left( DV^T\right)^{-1} v_j\right)}^z,
\]
and the total sensitivity is bounded by
\[
\sum_{p \in P} s(p) \leq  \frac{2c_2}{c_1}+ \frac{c c_2}{c_1}\maxArgs{n^{1-z}, 1}\alpha^z d.
\]
\end{restatable}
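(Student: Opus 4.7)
My approach is to unpack the definition of sensitivity and sandwich numerator and denominator using the two-sided bound from property~\ref{prop:reductionToFam} of Definition~\ref{def:familyOfConvexFuncs} together with the lower bound guaranteed by the $f$-SVD (Definition~\ref{def:svdWRTF}). First I would upper-bound $f(p,x)\le c_2(g(p,x)^z+h(p,x)^z)$ in the numerator and lower-bound $\sum_{q\in P}f(q,x)\ge c_1\bigl(\|DV^Tx\|_2^z+\sum_{q\in P}h(q,x)^z\bigr)$ in the denominator. Applying the elementary inequality $\frac{a+b}{c+d}\le \frac{a}{c}+\frac{b}{d}$ for positive reals (which reduces to $0\le ad^2+bc^2$) then splits the ratio into a ``$g$-part'' $\tfrac{c_2}{c_1}\cdot\tfrac{g(p,x)^z}{\|DV^Tx\|_2^z}$ and an ``$h$-part'' $\tfrac{c_2}{c_1}\cdot\tfrac{h(p,x)^z}{\sum_q h(q,x)^z}$.

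The $h$-part is bounded by $\tfrac{2c_2}{c_1 n}$ immediately from property~\ref{prop:h_trait}. For the $g$-part, the key step is the positive homogeneity of $g$ (property~\ref{prop:def}): writing $y=DV^Tx$ and $\hat y=y/\|y\|_2$, so that $x=\|y\|_2\cdot(DV^T)^{-1}\hat y$ (valid because $VD$ is invertible, by the remark after Definition~\ref{def:svdWRTF}), the scaling identity $g(p,bx')=b\cdot g(p,x')$ for $b>0$ yields $g(p,x)^z=\|DV^Tx\|_2^z\cdot g(p,(DV^T)^{-1}\hat y)^z$, so the factor $\|DV^Tx\|_2^z$ cancels. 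Since $\hat y$ is a unit vector, the hypothesis on $\{v_j\}_{j=1}^d$ bounds the result by $c\sum_{j=1}^d g(p,(DV^T)^{-1}v_j)^z$. Adding the two parts and taking the supremum over $x$ gives the per-point sensitivity bound.

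For the total sensitivity I would sum the per-point bound over $p\in P$, reducing the problem to bounding $\sum_{p\in P}g(p,(DV^T)^{-1}v_j)^z$ for each fixed $j$. Setting $x_j=(DV^T)^{-1}v_j$ gives $\|DV^Tx_j\|_2=\|v_j\|_2=1$, so the second condition in the $f$-SVD yields $\sum_{p\in P}g(p,x_j)^{\max\{1,z\}}\le\alpha^{\max\{1,z\}}$. When $z\ge 1$ this directly gives the bound $\alpha^z$; when $0<z<1$ the power-mean inequality $\sum_p a_p^z\le n^{1-z}\bigl(\sum_p a_p\bigr)^z$ for nonnegative $a_p$ converts the bound into $n^{1-z}\alpha^z$. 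Unifying both cases yields $\sum_p g(p,x_j)^z\le\max\{n^{1-z},1\}\alpha^z$, and multiplying by the $d$ unit vectors and $\tfrac{cc_2}{c_1}$, then adding the $\tfrac{2c_2}{c_1}$ contribution from the $h$-terms, finishes the total-sensitivity bound.

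\paragraph{Main obstacle.} The most delicate step is the homogeneity reduction in the $g$-part, because property~\ref{prop:def} only guarantees the scaling identity for $b>0$; the edge case $\|DV^Tx\|_2=0$ (equivalently $x=0_d$, since $DV^T$ is invertible) must be excluded from the sup, where both $g(p,x)$ and the denominator vanish simultaneously. Beyond this, the argument is essentially bookkeeping, with the power-mean inequality in the $0<z<1$ regime being the only non-trivial tool invoked beyond the hypotheses of the lemma.
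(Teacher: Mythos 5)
Your proposal is correct and follows essentially the same route as the paper's proof: both split the ratio via the elementary inequality $\frac{a+b}{c+d}\le\frac{a}{c}+\frac{b}{d}$, bound the $h$-part by property (iii), reduce the $g$-part via homogeneity to a unit vector and invoke the $\{v_j\}$ hypothesis, and bound the total sensitivity using the $f$-SVD upper bound together with the H\"{o}lder (power-mean) inequality in the $z\in(0,1)$ regime. Your explicit treatment of the $\|DV^Tx\|_2=0$ edge case is a minor point the paper glosses over, but it changes nothing of substance.
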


\subsection{The coreset construction}

Algorithm~\ref{alg:mainAlg} receives as input, a set $P$ of $n$ points in $\REAL^d$, a loss function $f \in \set{F}$ (see Definition~\ref{def:familyOfConvexFuncs}), and a sample size $m > 0$. As Theorem~\ref{thm:runTime} states, if the sample size $m$ is sufficiently large, then Algorithm~\ref{alg:mainAlg} outputs a pair $(S,v)$ that is with high probability, an $\eps$-coreset for $P$ with respect to $f$.

First, we set $d^\prime$ to be VC dimension of the triplet $\term{P, f, \Q}$; See Definition~\ref{def:dimension}.
The crux of our algorithm lies in generating the importance sampling distribution via efficiently computing upper bound on the sensitivity of each point (Lines~\ref{algLine:4}--\ref{alg2:L5}).
To do so we, we compute the $f$-SVD of $P$ at Lines~\ref{algLine:v1vd}--\ref{algLine:1}, and we use it to bound the sensitivity of each $p\in P$ as stated in Lemma~\ref{thm:sensitivityBound}; see Line~\ref{algLine:5}. Now we have all the needed ingredients to use Theorem~\ref{thm:coreset} in order to obtain an $\eps$-coreset, i.e., we sample i.i.d $m$ points from $P$ based on their sensitivity bounds (see Line~\ref{algLine:donesamle}), and assign a new weight for every sampled point at Line~\ref{algLine:10}.  





\begin{algorithm}
\caption{$\coreset(P,f,m)$\label{alg:mainAlg}}
{\begin{tabbing}
\textbf{Input:} \quad\quad\= A set $P \subseteq \REAL^{d}$ of $n$ points, a near-convex loss function\\\>$f:P\times \REAL^d \to [0,\infty)$, and a sample size $m \geq 1$.\\
\textbf{Output:} \>A pair $(S,v)$ that satisfies Theorem~\ref{thm:runTime}.
\end{tabbing}}

Set $d' :=$ the VC dimension of triplet $\term{P, f, \Q}$ \label{algLine:3} \tcp{See Definition~\ref{def:dimension}}

Set $g$ and $\br{z, c_1,c_2}$ to be a function and a set of real positive numbers respectively, satisfying Property~\ref{prop:reductionToFam} and~\ref{prop:def} of Definition~\ref{def:familyOfConvexFuncs} with respect to $f$\\


Set $c > 0 $ and $\br{v_1, \cdots, v_d}$ to be  positive scalar and a a set of $d$ unit vectors in $\REAL^d$ respectively satisfying Lemma~\ref{thm:sensitivityBound}\label{algLine:v1vd}\\

Set $(U,D,V)$ to be the $f$-SVD of $(P,w)$\label{algLine:1} \tcp{See Definition~\ref{def:familyOfConvexFuncs}}
\For{every $p \in P$ \label{algLine:4}}
{
Set $s(p):= \frac{cc_2}{c_1}\sum_{j=1}^d g\term{ p, \term{ DV^T}^{-1}v_j}^z + \frac{2c_2}{c_1 n}$ \\ \tcp{ the bound of the sensitivity of $p$ as in Lemma~\ref{thm:sensitivityBound}} \label{algLine:5}
}
Set $t := \sum_{p \in P} s(p)$ \label{algLine:6} \\
Set $\tilde{c} \geq 1$ to be a sufficiently large constant \label{algLine:7} \tcp{Can be determined from Theorem~\ref{thm:runTime}}

Pick an i.i.d sample $S$ of
$m$ points from $P$, where each $p \in P$ is sampled with probability $\frac{s(p)}{t}$.  \label{algLine:donesamle}\\

set $v: \REAL^d \to [0,\infty]$ to be a weight function such that for every $q \in S$, $v(q)=\frac{t}{s(q)\cdot m}$. \label{algLine:10} \\
\Return $(S,v)$ \label{algLine:11}

\end{algorithm}

\begin{restatable}{theorem}{runTime}
\label{thm:runTime}
Let $P \subseteq \REAL^d$ be set of $n$ points, and $f \in \set{F}$ be a near-convex function. Let $R,r > 0$ be a pair of positive scalars as in Definition~\ref{def:familyOfConvexFuncs} with respect to $f$, and let $c,c_1,c_2,\alpha$ be defined as in the context of Lemma~\ref{thm:sensitivityBound} with respect to $f$. Let $\eps, \delta \in (0,1)$ be an error parameter and a probability of failure respectively, and let $d^\prime$ be the VC dimension of the triplet $\term{P, f, \REAL^d}$. Let $t= \frac{2c_2}{c_1}+ \frac{c c_2}{c_1}\maxArgs{n^{1-z}, 1}\alpha^z d $, $m \in \bigO{ \frac{t}{\eps^2} \term{d^\prime \log{\term{t}} + \log{\term{\frac{1}{\delta}}}}}$, and let $(S,v)$ be the output of a call to $\coreset(P,f,m)$. Then,
\begin{enumerate}[label=(\roman*)]
\item with probability at least $1-\delta$, $(S,v)$ is an $\eps$-coreset of size $m$ for $P$ with respect to $f$; see Definition~\ref{def:epsCore}.
\item The overall time for constructing $(S,v)$ is bounded by $\bigO{T(n,d)d^4 \log{\left( \frac{R}{r}\right)}}$, where $T(n,d)$ is a bound on the time it takes to compute a gradient of $\sum_{p \in P} f(p, x)$ with respect to any query $x \in \Q$.
\end{enumerate}
\end{restatable}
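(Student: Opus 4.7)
The plan is to split the argument into correctness (part (i)) and runtime (part (ii)); part (i) is essentially a direct application of Theorem~\ref{thm:coreset} once the quantities computed by Algorithm~\ref{alg:mainAlg} are identified with the hypotheses of that theorem, while part (ii) reduces to bounding the cost of the $f$-SVD step.

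For (i), I would first observe that Lines~\ref{algLine:4}--\ref{algLine:5} compute exactly the quantity $s(p)=\frac{2c_2}{c_1 n}+\frac{cc_2}{c_1}\sum_{j=1}^{d} g(p,(DV^T)^{-1}v_j)^z$, which by Lemma~\ref{thm:sensitivityBound} is a valid upper bound on the sensitivity of $p$. Summing over $p\in P$ and applying the second inequality of that lemma yields $t=\sum_{p\in P} s(p)\le \frac{2c_2}{c_1}+\frac{cc_2}{c_1}\max\{n^{1-z},1\}\alpha^z d$, matching the $t$ in the statement. With $d'$ equal to the VC dimension of $(P,f,\Q)$, the sample size $m \in \bigO{\frac{t}{\eps^2}(d'\log t+\log(1/\delta))}$ satisfies the hypothesis of Theorem~\ref{thm:coreset}. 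Since Lines~\ref{algLine:donesamle}--\ref{algLine:10} sample $S$ i.i.d.\ with probabilities $s(p)/t$ and set $v(q)=t/(s(q)m)$ exactly as that theorem requires, its conclusion delivers an $\eps$-coreset with probability at least $1-\delta$, proving (i).

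For (ii), I would argue that all operations outside Lines~\ref{algLine:v1vd}--\ref{algLine:1} cost only $\bigO{nd^2}$: once $D$ and $V$ are known, evaluating each $s(p)$ requires applying the fixed matrix $(DV^T)^{-1}$ to the $d$ unit vectors $v_1,\ldots,v_d$ and evaluating $g$ at each of the $n$ points, and the sampling and reweighting steps are linear in $n$. The dominant cost is therefore the $f$-SVD computation. By Lemma~\ref{lem:LownerToF} in the supplementary material, this reduces to computing the L\"owner ellipsoid of the convex centrally-symmetric body $\mathcal{X}_g$ and reading off its axes as $D$ and $V$. Because Property~\ref{prop:defLvl} sandwiches $\mathcal{X}_g$ between $B(0_d,r)$ and $B(0_d,R)$, a standard ellipsoid-style minimum-volume enclosing-ellipsoid algorithm terminates after $\bigO{d^4\log(R/r)}$ separation-oracle calls. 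A membership/separation oracle for $\mathcal{X}_g$ at a candidate point $x$ reduces to evaluating and differentiating $\sum_{p\in P} g(p,x)^{\max\{1,z\}}$, which by Properties~\ref{prop:reductionToFam} and~\ref{prop:h_trait} is equivalent, up to the fixed constants $c_1,c_2$, to a single gradient query of $\sum_{p\in P} f(p,x)$, costing $T(n,d)$ time. Multiplying oracle complexity by per-oracle cost gives the claimed $\bigO{T(n,d)d^4\log(R/r)}$ bound.

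The main obstacle will be the second step above: reducing a separation oracle for $\mathcal{X}_g$ to a gradient oracle for $\sum_{p\in P} f(p,x)$. Concretely, one must exploit Property~\ref{prop:h_trait} to absorb the $h$-component in the near-convex sandwich into a uniform additive term that does not obstruct membership testing, and use the positive homogeneity in Property~\ref{prop:def} to detect, from a single gradient query, where a ray from $0_d$ exits $\mathcal{X}_g$. Once these two reductions are established, the $\bigO{d^4\log(R/r)}$ oracle-call count of the ellipsoid method is standard and the runtime bound follows immediately.
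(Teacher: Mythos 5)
Your overall route matches the paper's own (terse) proof: part~(i) is exactly a plug-in of the sensitivity bounds from Lemma~\ref{thm:sensitivityBound} into Theorem~\ref{thm:coreset}, and part~(ii) is dominated by computing the $f$-SVD, which the paper also obtains by computing the L\"owner ellipsoid of $\mathcal{X}_g$ with a separation-oracle (ellipsoid-style) method, citing~\cite{clarkson2005subgradient} and~\cite{lovasz1986algorithmic}.

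Where your write-up diverges is the justification that a single oracle call costs $T(n,d)$. You argue that a separation oracle for $\mathcal{X}_g$, which requires a (sub)gradient of $\sum_{p\in P} g(p,x)^{\max\{1,z\}}$, is ``equivalent, up to the fixed constants $c_1,c_2$, to a single gradient query of $\sum_{p\in P} f(p,x)$'' by Properties~\ref{prop:reductionToFam} and~\ref{prop:h_trait}. That step does not work: the sandwich $c_1(g^z+h^z)\le f\le c_2(g^z+h^z)$ bounds \emph{function values}, and two functions sandwiched by multiplicative constants can still have wildly different gradients at a given $x$. Function-value bounds give at best a membership-oracle comparison, not a separation-hyperplane comparison, and the positive homogeneity in Property~\ref{prop:def} lets you locate where a ray exits $\mathcal{X}_g$ by binary search on $\|x\|$ but again only at the cost of many \emph{membership} queries, not by transferring a gradient of $f$. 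The paper does not attempt this reduction at all; it simply uses the (sub)gradient of $g$ directly as the separating hyperplane and implicitly assumes that computing $\nabla\sum_p g(p,\cdot)^{\max\{1,z\}}$ is no more expensive than $T(n,d)$, which is true in its example applications (where $g$ is the simpler part of $f$) but is not derived from the definitions. So the ``main obstacle'' you flag is real, but the specific route you sketch for closing it (gradient equivalence from the sandwich inequalities) is a dead end; the honest statement is that $T(n,d)$ should be read as a bound on the time to compute a subgradient of $\sum_p g(p,\cdot)^{\max\{1,z\}}$, matching the paper's actual (if unannounced) assumption.
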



\textbf{Poly-logarithmic coreset size.} We provide an analysis that shows how to obtain a coreset of size poly-logarithmic in the input size $n$; see Algorithm~\ref{alg:streamAlg} and Lemma~\ref{thm:polylog-coreset-merge-reduce} at the Supplementary Material.

\section{Applications}
\label{sec:analysis}
In what follows, we provide various applications for our framework, .e.g, SVM, Logistic Regression, $\ell_z$ for $z \in (0,1)$, outlier resistant functions (similar to Tukey in behavior). For additional problems supported by our framework, we refer the reader to Section~\ref{sec:easy} at the Supplementary Material.


\begin{table*}[h!]
    \caption{\textbf{Results}: The table below presents the coreset size and the time needed for constructing it with respect to a specific set of problems, where the input is a set of $n$ points in $\REAL^d$ denoted by $P$. In the table, $\boldsymbol{\mathrm{nnz}}\term{P}$ denotes the total number of nonzero entries in the set $P$, $\boldsymbol{\Tilde{\W}}$ denotes the ratio between the number of positive and negative labeled points (in practice, it's a constant number), $\boldsymbol{\C} = \sqrt{n}$ is the given regularization parameter for the problems, $\boldsymbol{\gamma} \geq 1$ is defined as in Corollary~\ref{thm:restrictedLp}, $\boldsymbol{\eps}$ is the error parameter, and $\boldsymbol{\delta}$ is the probability of failure.}
    \begin{adjustbox}{width=\textwidth}
    \centering
    \begin{tabular}{|c|c|c|}
    \hline
        Problem type &  Coreset's size & Construction time\footnotemark \\ \hline
        Logistic regression & $\bigO{\frac{d\sqrt{n}}{\eps^2} \term{d \log{\term{ d \sqrt{n}}} + \log{\term{\frac{1}{\delta}}}}}$ & $\bigO{nd^2}$ \\ \hline
        $\ell_z$-Regression for $z \in (0,1)$ & $\bigO{\frac{n^{1-z}d^{\frac{z}{2} + 1}}{\eps^2} \term{d \log{\term{n^{1-z}d^{\frac{z}{2} + 1}}}} + \log{\term{\frac{1}{\delta}}}}$ & $\bigO{\nnz{P}\log{n} + d^{\bigO{1}}}$ \\ \hline
        SVM & $\bigO{\frac{d\sqrt{n}+ \frac{\Tilde{\W}^2+1}{\Tilde{\W}}}{\eps^2} \term{d \log{\term{d\sqrt{n} + \frac{\Tilde{\W}^2+1}{\Tilde{\W}}}} + \log{\term{\frac{1}{\delta}}}}}$ & $\bigO{nd^2}$ \\ \hline
        Restricted $\ell_z$-regression & $\bigO{\frac{\gamma d^{2 + \abs{\frac{1}{2} - \frac{1}{z}}}}{\eps^2} \term{d \log{\term{\gamma d^{2 + \abs{\frac{1}{2} - \frac{1}{z}}}}} + \log{\term{\frac{1}{\delta}}}}}$ & $\bigO{\nnz{P}\log{n} + d^{\bigO{1}}}$ \\ \hline
    \end{tabular}
    \end{adjustbox}
    \label{tab:my_label}

\end{table*}

\footnotetext[1]{Problems which are reduced to $\ell_z$-regression problems for any $z \geq 1$, are easier to be dealt with in term of coreset construction time due to the existence of randomized algoritm of computing the L\"{o}wner ellipsoid by \cite{clarkson2017low}; see Section~\ref{sec:sup_experimental} for detailed description.}

\begin{restatable}[Logistic Regression]{corollary}{logisticReg}
\label{thm:logisticReg}
Let $P \subseteq \REAL^d$ be a set of $n$ points such that for every $p \in P$, $\norm{p}_2 \leq 1$, $y : P \to \br{-1, 1}$ be a labeling function, $\C \geq 1$ be a regularization parameter such that for every $p \in P$, $x \in \REAL^d$ and $b \in \REAL$,
$$
\flog\term{p,\begin{bmatrix} x \\ b\end{bmatrix}}= \frac{1}{\C}\ln{\left( 1+e^{p^Tx + y(p) \cdot b}\right)} + \frac{1}{2n}\norm{x}_2^2.$$

Let $\eps, \delta \in (0,1)$ be an error parameter and a probability of failure respectively, $m \in \bigO{\frac{dn}{\C \eps^2} \term{d \log{\term{\frac{dn}{\C}}} + \log{\term{\frac{1}{\delta}}}}}$, and let $(S,v)$ be the output of a call to $\coreset\term{P,\flog, m}$. Then, with probability at least $1 - \delta$, $(S,v)$ is an $\eps$-coreset (of size $m$) for $P$ with respect to $\flog$.
\end{restatable}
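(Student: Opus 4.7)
The plan is to verify that $\flog\in\set{F}$ in the sense of Definition~\ref{def:familyOfConvexFuncs} and then invoke Theorem~\ref{thm:runTime}; the precise bound on $m$ in the statement is a direct substitution into the total-sensitivity formula of that theorem, once the governing constants $c_1,c_2,c,z,\alpha$ and the VC dimension $d'$ are pinned down.

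The guiding inequality for the decomposition is the envelope $\max\br{0,u}\le \ln\term{1+e^u}\le \max\br{0,u}+\ln 2$, which separates the logistic loss into a positively homogeneous sublinear piece and a bounded residue. I would set $h\term{p,[x;\tilde b]}=\tfrac{\ln 2}{\C}$ (independent of $p$) so that Property~\ref{prop:h_trait} is automatic with ratio $1/n$. For $g$ I would take a sum of a sublinear majorant of the logistic term and a norm term capturing the regularizer $\tfrac{1}{2n}\norm{x}_2^2$, choosing $z$ through the generalized Definition~\ref{def:familyOfConvexFuncs_Gen} from the Supplementary Material so that $g^z$ matches the non-constant part of $\flog$ up to universal constants $c_1,c_2=\Theta(1)$. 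Property~\ref{prop:def} is then immediate from the sublinear construction of $g$, and Property~\ref{prop:defLvl} follows after a symmetrization step $\max\br{0,\cdot}\mapsto\abs{\cdot}$ at the cost of a constant factor; the radii $r,R$ are polynomial in $n$ and $\C$ and enter the final statement only through the $\log(R/r)$ factor in the runtime, not the coreset size.

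With $\flog\in\set{F}$ established, the remaining ingredients for Theorem~\ref{thm:runTime} are: $\alpha\in\Theta\term{\sqrt{d}}$ from the L\"{o}wner ellipsoid step underlying Lemma~\ref{lem:LownerToF}; the constant $c=O(1)$ from Lemma~\ref{thm:sensitivityBound}, obtained by taking $\br{v_j}_{j=1}^d$ to be the standard basis in the transformed coordinates and bounding $g\term{p,(DV^T)^{-1}y}^z$ componentwise via the triangle inequality; and VC dimension $d'=O(d)$, since sub-level sets of $\flog\term{p,\cdot}$ are intersections of a half-space in $[x;\tilde b]\in\REAL^{d+1}$ with an $\ell_2$ ball, a family of bounded complexity. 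Plugging these into $t=\tfrac{2c_2}{c_1}+\tfrac{cc_2}{c_1}\maxArgs{n^{1-z},1}\alpha^z d$ and tracking how the $1/\C$ scale of the logistic piece versus the $1/n$ scale of the regularizer appears in the ratio $c_2/c_1$ yields $t=O\term{dn/\C}$, from which the claimed bound on $m$ drops out.

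The main obstacle will be the decomposition itself: the logistic term grows linearly in the query while the regularizer grows quadratically, so no single exponent $z$ in the original Definition~\ref{def:familyOfConvexFuncs} can match both tightly. I expect to sidestep this using the generalized definition in the Supplementary Material, and the careful bookkeeping of how the mismatched scales $1/\C$ and $1/n$ combine (leveraging the hypothesis $\C\ge 1$) is the delicate step that produces the $n/\C$ factor, rather than a worse polynomial in $n$, in the final total sensitivity.
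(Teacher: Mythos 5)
Your high-level skeleton (show $\flog\in\set{F}$, feed Lemma~\ref{thm:sensitivityBound} into Theorem~\ref{thm:runTime}) matches the paper, but the decomposition you propose does not work, and the difficulty you flag at the end is not actually sidestepped by Definition~\ref{def:familyOfConvexFuncs_Gen}. First, a $g$ built as ``a sublinear majorant of the logistic term plus the regularizer'' mixes a $1$-homogeneous piece with a $2$-homogeneous piece; the generalized definition still demands $c_3 b\, g(p,x)\le g(p,bx)\le c_4 b\, g(p,x)$ for \emph{all} $b>0$ with fixed constants, and for $g=\term{A t+Bt^2}^{1/z}$-type behaviour no choice of $z$, $c_3$, $c_4$ achieves this (compare $t\to 0$ and $t\to\infty$), so Property~(ii) fails. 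Second, the symmetrization $\maxArgs{0,\cdot}\mapsto\abs{\cdot}$ is not a constant-factor move pointwise: with your constant $h=\ln 2/\C$, take $x=0$ and $y(p)b\to-\infty$; then $\flog\to 0$ while $\abs{p^Tx+y(p)b}\to\infty$, so the lower bound in Property~(i) is violated. Third, your bookkeeping of constants is inconsistent: if $c_1,c_2=\Theta(1)$ then the total sensitivity would be $O(d)$, not $O(dn/\C)$; in the paper the factor $n/\C$ comes precisely from $c_1=\Theta(\C/n)$, not from a later accounting step.

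The idea you are missing is that the paper bounds the logistic term \emph{quadratically} rather than linearly, so that it shares the regularizer's homogeneity: Claim~\ref{clm:Logistic_upper_1} gives $\ln\term{1+e^{u+b}}\le 2u^2+2\ln\term{1+e^b}$, and the matching lower bound (Lemma~\ref{lem:boundingLog2}(ii)) does \emph{not} come from the logistic term at all but from the regularizer via $\norm{p}_2\le 1\Rightarrow (p^Tx)^2\le\norm{x}_2^2$, together with a Lambert-$W$ argument showing $\ln\term{1+e^{-\norm{x}_2+b}}+\norm{x}_2^2\ge\tfrac12\ln\term{1+e^b}$. This yields $z=2$, $g\term{p,(x\mid b)}=\tfrac{1}{\sqrt{\C}}\abs{p^Tx}$, and $h\term{p,(x\mid b)}=\maxArgs{\sqrt{\tfrac1\C\ln\term{1+e^b}},\sqrt{\tfrac{1}{2n}\norm{x}_2^2}}$, with $c_2=4$ and $c_1=\C/(8n)$ — whence $t=O(dn/\C)$ — and the bias term $y(p)b$ is handled by restricting the denominator to the points with the same label (the sets $P_{\pm}$) so that $h$ is point-independent there and Property~(iii) holds via Claim~\ref{clm:maxProp}. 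Without the quadratic upper bound and the regularizer-driven lower bound, your envelope-based construction cannot be repaired within the framework, so the proposal has a genuine gap rather than being an alternative route.
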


%

\begin{restatable}[$\ell_z$-Regression where $z \in (0,1)$]{corollary}{lpRegressionNC}
\label{thm:lpRegressionNC}
Let $P\subseteq \REAL^d$ be a set of $n$ points, $z\in (0,1)$ and let $\fnclz:P\times \REAL^d$ be a loss function such that for every $x \in \REAL^d$, and $p \in P$,
$$\fnclz(p,x) = \abs{p^Tx}^z.$$

Let $\eps, \delta \in (0,1)$, $m \in\bigO{\frac{n^{1-z}d^{\frac{z}{2} + 1}}{\eps^2} \term{d \log{\term{n^{1-z}d^{\frac{z}{2} + 1}}} + \log{\term{\frac{1}{\delta}}}}}$, and let $(S,v)$ be the output of a call to $\coreset\term{P,\fnclz, m}$. Then, with probability at least $1 - \delta$, $(S,v)$ is an $\eps$-coreset (of size $m$) for $P$ with respect to $\fnclz$.


\end{restatable}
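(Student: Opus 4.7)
The plan is to invoke Theorem~\ref{thm:runTime} with $f = \fnclz$, which reduces the corollary to (a) verifying that $\fnclz \in \set{F}$, (b) identifying the constants $c_1, c_2, \alpha, c$ and the VC dimension $d'$, and (c) plugging these into the generic bound. I will take the decomposition $g(p,x) = \abs{p^T x}$, $h \equiv 0$, and exponent $z$ as given. Property~\ref{prop:reductionToFam} of Definition~\ref{def:familyOfConvexFuncs} then holds with $c_1 = c_2 = 1$; Property~\ref{prop:def} is immediate from $\abs{p^T(bx)} = b\abs{p^T x}$; Property~\ref{prop:h_trait} is vacuous with $h \equiv 0$ (the ratio being interpreted as $0$ whenever the denominator vanishes); and for Property~\ref{prop:defLvl}, since $z<1$ gives $\maxArgs{1,z}=1$, the level set $\set{X}_g = \br{x : \sum_{p \in P} \abs{p^T x} \leq 1}$ is the unit ball of a centrally symmetric seminorm, which (after restricting to the span of $P$ as a standard reduction) is a bounded symmetric polytope with nonempty interior, yielding the requisite $R, r > 0$.

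Next, I invoke Lemma~\ref{thm:sensitivityBound} with the standard basis $\br{v_j}_{j=1}^d = \br{e_j}_{j=1}^d$. Writing $u := U(p) = D^{-1}V^T p$, one has $g\term{p,(DV^T)^{-1}y}^z = \abs{u^T y}^z$ for every $y \in \REAL^d$. For any unit $y$ and $z \in (0,1)$,
\[
\abs{u^T y}^z \;\leq\; \norm{u}_2^z \;\leq\; \norm{u}_z^z \;=\; \sum_{j=1}^d \abs{u_j}^z \;=\; \sum_{j=1}^d g\term{p,(DV^T)^{-1} e_j}^z,
\]
where the second inequality uses monotonicity of $\ell_q$-norms ($\norm{u}_z \geq \norm{u}_2$ for $z \leq 2$). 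Hence Lemma~\ref{thm:sensitivityBound} applies with $c = 1$, and by Definition~\ref{def:svdWRTF} (guaranteed to exist via the L\"{o}wner-ellipsoid construction in Lemma~\ref{lem:LownerToF}) we may take $\alpha \in \Theta(\sqrt{d})$.

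Substituting $c_1 = c_2 = c = 1$ and $\alpha = \Theta(\sqrt{d})$ into Lemma~\ref{thm:sensitivityBound} and noting that $\maxArgs{n^{1-z},1} = n^{1-z}$ for $z \in (0,1)$, the total sensitivity is bounded by
\[
t \;\in\; \bigO{n^{1-z} \alpha^z d} \;=\; \bigO{n^{1-z} d^{z/2+1}}.
\]
The VC dimension $d'$ of the triplet $(P, \fnclz, \REAL^d)$ is $\bigO{d}$: the sublevel sets $\br{x : \abs{p^Tx}^z \leq r} = \br{x : -r^{1/z} \leq p^T x \leq r^{1/z}}$ are slabs, whose range space has VC dimension linear in $d$. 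Plugging $t$ and $d'$ into Theorem~\ref{thm:runTime}(i), the stated sample size $m \in \bigO{\tfrac{n^{1-z} d^{z/2+1}}{\eps^2}\term{d \log\term{n^{1-z} d^{z/2+1}} + \log\tfrac{1}{\delta}}}$ is exactly what the theorem requires to yield, with probability $\geq 1-\delta$, an $\eps$-coreset of size $m$.

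The main technical obstacle is verifying Property~\ref{prop:defLvl} robustly (when $P$ fails to span $\REAL^d$, the standard workaround restricts the ambient space to $\mathrm{span}(P)$ and extends the coreset trivially outside) and nailing down $R/r$ to match the claimed $\bigO{\nnz{P}\log n + d^{\bigO{1}}}$ construction time in Table~\ref{tab:my_label}; the latter requires replacing the deterministic $f$-SVD of Algorithm~\ref{alg:mainAlg} by the randomized low-rank L\"{o}wner computation of~\cite{clarkson2017low} mentioned in the footnote, which applies here because for $z < 1$ the level set is a polytope defined by $\ell_1$-type constraints and is amenable to fast ellipsoid approximation.
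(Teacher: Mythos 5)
Your proposal is correct and follows essentially the same route as the paper: the same decomposition $g(p,x)=\abs{p^Tx}$, $h\equiv 0$ with $c_1=c_2=c=1$, the standard basis $\br{e_j}_{j=1}^d$ together with the norm inequality $\abs{U(p)^Ty}^z\leq\norm{U(p)}_2^z\leq\norm{U(p)}_z^z$ to satisfy the hypothesis of Lemma~\ref{thm:sensitivityBound}, the resulting total sensitivity $t\in\bigO{n^{1-z}d^{z/2+1}}$ via $\alpha\in\Theta(\sqrt{d})$, and then Theorem~\ref{thm:runTime} (the paper packages the first part as Lemma~\ref{lem:sensBoundLZNCReg}). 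Your added remarks on the VC dimension and the $\mathrm{span}(P)$ degeneracy are harmless refinements of points the paper leaves implicit.
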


We no show how our framework can be used to compute an $\eps$-coreset for some query spaces where the involved loss functions are not from the family $\set{F}$. The coreset construction algorithms are hidden in the constructive proofs of the following corollaries.

\begin{restatable}[Support Vector Machines]{corollary}{SVM}
\label{lem:SVM}
Let $P \subseteq \REAL^d$ be a set of $n$ points such that for every $p \in P$, $\norm{p} \leq 1$. Let $y : P \to \br{1,-1}$ be a labelling function, $\C \geq 1$ be a regularization parameter such that for every $p \in P$, $x \in \REAL^d$, and $b \in \REAL$,
$$\fsvm\term{p,\begin{bmatrix} x \\ b \end{bmatrix}} = \lambda \max\br{0, 1-\left( p^Tx + y(p) \cdot b \right)} + \frac{1}{2n}\norm{x}_2^2.$$ 
Let $P_+ = \br{p \middle| p \in P, y(p) = 1}$, $P_- = P \setminus P_+$, $\Tilde{\W} = \frac{\abs{P_+}}{\abs{P_{-}}}$. 

Then, there exists an algorithm that gets the set $P$ as an input, and returns a pair $(S,v)$, such that (i) with probability at least $1-\delta$, $(S,v)$ is an $\eps$-coreset for $P$ with respect to $\fsvm$ , and (ii) the size of the coreset is $\abs{S} \in \bigO{\frac{1}{\eps^2}\term{\frac{dn}{\C} + \frac{\Tilde{\W}^2+1}{\Tilde{\W}}} \term{d \log{\term{\frac{dn}{\C} + \frac{\Tilde{\W}^2+1}{\Tilde{\W}}}} + \log{\frac{1}{\delta}}}}$.
\end{restatable}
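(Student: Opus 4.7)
}
The plan is to approximate the hinge loss by a smooth surrogate that belongs to $\set{F}$, apply Theorem~\ref{thm:runTime} to the surrogate, and then reconcile the resulting coreset with the original SVM loss via an additional class-stratified uniform sample. The hinge term is not itself in $\set{F}$ because of its flat-zero region, which breaks the bi-Lipschitz sandwich demanded by Property~\ref{prop:reductionToFam} of Definition~\ref{def:familyOfConvexFuncs}, so the reduction is necessarily indirect.

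\emph{Step 1 (Near-convex surrogate).} First, augment each input to $\Tilde{p} := \term{p, y(p)} \in \REAL^{d+1}$ and each query to $\Tilde{x} := \term{x, b} \in \REAL^{d+1}$, so that $p^T x + y(p) b = \Tilde{p}^T \Tilde{x}$. Introduce the softplus surrogate
\[
\Tilde{f}\term{p, \Tilde{x}} = \lambda \ln\!\term{1 + e^{1 - \Tilde{p}^T \Tilde{x}}} + \frac{1}{2n}\norm{x}_2^2 .
\]
Since $\ln\term{1+e^u}-\max\br{0,u}\in [0,\ln 2]$, it follows that $\fsvm(p,\Tilde{x}) \leq \Tilde{f}(p,\Tilde{x}) \leq \fsvm(p,\Tilde{x}) + \lambda\ln 2$ pointwise. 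I would then verify that $\Tilde{f}\in\set{F}$ with $g(p,\Tilde{x}) = \abs{\Tilde{p}^T\Tilde{x}}$ playing the role of the positive-homogeneous piece, $h$ absorbing both the constant shift $1$ and the ridge $\tfrac{1}{2n}\norm{x}_2^2$, and $z=1$. Properties~\ref{prop:reductionToFam}--\ref{prop:defLvl} are checked exactly as in the logistic-regression argument underlying Corollary~\ref{thm:logisticReg}. Applying Algorithm~\ref{alg:mainAlg} and Theorem~\ref{thm:runTime} to $\Tilde{f}$ yields an $\eps$-coreset $(S_1,v_1)$ for $\Tilde{f}$ of size $\bigO{\tfrac{dn}{\C\eps^2}\term{d\log\term{\tfrac{dn}{\C}}+\log\tfrac{1}{\delta}}}$.

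\emph{Step 2 (Transfer to $\fsvm$ via a regime split).} The difficulty is that $\Tilde{f}$ and $\fsvm$ are only \emph{additively} close, so a coreset for $\Tilde{f}$ is not automatically a coreset for $\fsvm$. I would partition the query space according to the total SVM loss. On queries where $\sum_{p}\fsvm(p,\Tilde{x})$ is large relative to $n\lambda\ln 2$, the additive gap is absorbed into $\eps\sum_p\fsvm(p,\Tilde{x})$ and $(S_1,v_1)$ already serves as an $\eps$-coreset for $\fsvm$. On queries where the SVM sum is small, almost every point satisfies $\Tilde{p}^T\Tilde{x}\geq 1$ so the hinge vanishes and $\fsvm$ degenerates to the ridge $\tfrac{1}{2}\norm{x}_2^2$, which is preserved by \emph{any} weighted sample whose weights sum to $n$. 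The remaining transition regime is where a handful of minority-class points carry all the hinge mass; there the per-point sensitivity can rise to $\Theta\!\term{\tfrac{n}{\min\br{\abs{P_+},\abs{P_-}}}} = \Theta\!\term{\Tilde{\W} + 1/\Tilde{\W}}$. To cover it I would draw an auxiliary stratified uniform sample $S_2$ from $P_+$ and from $P_-$ separately, of total size $\bigO{\tfrac{\Tilde{\W}^2+1}{\Tilde{\W}\eps^2}\term{d\log\term{\cdots}+\log\tfrac{1}{\delta}}}$, and apply a standard VC-type $\eps$-approximation argument per class to show that $S_2$ handles the transition.

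\emph{Step 3 (Combination and main obstacle).} The algorithm outputs $S := S_1 \cup S_2$ with weights inherited from the two constructions, rescaled so that each class's total weight equals its true cardinality (this handles the ridge deterministically). A union bound yields failure probability at most $\delta$ after reabsorbing constants, and the total size matches the bound claimed. The chief obstacle is Step~2: because hinge violates the multiplicative sandwich of Definition~\ref{def:familyOfConvexFuncs}, one must simultaneously track an additive error (from the softplus surrogate) and a multiplicative error (from sensitivity sampling) across three regimes and show they compose into a single multiplicative $\eps$-error for $\fsvm$. Pinning down the threshold separating the ``hinge-dominated'' and ``ridge-dominated'' queries, and certifying that $S_2$ controls the transition, is the delicate part; the term $\tfrac{\Tilde{\W}^2+1}{\Tilde{\W}}$ is precisely the overhead for bridging the additive-to-multiplicative gap in the class-imbalanced regime.
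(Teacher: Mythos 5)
Your proposal and the paper's proof part ways at a crucial point, and the gap in your plan is not merely a ``delicate part'' to be filled in later---it is a structural obstacle that the softplus route does not overcome.

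The additive discrepancy $\sum_p \Tilde f - \sum_p \fsvm \leq n\lambda\ln 2$ can only be absorbed into $\eps\sum_p\fsvm$ when $\sum_p\fsvm \gtrsim n\lambda/\eps$. But $\sum_p\fsvm\bigl(p,(0,0)\bigr) = n\lambda$, so the minimum total SVM loss over all queries is at most $n\lambda$, well below the threshold for any $\eps < \ln 2$. Thus near the optimal query---the one you most care about preserving---the softplus coreset provides no guarantee for $\fsvm$. Your ``ridge-dominated'' regime analysis is also flawed: a small total loss does \emph{not} force the hinge to vanish; at $x=0, b=0$ every point has hinge $\lambda$ and the ridge is zero, so the hinge can be uniformly active while the total loss is modest. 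Your $S_2$ is introduced precisely to rescue this regime, but the range space, the additive-to-multiplicative conversion, and the exact threshold are all unspecified; as stated the argument does not close.

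The paper avoids the additive surrogate entirely and gets multiplicative control directly. It fixes $\gamma = 0.4$ and partitions queries into $X = \{(x,b): \|x\|_2 \leq \gamma\}$ and its complement, then further splits the complement by whether $|b| \leq 9\|x\|_2$. The key observation (Lemma~\ref{lem:boundingSVMs}) is that on $\{\|x\|_2 \geq \gamma,\ |b|\leq 9\|x\|_2\}$ the ridge $\|x\|_2^2/N$ is bounded \emph{below} by $\gamma^2/N$, so the constant ``$1$'' in the hinge can be swallowed multiplicatively: the per-point loss sits between $c_1\bigl(g(p,\cdot)^2 + h(p,\cdot)^2\bigr)$ and $c_2\bigl(g(p,\cdot)^2 + h(p,\cdot)^2\bigr)$ with $g(p,(x,b))\propto|p^Tx|$ and $h$ absorbing $\max\{1, |b|, \|x\|_2^2\}$. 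So $\fsvm$ \emph{is} near-convex on this restricted query set, and Lemma~\ref{thm:sensitivityBound} applies directly, giving the $O(dn/\C)$ contribution. The two remaining regions ($\|x\|_2 \leq \gamma$; and $|b| > 9\|x\|_2$ with $\|x\|_2 > \gamma$) are handled by hand-crafted case analysis on the sign of $y(p)b$, yielding per-point sensitivities of the form $O\bigl(w(p)/\sum_{P_{y(p)}}w\bigr)$ or $O\bigl(w(p)/\sum_{P\setminus P_{y(p)}}w\bigr)$; summing these over $P$ produces exactly the $\Tilde\W + 1/\Tilde\W$ overhead. So that term is a combinatorial artifact of the small-ball / large-bias regions, not (as your plan posits) the price of bridging an additive surrogate gap. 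If you want to salvage your outline, you would need to replace the softplus surrogate with a query-restricted multiplicative sandwich as the paper does, or else produce a concrete additive-to-multiplicative argument that holds even when the total SVM loss is as small as $n\lambda$.
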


\begin{restatable}[Outlier resistant functions]{corollary}{restrictedLp}
\label{thm:restrictedLp}
Let $P \subseteq \REAL^d$ be a set of $n$ points, and let $\fmestimator : P \times \REAL^d \to [0,\infty)$ be loss function such that for every $x \in \REAL^d$, and $p \in P$, $$\fmestimator(p,x) = \min\br{\abs{p^T x}, \norm{x}_z}.$$

Then, there exists an algorithm that gets the set $P$ as an in input, and returns a pair $(S,v)$, such that (i) with probability at least $1-\delta$, $(S,v)$ is an $\eps$-coreset for $P$ with respect to $\fmestimator$, and (ii) the size of the coreset is $\bigO{\frac{\gamma d^{2 + \abs{\frac{1}{2} - \frac{1}{z}}}}{\eps^2} \term{d \log{\term{\gamma d^{2 + \abs{\frac{1}{2} - \frac{1}{z}}}}} + \log{\term{\frac{1}{\delta}}}}}$, where $\gamma$ is defined in the proof.
\end{restatable}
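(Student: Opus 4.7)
The plan is to reduce the outlier-resistant loss $\fmestimator(p,x)=\min\{|p^T x|,\|x\|_z\}$ to an auxiliary near-convex loss that lies in $\set{F}$, transfer sensitivity upper bounds through a simple sandwich inequality, and then invoke the sensitivity-sampling construction of Theorem~\ref{thm:coreset}. First I would introduce the scalar $\gamma\geq 1$ as any upper bound on $\max_{p\in P}\sup_{x\neq 0_d}|p^T x|/\|x\|_z$. This quantity is finite for any fixed set $P$: for $z\geq 1$, H\"older's inequality gives $\gamma\leq\max_{p\in P}\|p\|_{z^*}$ with $1/z+1/z^*=1$, and for $z\in(0,1)$ the norm comparison $\|x\|_z\geq\|x\|_1$ on $\REAL^d$ combined with $|p^T x|\leq \|p\|_\infty\|x\|_1$ yields an analogous uniform bound.

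The key technical observation is a two-sided comparison of $\fmestimator$ with the $\ell_1$-regression loss $f_1(p,x)=|p^T x|$. On the one hand, $\fmestimator(p,x)\leq |p^T x|$ is immediate. On the other hand, for every $q\in P$ and $x\in\REAL^d$ the inequality $|q^T x|\leq\gamma\|x\|_z$ rewrites as $\|x\|_z\geq |q^T x|/\gamma$, and therefore
\[
\min\{|q^T x|,\|x\|_z\}\;\geq\;\frac{|q^T x|}{\gamma}.
\]
Summing over $q\in P$ yields $\sum_{q\in P}\fmestimator(q,x)\geq\gamma^{-1}\sum_{q\in P}|q^T x|$, so combining the two inequalities gives the pointwise sensitivity bound
\[
s^{\fmestimator}(p)\;=\;\sup_{x\in\REAL^d}\frac{\fmestimator(p,x)}{\sum_{q\in P}\fmestimator(q,x)}\;\leq\;\gamma\cdot s^{f_1}(p).
\]
Since $f_1\in\set{F}$ with $g(p,x)=|p^T x|$, $h\equiv 0$, and exponent parameter $z=1$ in Definition~\ref{def:familyOfConvexFuncs}, Lemma~\ref{thm:sensitivityBound} applies verbatim to $f_1$ and produces explicit per-point sensitivity upper bounds from the $f$-SVD of $P$.

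The rest of the construction is then mechanical. I would run Algorithm~\ref{alg:mainAlg} on $f_1$ to compute the $f$-SVD of $P$ together with the bounds on $s^{f_1}(p)$; scale each bound by $\gamma$ to get valid upper bounds on $s^{\fmestimator}(p)$ whose sum is at most $\gamma$ times the total sensitivity of Lemma~\ref{thm:sensitivityBound}; bound the VC dimension of the triplet $(P,\fmestimator,\REAL^d)$ by $O(d)$ using the observation that for every fixed $x$ and threshold $t$ the level set $\{p:\fmestimator(p,x)\leq t\}$ is either all of $P$ (when $\|x\|_z\leq t$) or the slab $\{p:|p^T x|\leq t\}$, and both families form range spaces of VC dimension $O(d)$; and finally invoke Theorem~\ref{thm:coreset} with these sensitivity bounds and the sample size prescribed in the statement to deliver the claimed $\eps$-coreset.

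The main obstacle I anticipate is matching the exact $z$-dependent exponent $d^{2+|1/2-1/z|}$ stated in the corollary. The naive reduction to $f_1$ sketched above only delivers a coreset of size roughly $\gamma d^{5/2}/\eps^2$ up to logarithmic factors, which coincides with the advertised bound at $z=1$ and $z=\infty$ but is suboptimal near $z=2$, where the stated exponent is $d^2$. Recovering the announced $z$-dependent exponent will require replacing $f_1$ by a $z$-tuned auxiliary loss, for instance one whose $f$-SVD is built from an ellipsoid that is tight with respect to $\|\cdot\|_z$ rather than $\|\cdot\|_2$, so that the inevitable cost of $d^{|1/2-1/z|}$ incurred when converting between the $\ell_2$ norm arising from the L\"owner ellipsoid and the $\ell_z$ norm appearing in $\fmestimator$ is absorbed cleanly. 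Tracking this norm conversion carefully through the definition of $\alpha$ in Definition~\ref{def:svdWRTF} and through both inequalities of Lemma~\ref{thm:sensitivityBound} is, I expect, the delicate part of the argument.
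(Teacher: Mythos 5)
There is a genuine gap, and you have essentially identified it yourself: your sandwich $\abs{p^Tx}/\gamma \leq \fmestimator(p,x)\leq \abs{p^Tx}$ reduces everything to $\ell_1$-regression and yields a coreset of size $\tilde{O}(\gamma d^{5/2}/\eps^2)$, which is \emph{not} the stated bound $\tilde{O}\bigl(\gamma d^{2+\abs{\frac12-\frac1z}}/\eps^2\bigr)$ for any $z\notin\br{1,\infty}$, and the ``$z$-tuned auxiliary loss'' you defer to is precisely the substance of the paper's proof, not a routine refinement. Moreover, your $\gamma=\max_{p}\sup_{x\neq 0}\abs{p^Tx}/\norm{x}_z$ is a genuinely different quantity from the paper's: in Lemma~\ref{lem:sensBoundRestrictedLp} the paper sets $\gamma=\maxArgs{1,\,2\pi d^{\abs{\frac12-\frac1z}}/\norm{DV^T}_2}$, where $(U,D,V)$ is the $f$-SVD of the convex part $g(p,x)=\abs{p^Tx}$. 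Your $\gamma$ grows with the norms of the input points (it scales like $t$ when $P$ is replaced by $tP$), whereas the paper's behaves in the opposite way, so even at the extreme exponents $z\in\br{1,\infty}$ your bound does not subsume the corollary; the clause ``$\gamma$ is defined in the proof'' does not license substituting an incomparable parameter.

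The missing ideas are two. First, instead of discarding the $\norm{x}_z$ branch via a worst-case ratio, the paper sandwiches the level set $L=\br{x: \sum_p \fmestimator(p,x)\leq 1}$ between dilates of the L\"{o}wner ellipsoid $E$ of $\set{X}_g$ (namely $\frac{1}{\sqrt d}E\subseteq \set{X}_g\subseteq L\subseteq \gamma\sqrt d\, E$), which is exactly where its $\gamma$ comes from and gives $\norm{D'V^Tx}_2\leq\sum_q \fmestimator(q,x)\leq\gamma\sqrt d\,\norm{D'V^Tx}_2$ with $D'=\frac{1}{2\gamma}D$. Second, the per-point sensitivity bound \emph{retains the min}, $s(p)\leq w(p)\min\br{\norm{U(p)}_2,\, d^{\abs{\frac12-\frac1z}}\norm{(D'V^T)^{-1}}_2}$, and the total sensitivity is then controlled by converting this minimum into a sum over $d$ carefully chosen directions $M_1+M_j$ using Lemma~\ref{lem:boundingMinOperator} together with Claim~\ref{clm:matProp}, and evaluating the upper side of the ellipsoid sandwich at those directions; this is what produces the $d^{2+\abs{\frac12-\frac1z}}$ dependence with the paper's $\gamma$. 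Your proposal contains neither the level-set comparison nor the min-to-directional-sum argument, so as written it proves only a weaker, differently parameterized statement (your VC-dimension and Theorem~\ref{thm:coreset} steps are fine, but they are the easy part).
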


\section{Experimental Results}
\label{sec:results}
In what follows we evaluate our coreset against uniform sampling on real-world datasets, with respect to the SVM problem, Logistic regression problem and $\ell_z$-regression problem for $z \in (0,1)$.  Additional details of our setup can be found at Section~\ref{sec:sup_experimental} of the Supplementary Material.

\textbf{Software/Hardware. } Our algorithms were implemented in Python 3.6~\cite{10.5555/1593511} using \say{Numpy}~\cite{oliphant2006guide}, \say{Scipy}~\cite{2020SciPy-NMeth} and \say{Scikit-learn}~\cite{scikit-learn}.
Tests were performed on $2.59$GHz i$7$-$6500$U ($2$ cores total) machine with $16$GB RAM.




\textbf{Datasets.} The following datasets were used for our experiments mostly from UCI machine learning repository~\cite{Dua:2019}:
\begin{enumerate}[label=(\roman*)]
    \item \textbf{HTRU}~\cite{Dua:2019} --- $17,898$ radio emissions of the Pulsar star each consisting of $9$ features.
    \item \textbf{Skin}~\cite{Dua:2019} --- $245,057$ random samples of R,G,B from face images consisting of $4$ dimensions.
    \item \textbf{Cod-rna}~\cite{uzilov2006detection} --- consists of $59,535$ samples, $8$ features, which has two classes (i.e. labels), describing RNAs.
    \item \textbf{Web} dataset~\cite{CC01a} -- $49,749$ web pages records where each record is consists of $300$ features.
    \item \textbf{3D spatial networks}~\cite{Dua:2019} -- 3D road network with highly accurate elevation information (+-20cm) from Denmark used in eco-routing and fuel/Co2-estimation routing algorithms consisting of $434,874$ records where each record has $4$ features.
\end{enumerate}

\textbf{Evaluation against uniform sampling.}
At Fig.~\ref{fig:HTRUsvm}--\ref{fig:Codlogistic} and Fig.~\ref{fig:l05reg}--\ref{fig:l08reg}, we have chosen $20$ sample sizes, starting from $50$ till $500$, at Figures~\ref{fig:w8asvm}--\ref{fig:w8alogistic}, we have chosen $20$ sample sizes starting from $4000$ till $16,000$.
At each sample size, we generate two coresets, where the first is using uniform sampling and the latter is using Algorithm~\ref{alg:mainAlg}.
For each coreset $(S,v)$, we find $x^* \in \arg\min_{x \in \REAL^d} \sum_{p \in S} v(p) f(p,x)$, and the approximation error $\varepsilon$ is set to be $(\sum_{p \in P} f\term{p,x^*}) / (\min_{x \in \Q} \sum_{p \in P} f(p,x))-1$. The results were averaged across $40$ trials, while the shaded regions correspond to the standard deviation.



\newcommand\s{0.49}


\begin{figure*}[t!]
\centering

\begin{subfigure}[t]{0.49\textwidth}
\includegraphics[width=.49\textwidth]{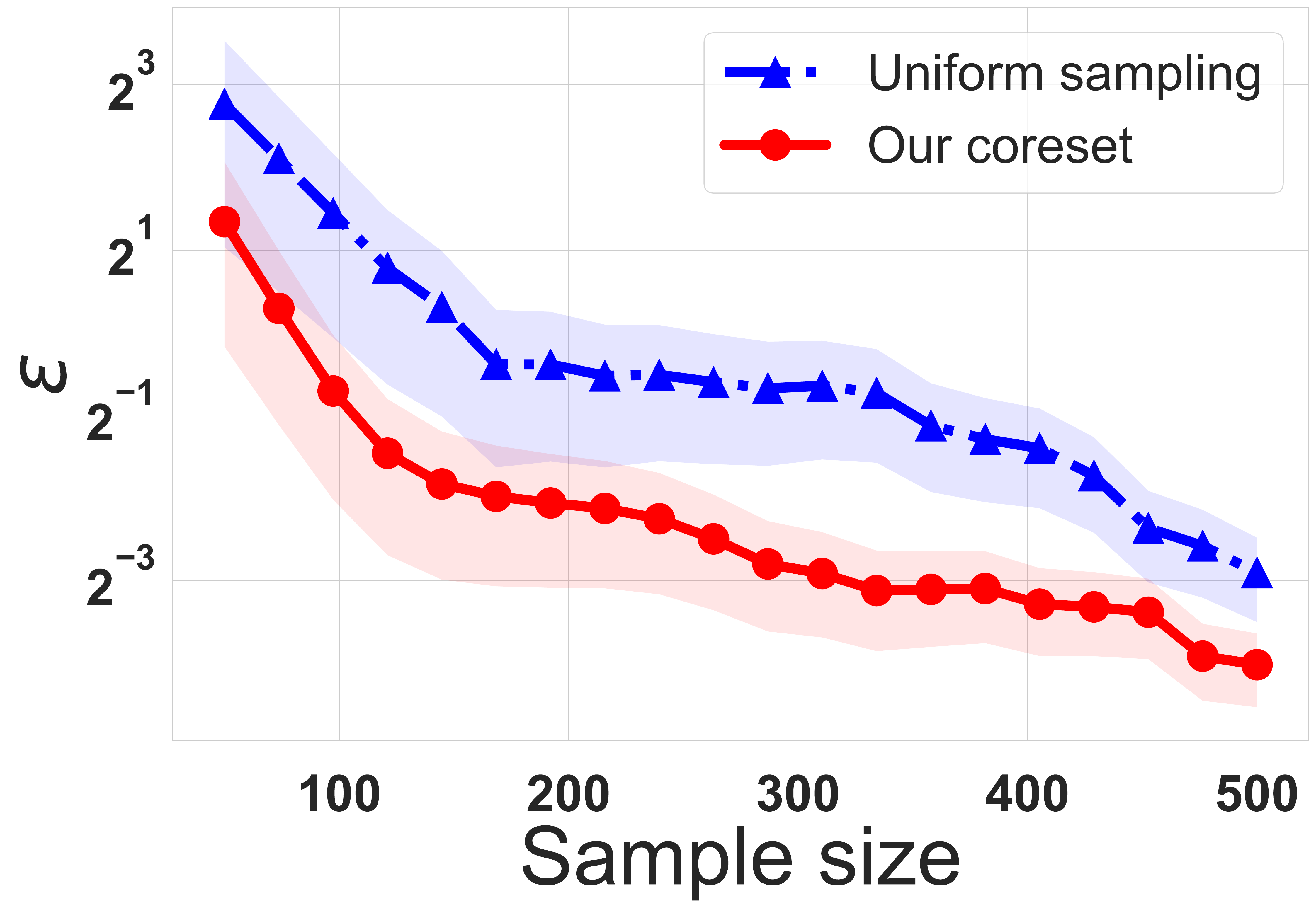}
\includegraphics[width=.49\textwidth]{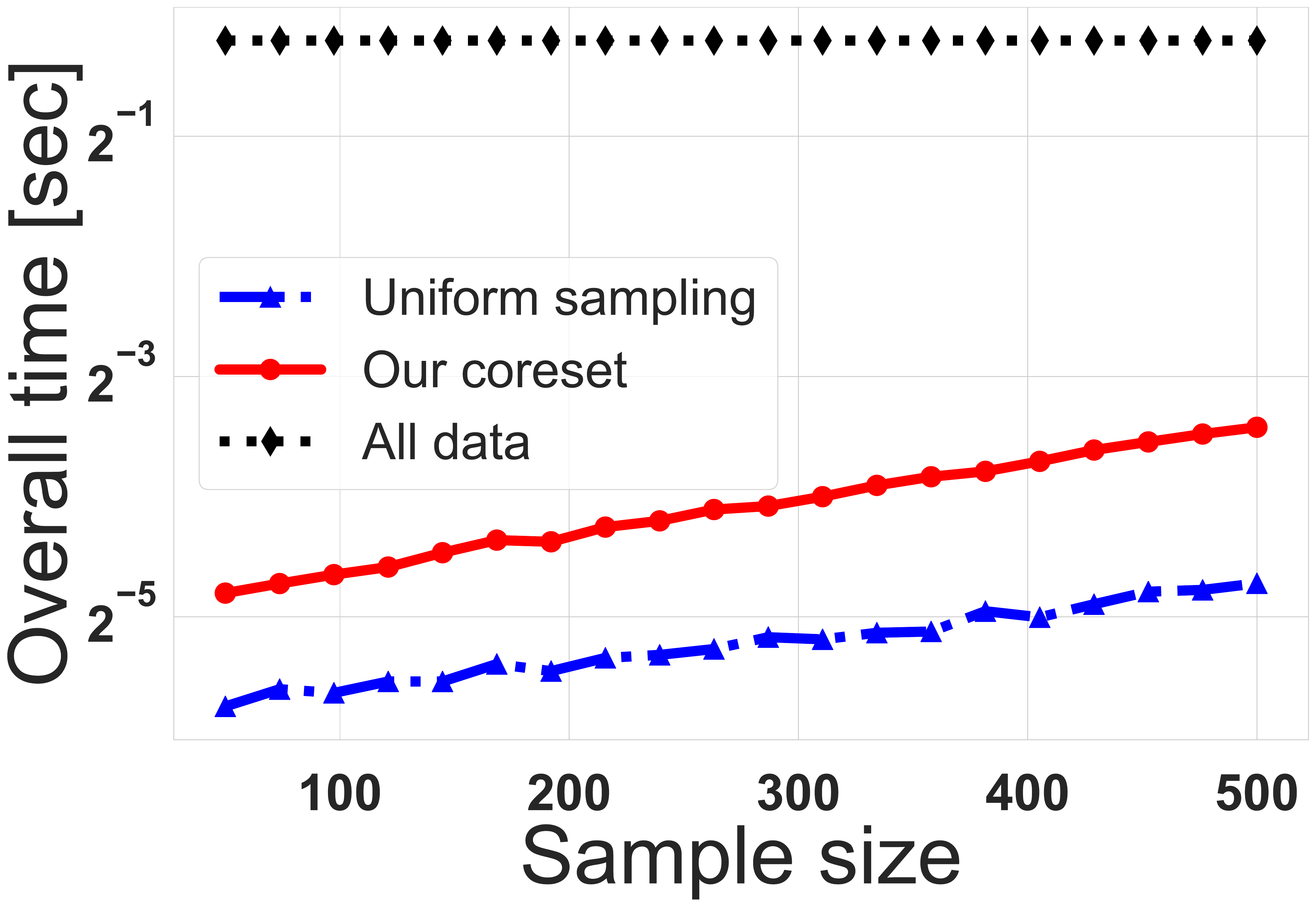}
\caption{Dataset (i): Support vector machines}
\label{fig:HTRUsvm}
\end{subfigure}
\begin{subfigure}[t]{0.49\textwidth}
\includegraphics[width=.49\textwidth]{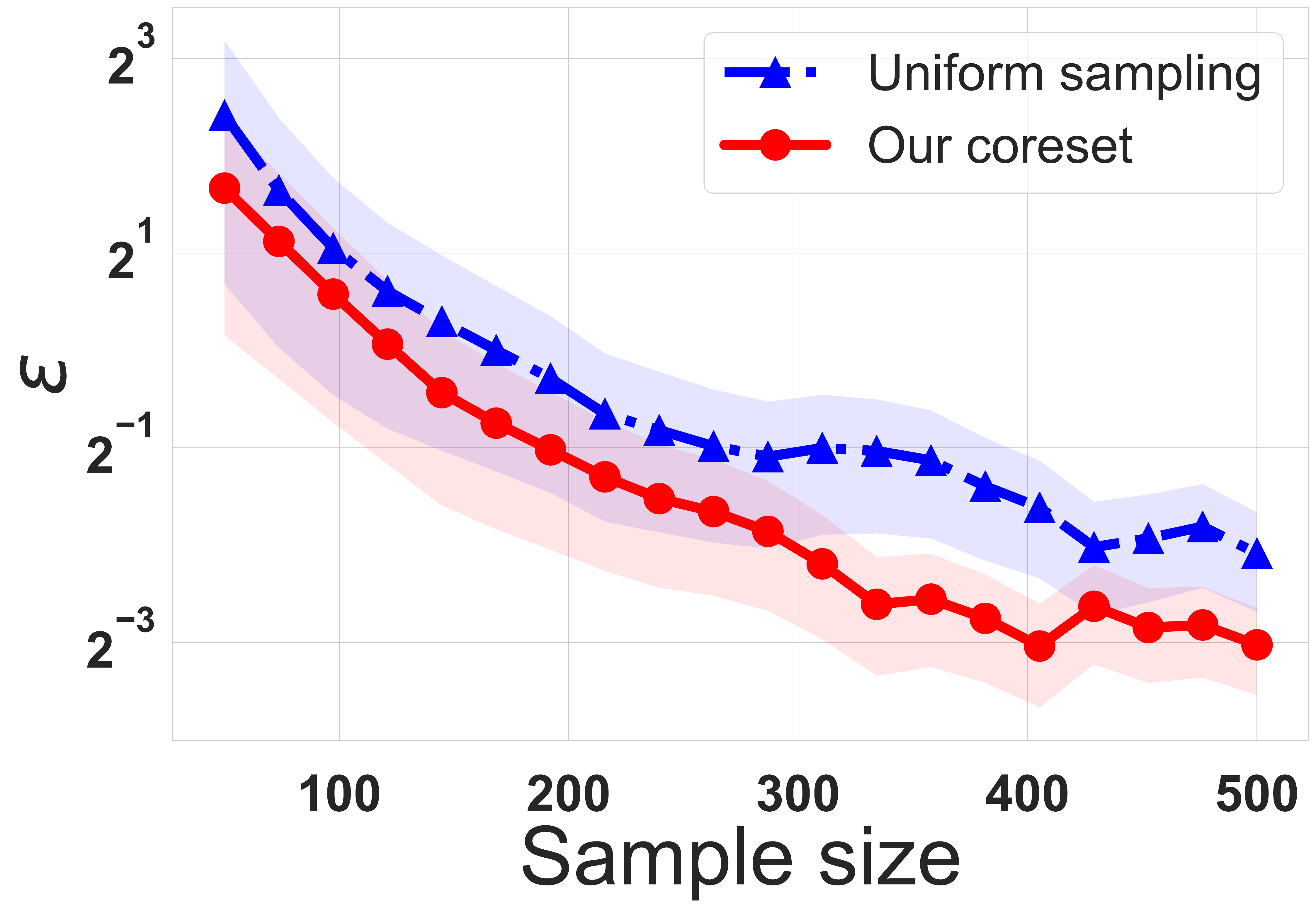}
\includegraphics[width=.49\textwidth]{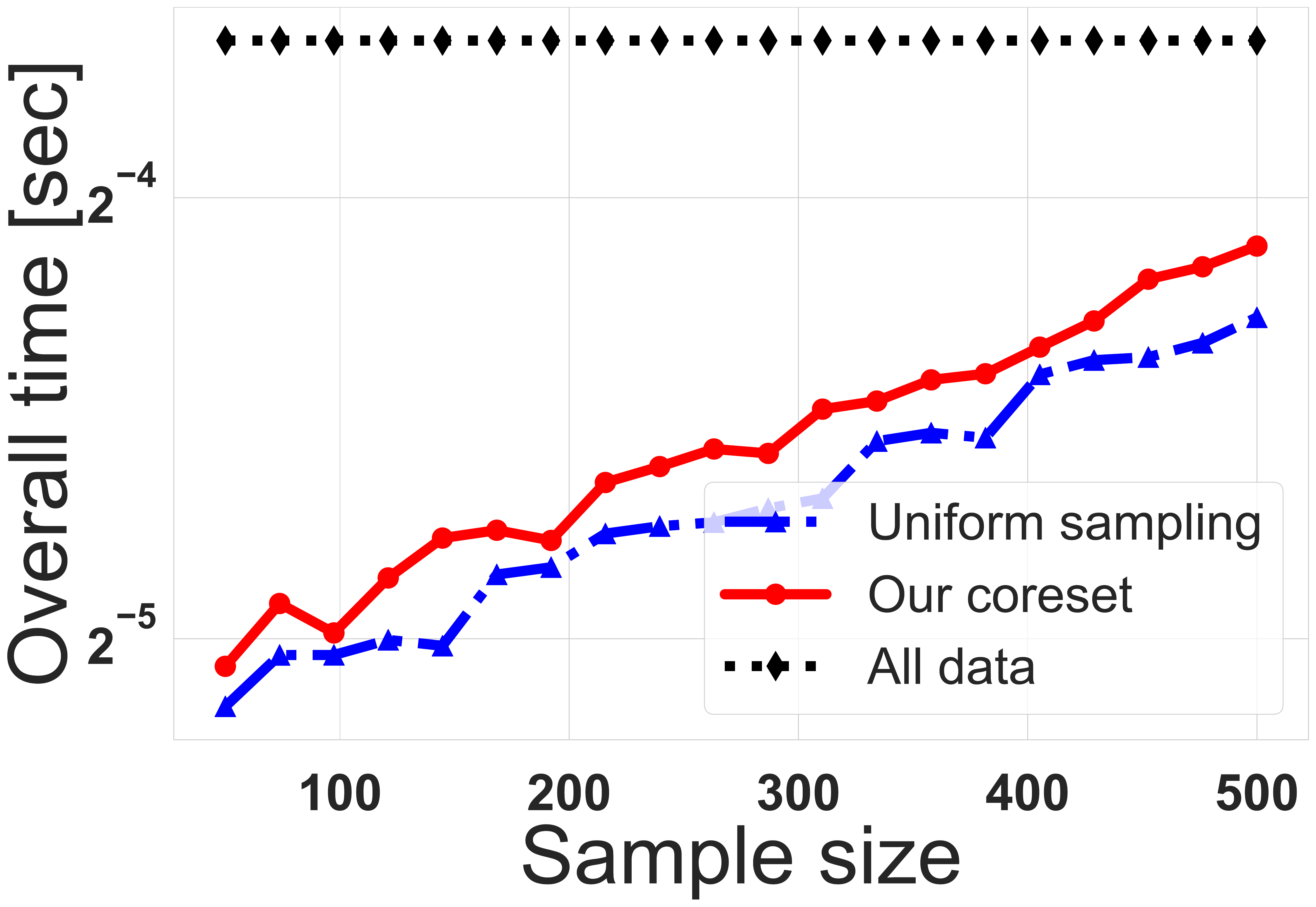}
\caption{Dataset (i): Logistic regression}
\label{fig:HTRUlogistic}
\end{subfigure}

\begin{subfigure}[t]{0.49\textwidth}
\includegraphics[width=.49\textwidth]{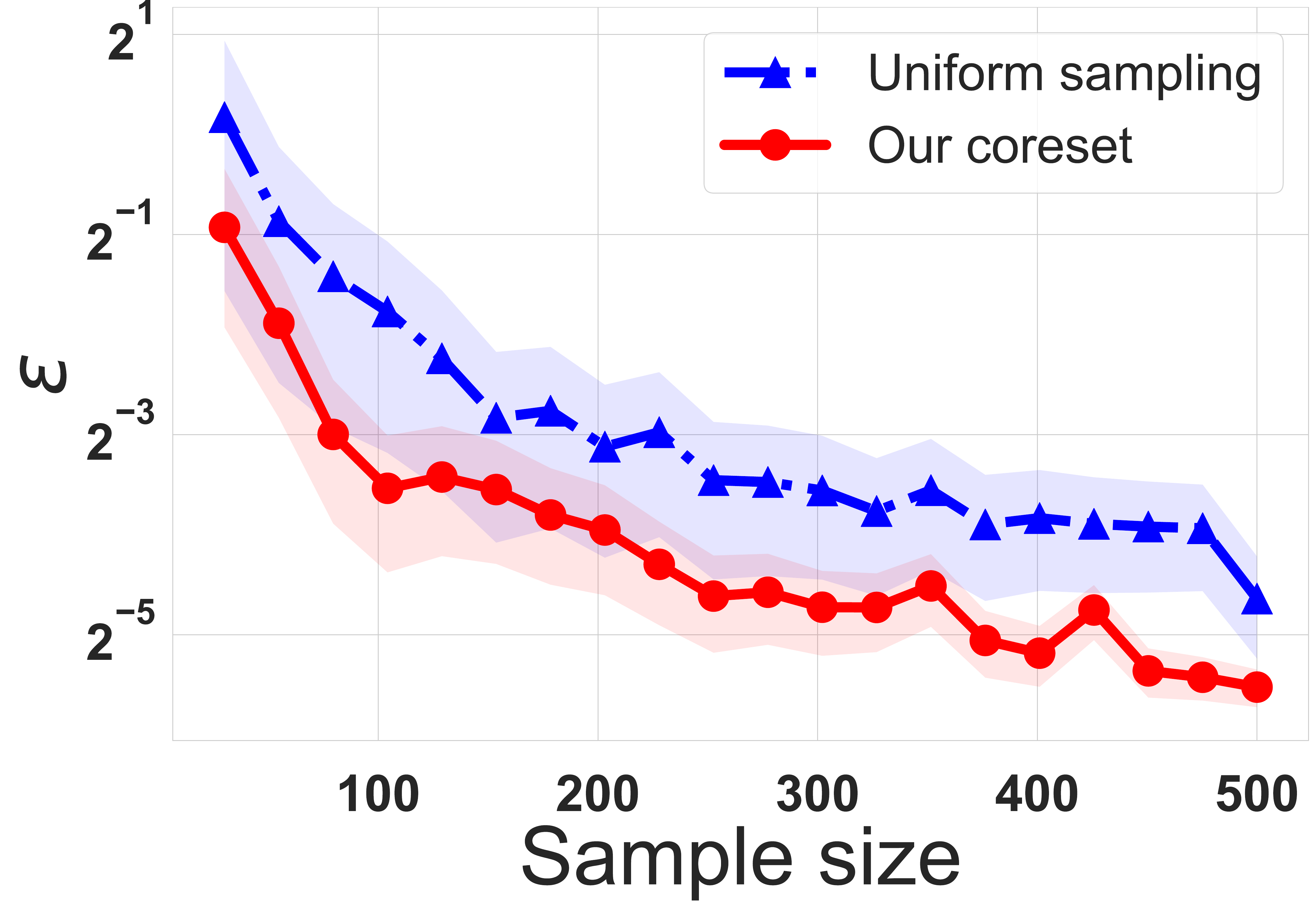}
\includegraphics[width=.49\textwidth]{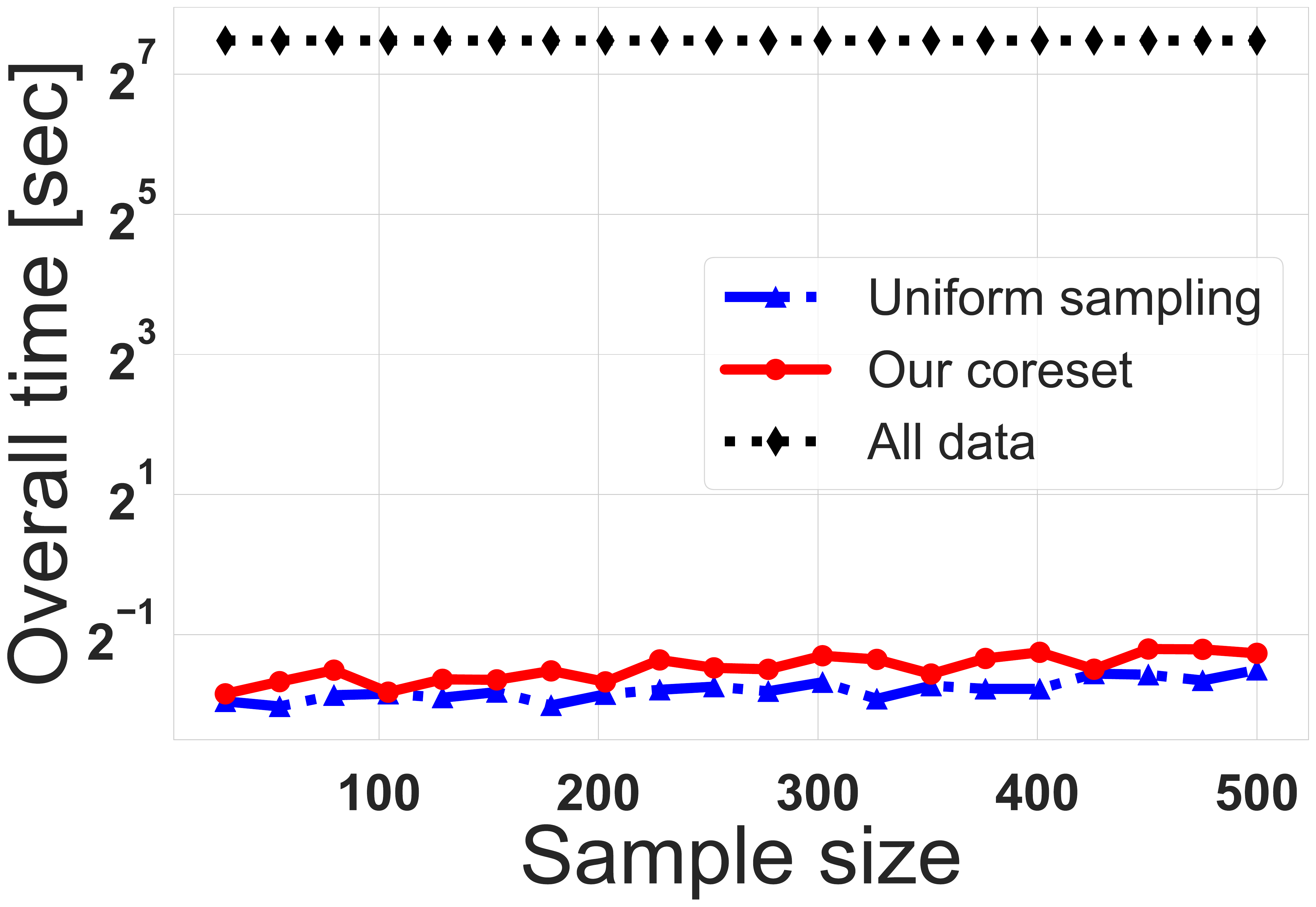}
\caption{Dataset (ii): Support vector machines}
\label{fig:Skinsvm}
\end{subfigure}
\begin{subfigure}[t]{0.49\textwidth}
\includegraphics[width=.49\textwidth]{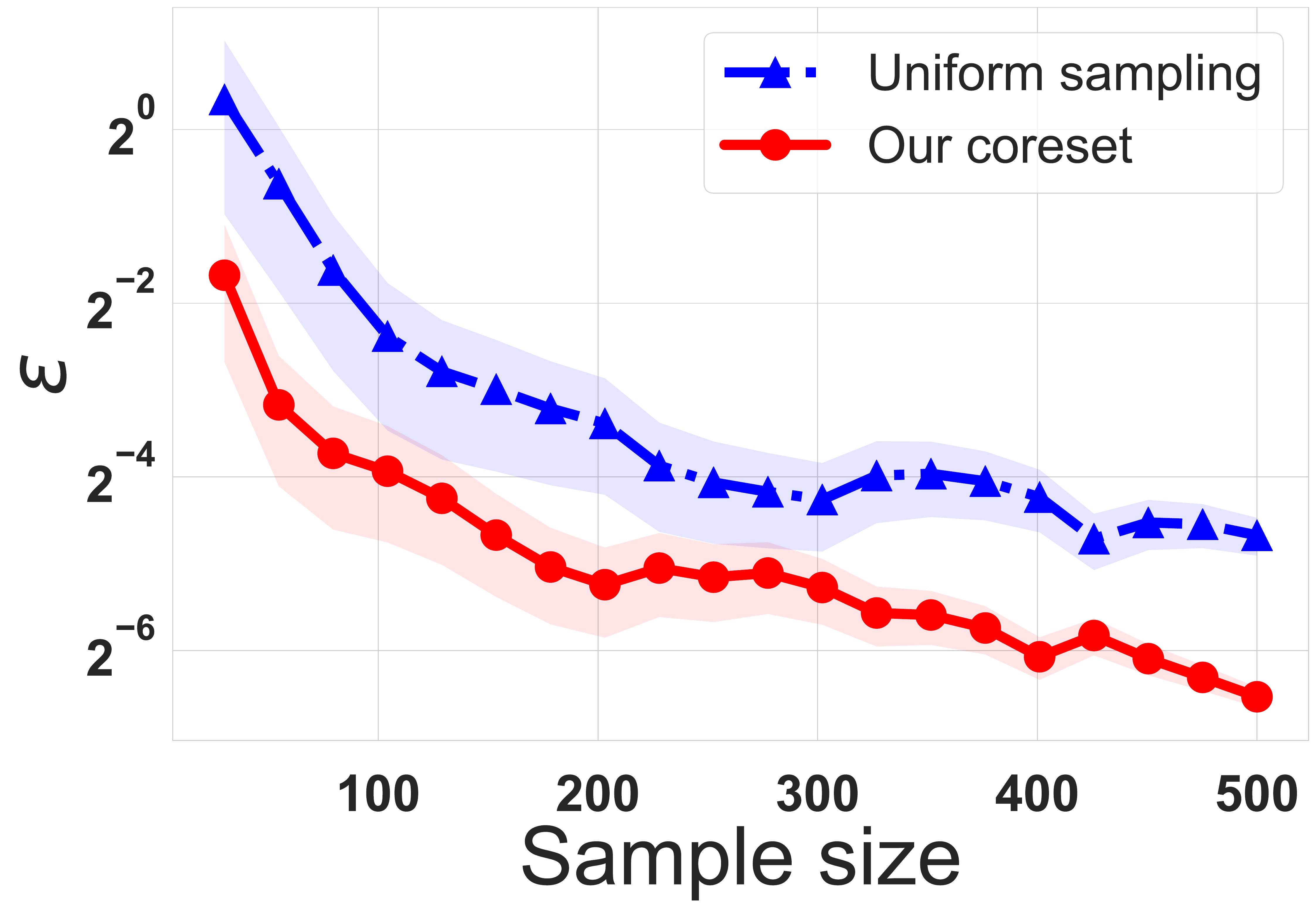}
\includegraphics[width=.49\textwidth]{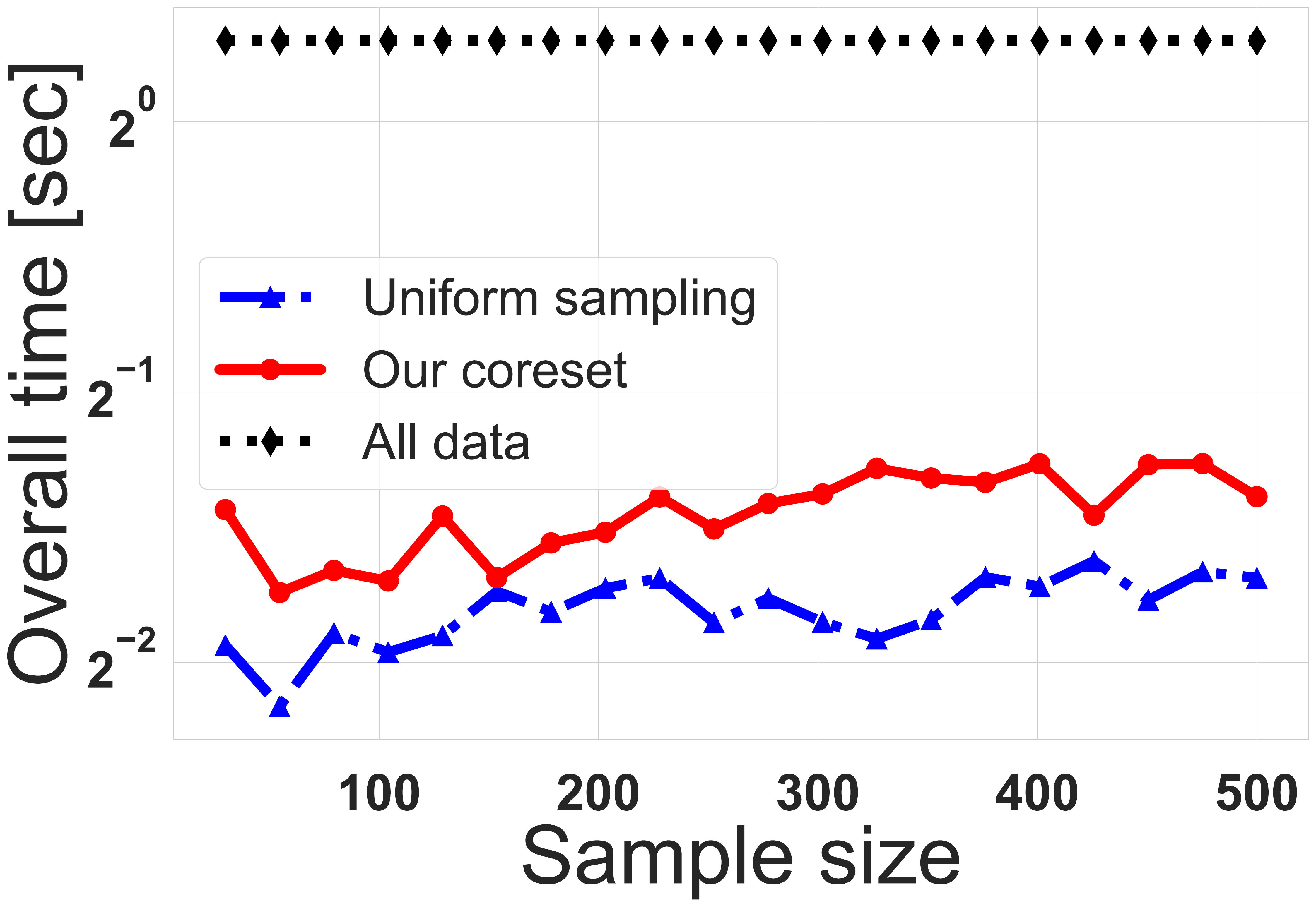}
\caption{Dataset (ii): Logistic regression}
\label{fig:Skinlogistic}
\end{subfigure}

\begin{subfigure}[t]{0.49\textwidth}
\includegraphics[width=.49\textwidth]{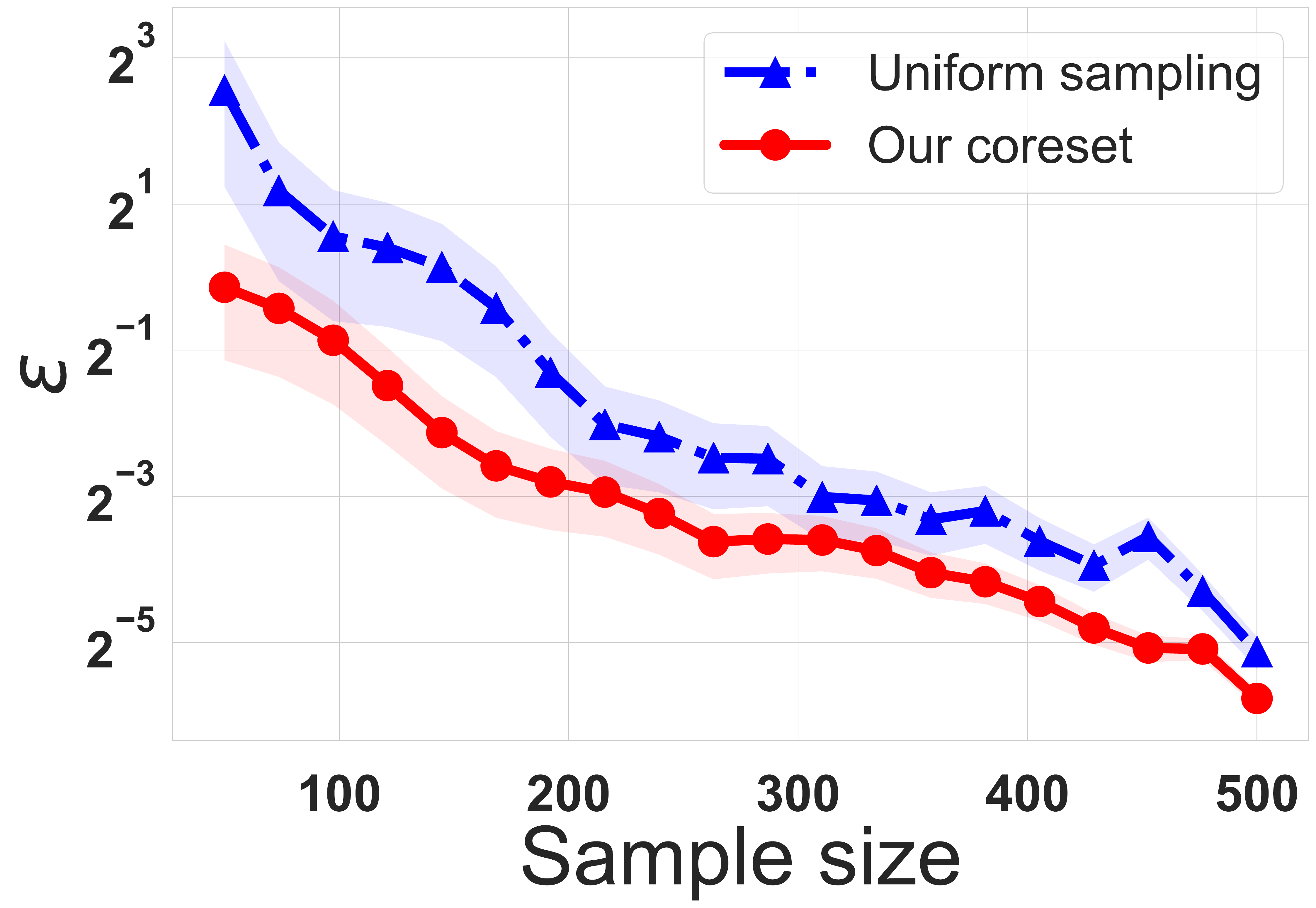}
\includegraphics[width=.49\textwidth]{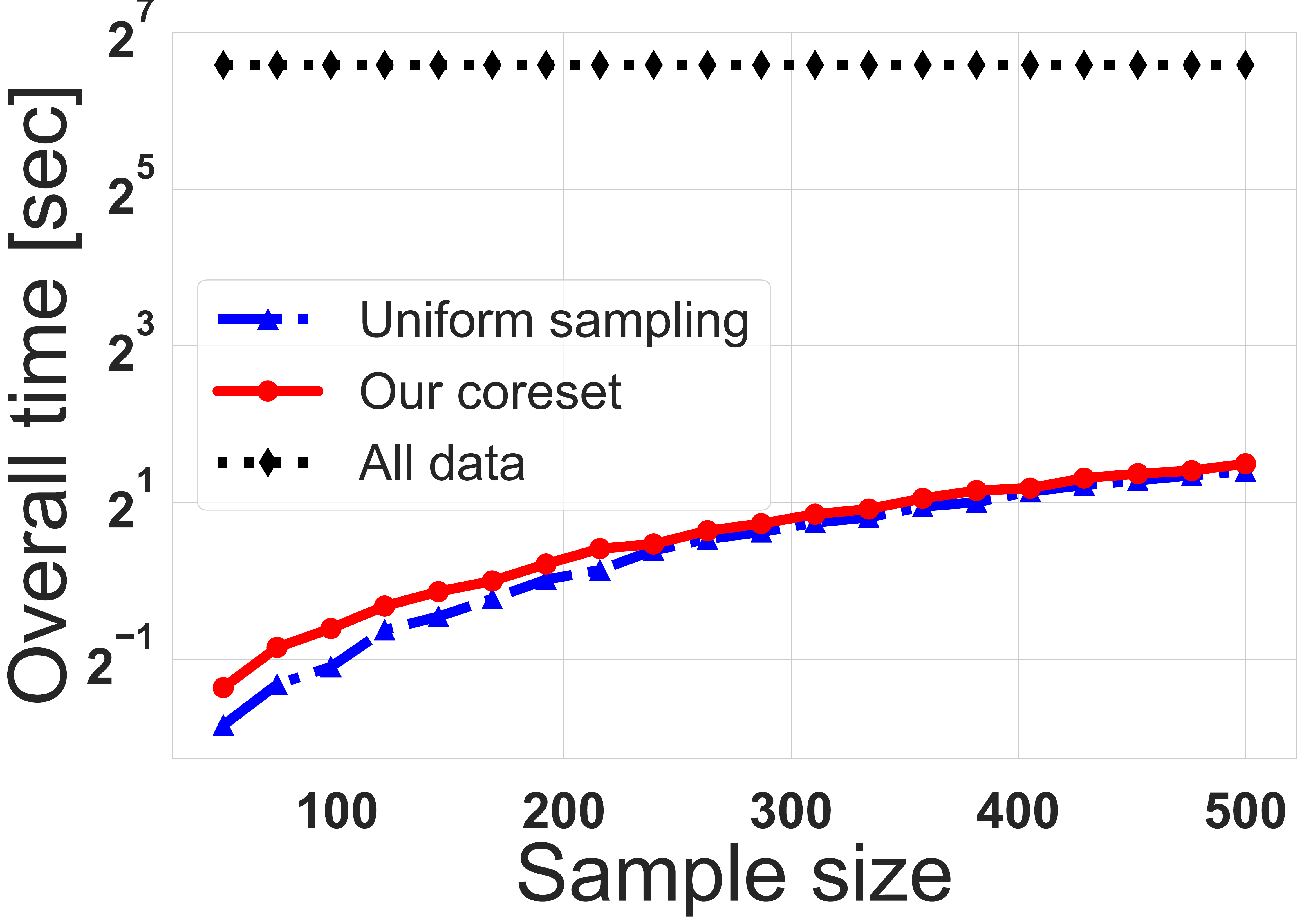}
\caption{Dataset (iii): Support vector machines}\label{fig:Codsvm}
\end{subfigure}
\begin{subfigure}[t]{0.49\textwidth}
\includegraphics[width=.49\textwidth]{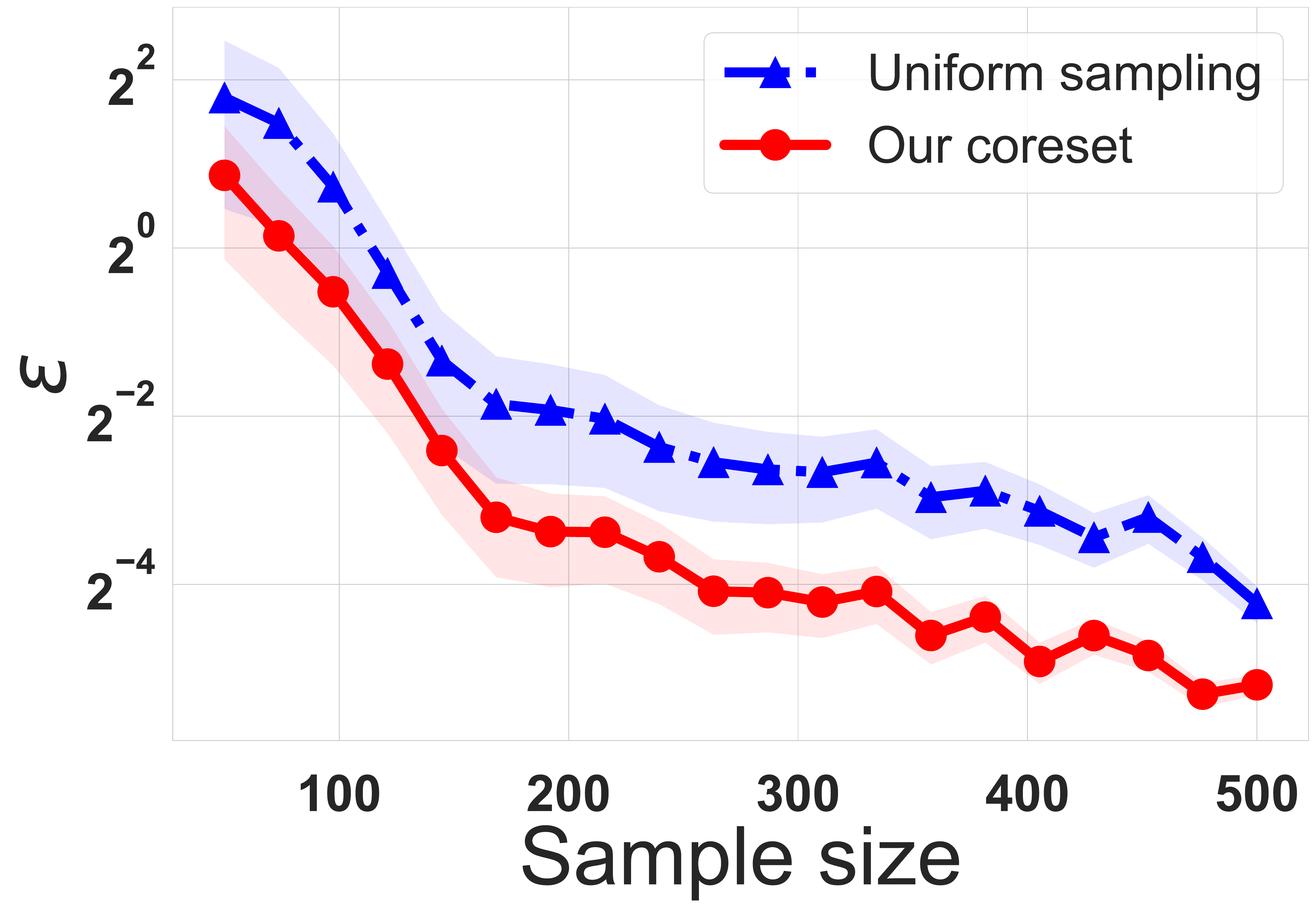}
\includegraphics[width=.49\textwidth]{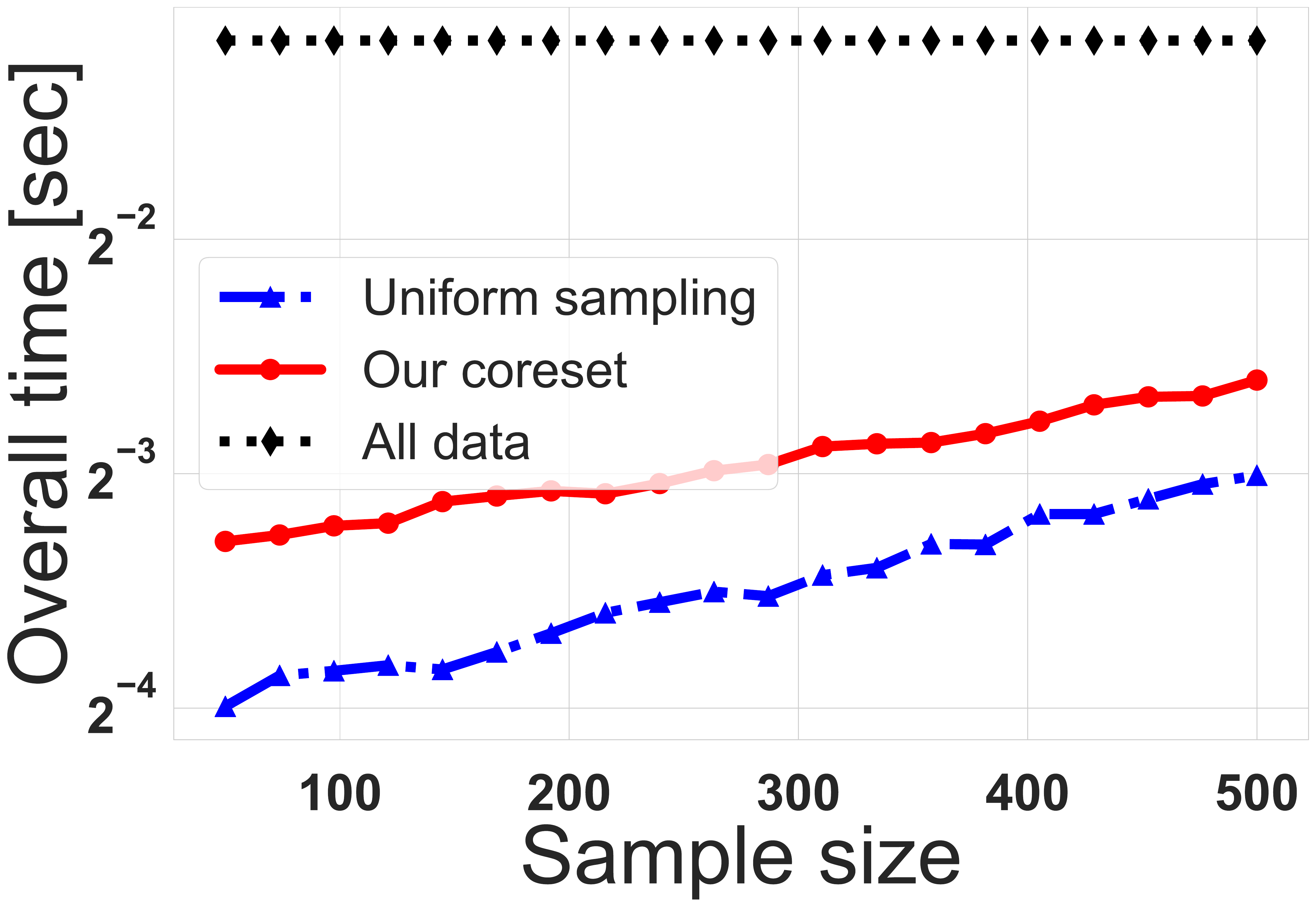}
\caption{Dataset (iii): Logistic regression}
\label{fig:Codlogistic}
\end{subfigure}

\begin{subfigure}[t]{0.49\textwidth}
\includegraphics[width=.49\textwidth]{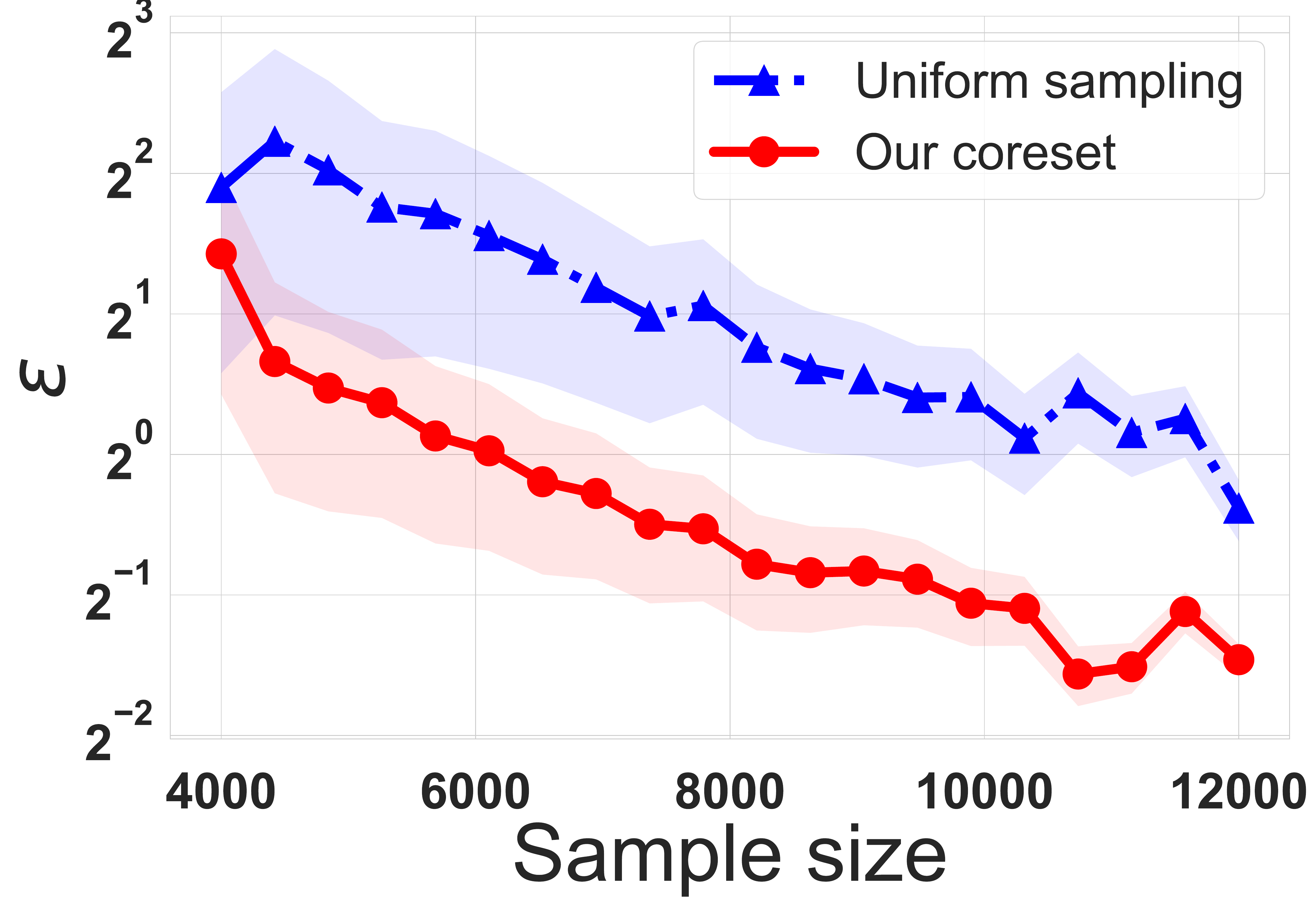}
\includegraphics[width=.49\textwidth]{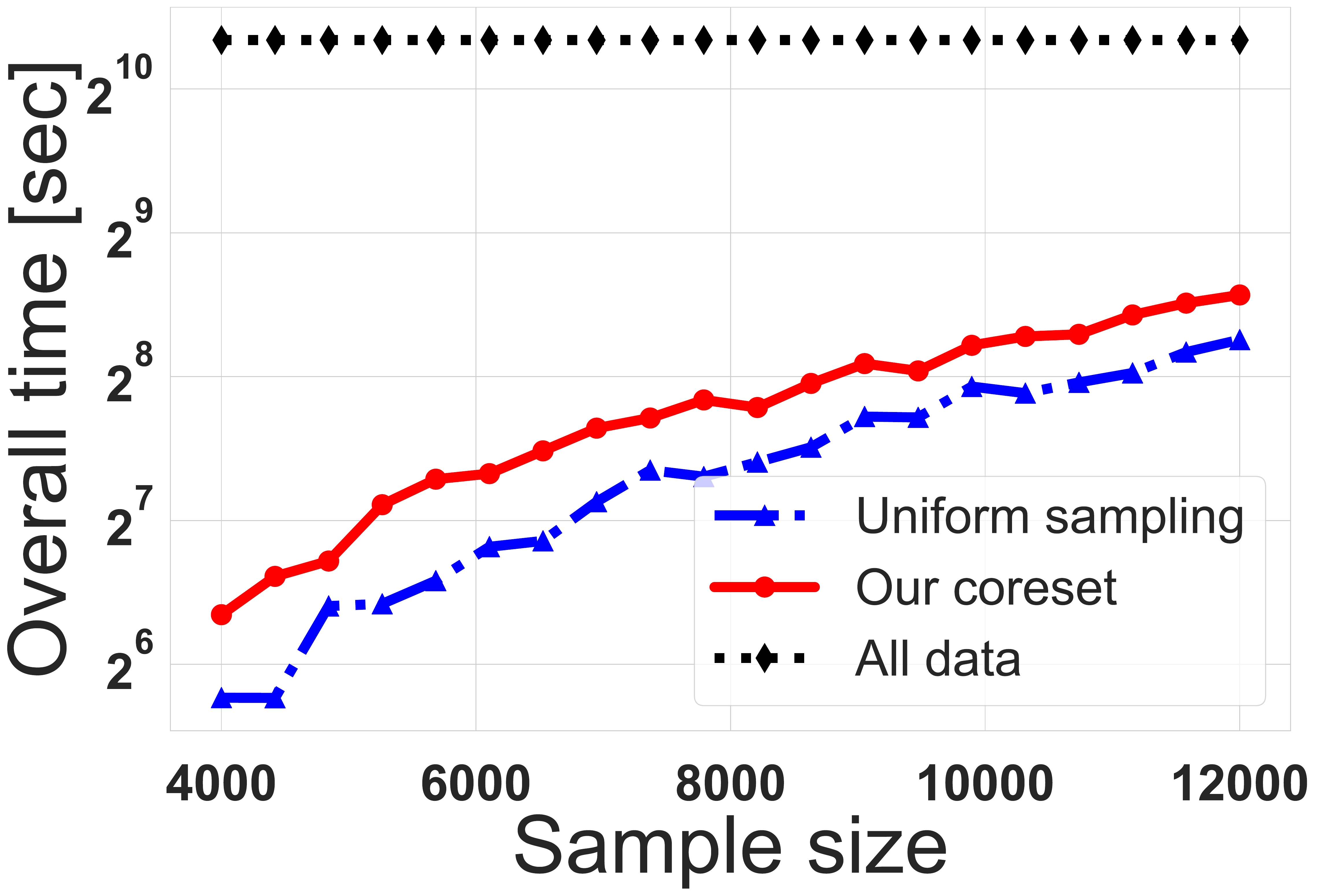}
\caption{Dataset (iv): Support vector machines}
\label{fig:w8asvm}
\end{subfigure}
\begin{subfigure}[t]{0.49\textwidth}
\includegraphics[width=.49\textwidth]{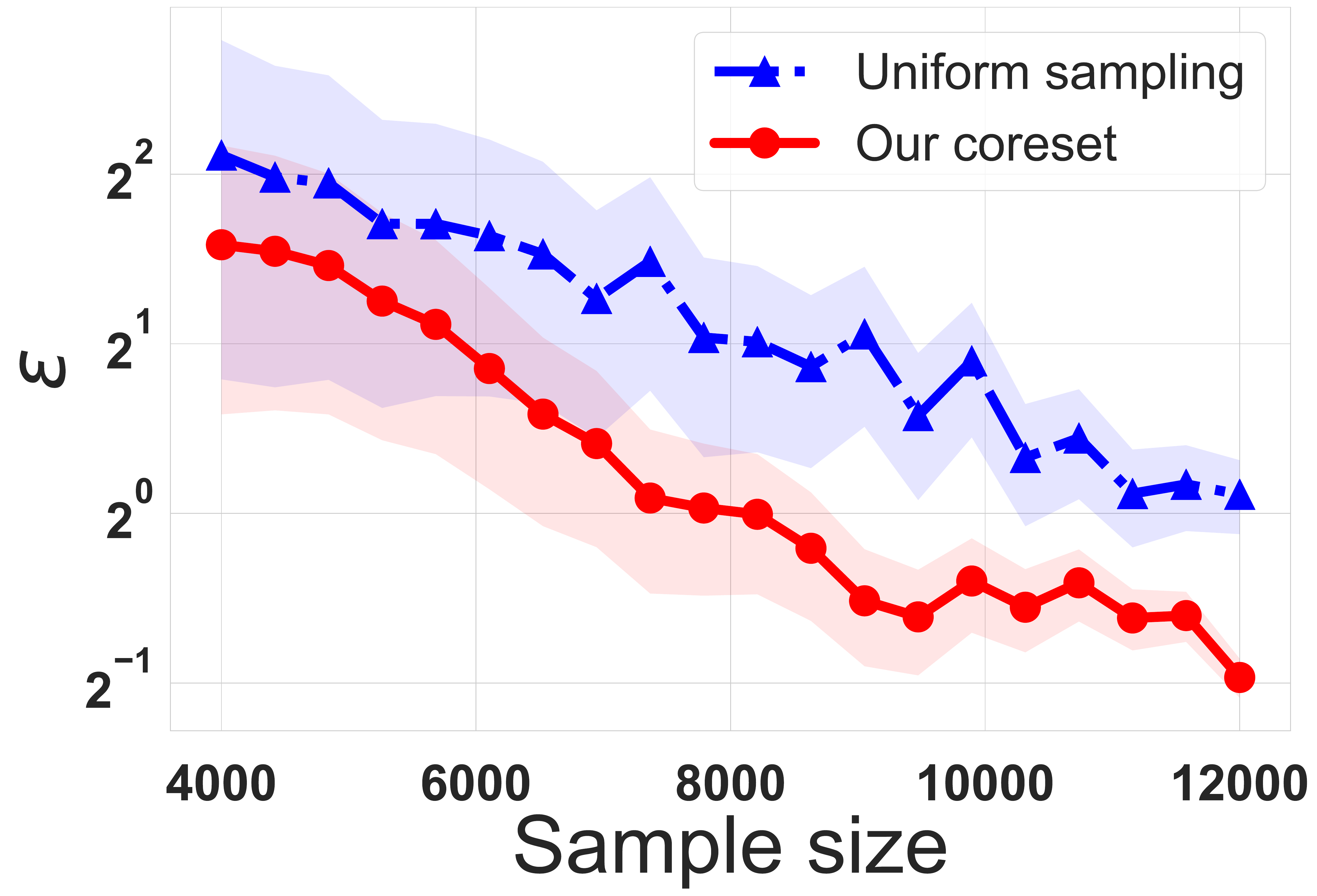}
\includegraphics[width=.49\textwidth]{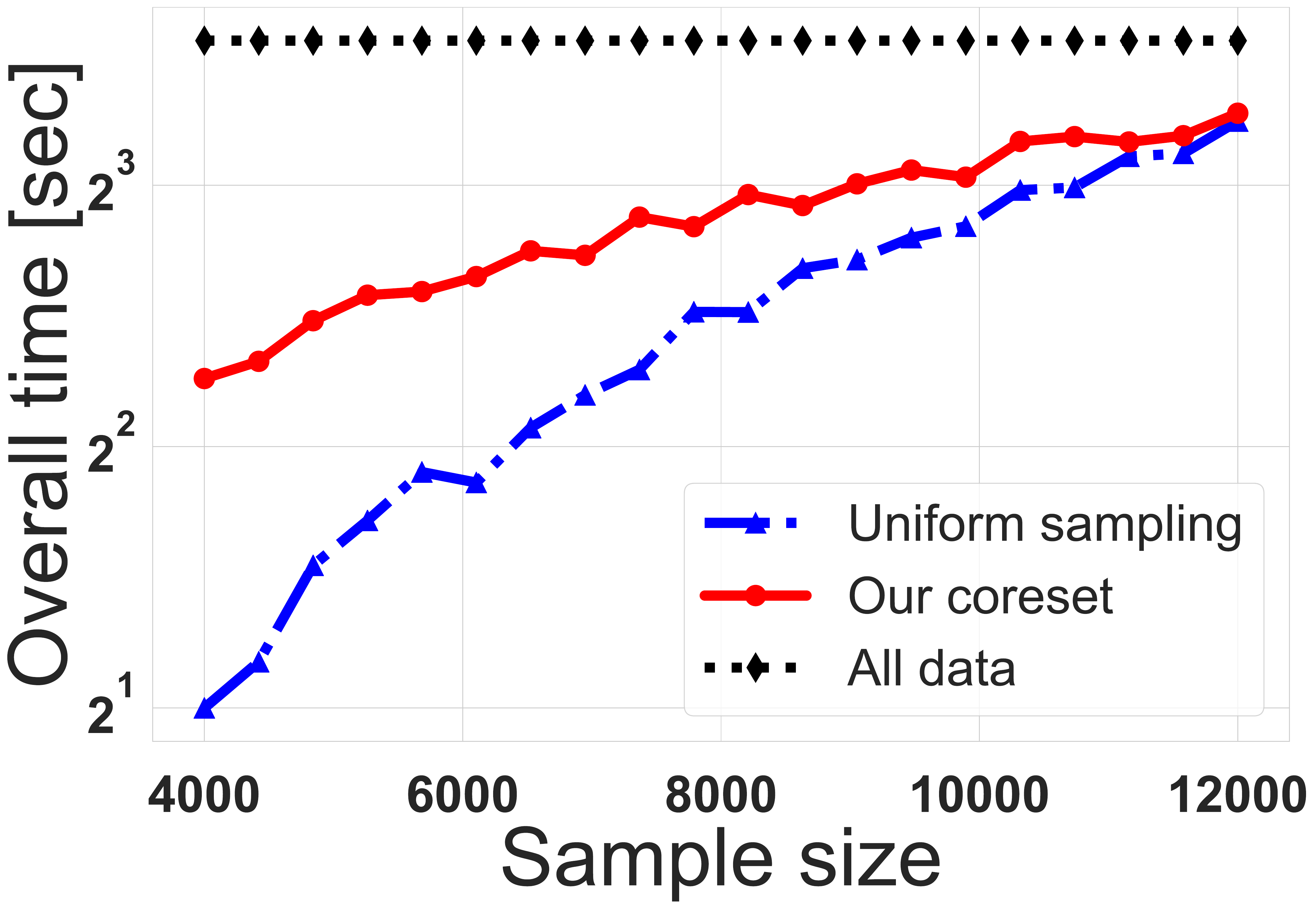}
\caption{Dataset (iv): Logistic regression}
\label{fig:w8alogistic}
\end{subfigure}

\begin{subfigure}[t]{0.49\textwidth}
\includegraphics[width=.49\textwidth]{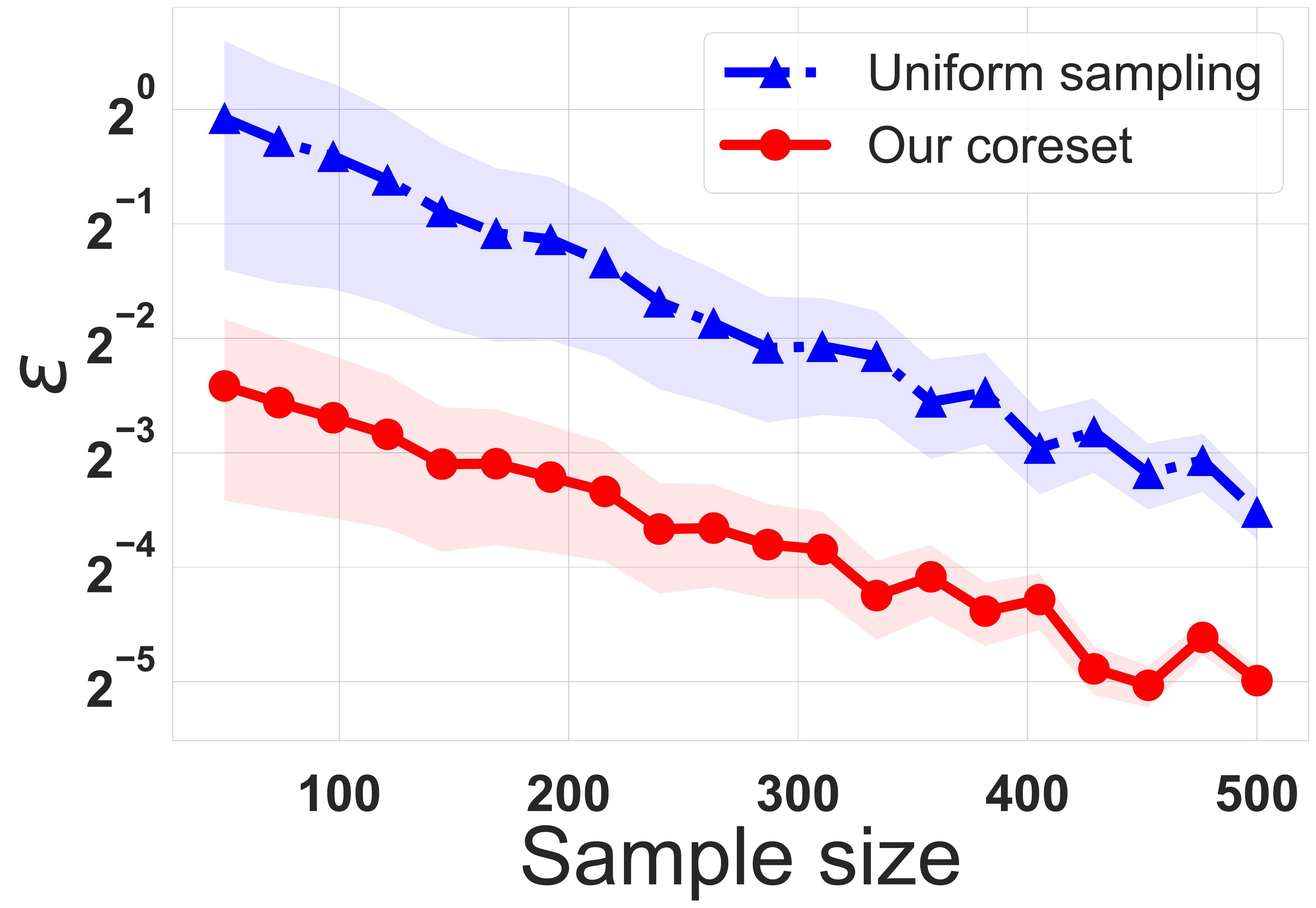}
\includegraphics[width=.49\textwidth]{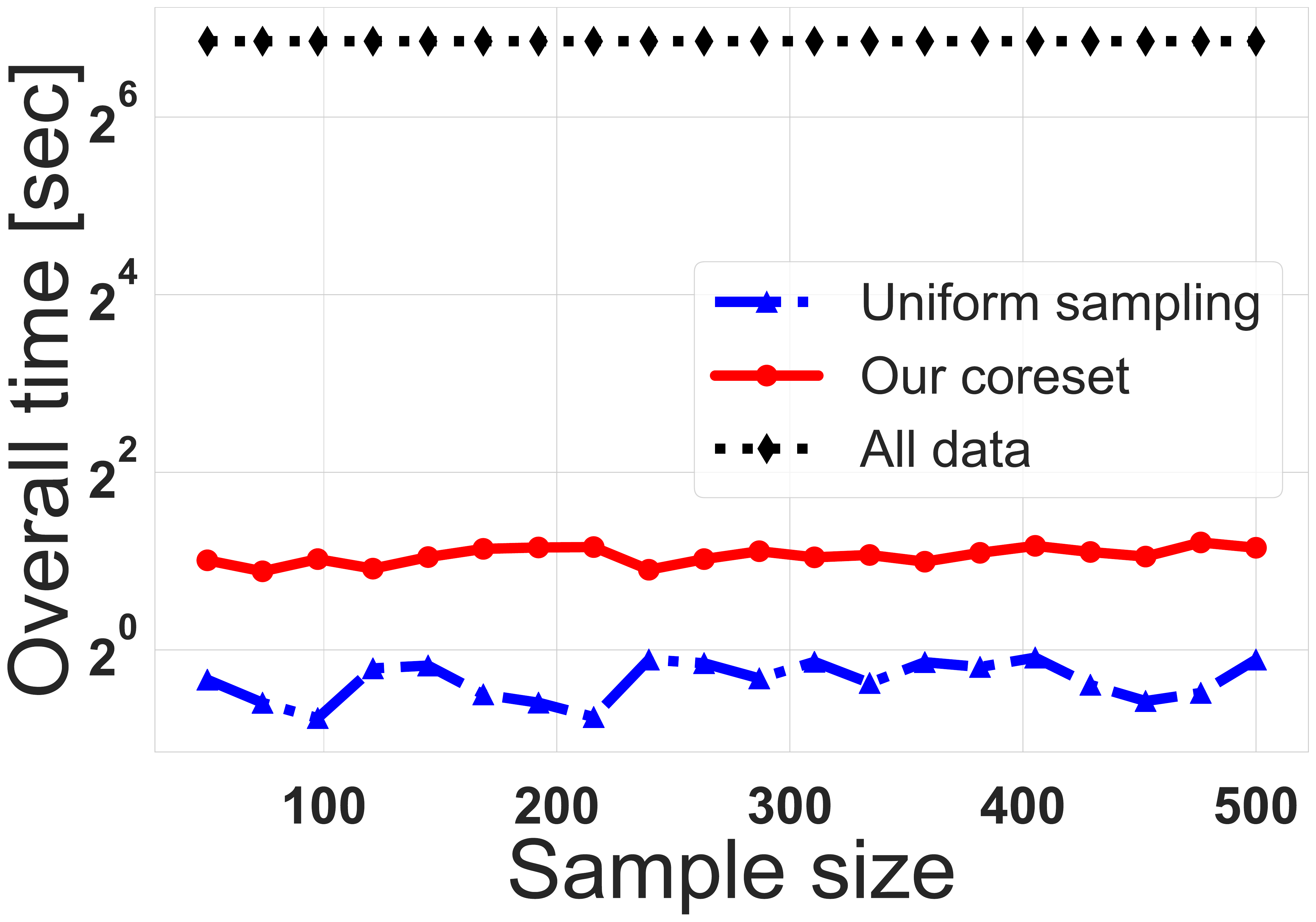}
\caption{Dataset (v): $\ell_{0.5}$-regression}
\label{fig:l05reg}
\end{subfigure}
\begin{subfigure}[t]{0.49\textwidth}
\includegraphics[width=.49\textwidth]{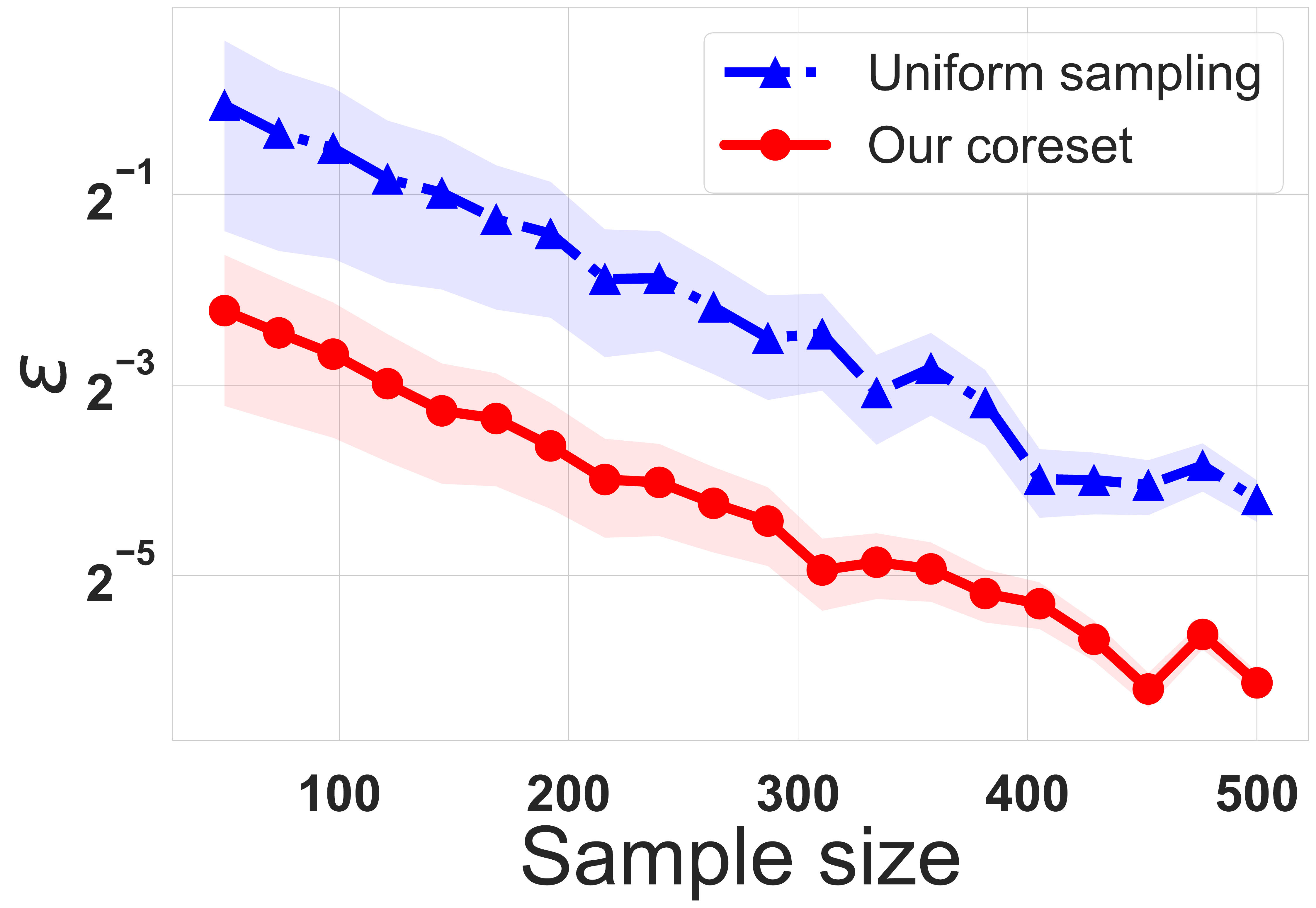}
\includegraphics[width=.49\textwidth]{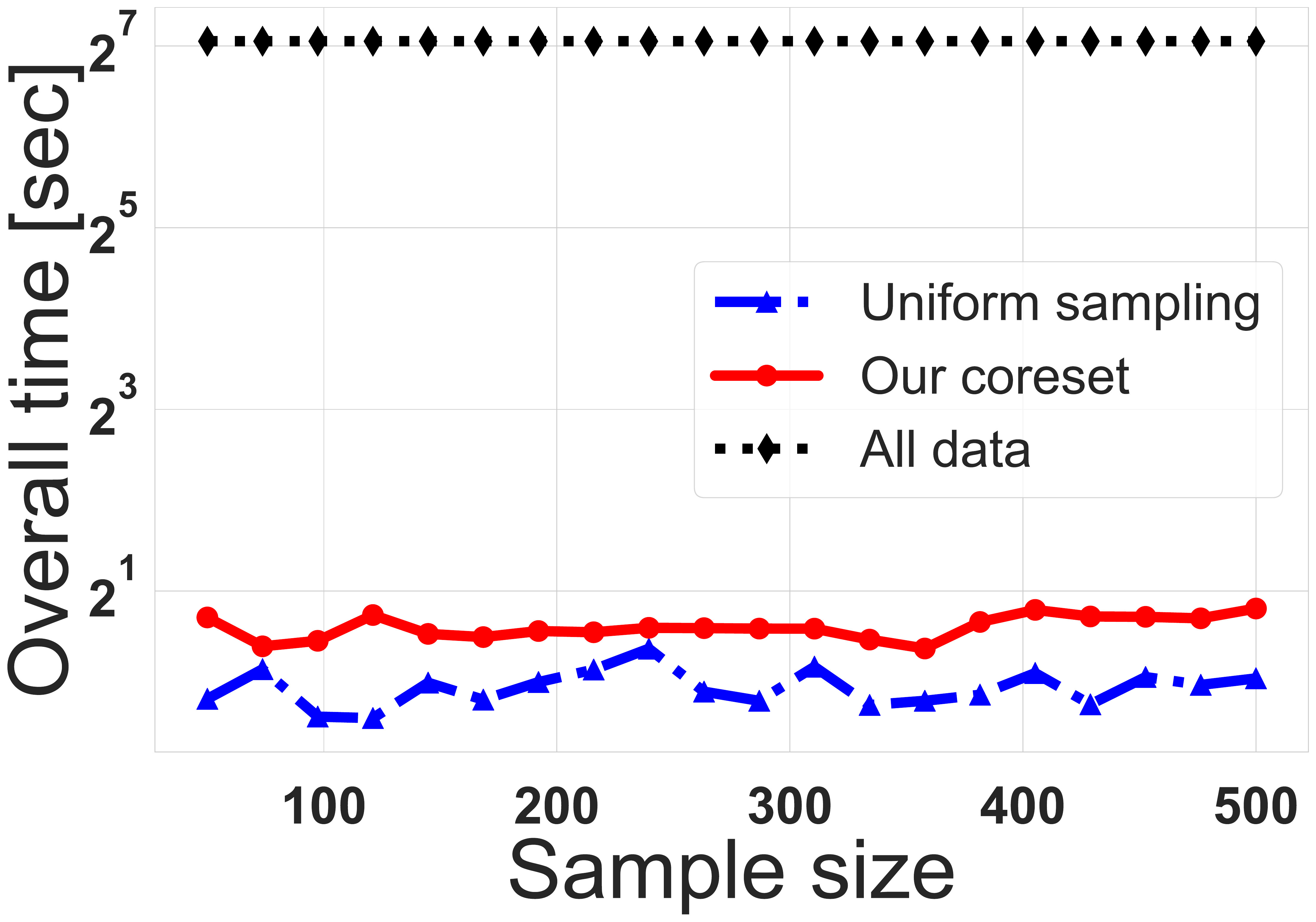}
\caption{Dataset (v): $\ell_{0.8}$-regression}
\label{fig:l08reg}
\end{subfigure}

\caption{Experimental results}\label{fig:vecsum}

\end{figure*}

\section{Conclusions and open problems}
\label{sec:conclutions}
In this paper, we have provided what we call the $f$-SVD of $P$ with respect a given near-convex loss function $f \in \set{F}$, as well as sensitivity bounding framework using the $f$-SVD. What interests us is to draw back forcing $f$ to have a centrally symmetric level set as well as embedding the center of the L\"{o}wner ellipsoid into the sensitivity bound. This is crucial step for generalizing the framework towards a much broader family of functions, e.g., loglog-Lipschitz functions~\cite{feldman2012data}. We are aware that for $\ell_z$-regression problems where $z \geq 1$, Lewis weights have been used by~\cite{cohen2015p} and are considered to be the state of the art coreset for these problems. We aim to generalize the applicability of Lewis weights and other sketching techniques towards different functions, and as far as we know, we consider the above issues to be open problems.
\bibliography{main}
\bibliographystyle{abbrv}

\appendix
\section{Generalization of our tools}
We first define the term \emph{query space} which will aid us in simplifying the proofs as well as the corresponding theorems.

\begin{Definition}[Query space]
\label{def:querySpace}
Let $P$ be a set of $n\geq1$ points in $\REAL^d$, $w : P \to [0,\infty)$ be a non-negative weight function, and let $f: P \times \Q \to [0,\infty)$ denote a loss function. The tuple $(P, w, \Q, f)$ is called a query space.
\end{Definition}

Our paper relies on using known theorems associated with convex loss functions to prove our technical results. Thus, for completeness we give a formal definition of a convex loss functions as follows.

\begin{Definition}[Convex loss function] \label{def:conv}
Let $P \subseteq \REAL^d$ be a set of $n$ points, and let $f : P \times \Q \to [0, \infty)$ be a loss function. We say that $f$ is a convex loss function if for every $p \in P$, $f(p,\cdot) : \Q \to [0, \infty)$ is a convex function i.e., for every $\theta \in [0,1]$ and every $x,y \in \Q$ $$f\left(p,\theta x + (1-\theta) y \right) \leq \theta f(p,x) + (1-\theta) f(p,y).$$
\end{Definition}

Below, we present a straightforward generalization of the properties in Definition~\ref{def:familyOfConvexFuncs}, is applied to grasp much more variety of functions, by taking the weights into account and not setting them to $1$ for every point in the input set of points as well as other properties.

\begin{Definition}[Generalization of Definition~\ref{def:familyOfConvexFuncs}]
\label{def:familyOfConvexFuncs_Gen}
Let $(P,w,\Q,f)$ be a query space, where $f : P \times \Q \to [0,\infty)$ is a loss function. We call $f$ a near-convex loss function if there exists a convex loss function $g : P \times \Q \to [0, \infty)$, a function $h : P \times \Q \to [0,\infty)$ and a scalar $z > 0$ that satisfies:
\begin{enumerate}[label=(\roman*)]

\item \label{prop:reductionToFam} There exist $c_1,c_2 \in (0,\infty)$ such that for every $p \in P$, and $x \in \Q$,
\[
c_1 \term{g(p,x)^z + h(p,x)^z} \leq f(p,x) \leq c_2 \term{ g(p,x)^z+ h(p,x)^z}.
\]

\item \label{prop:def} There exist $c_3,c_4 \in (0,\infty)$ such that for every $p \in P$, $x \in \Q$ and $b \in (0,\infty)$,
\[
c_3 b g(p,x) \leq g(p,bx) \leq c_4 b g(p,x).
\]

\item \label{prop:h_trait} There exists $c_5\in(0,\infty)$ such that for every $p \in P$ and $x \in \Q$, $$
\frac{w(p)h(p,x)^z}{\sum_{q \in P} w(q) h(p,x)^z} \leq \frac{c_5 w(p)}{\sum_{q \in P} w(q)}.
$$


\item \label{prop:defLvl} The set $\mathcal{X}_g = \br{x \in \Q \middle| \sum_{p \in P }  w(p)^{\maxArgs{1, \frac{1}{z}}} g(p,x)^{\maxArgs{1,z}} \leq 1}$ is centrally symmetric, i.e., for every $x \in \mathcal{X}_g$ we have $-x \in \mathcal{X}_g$, and there exist $R,r \in (0,\infty)$ such that
$
B(0_d,r) \subset \mathcal{X}_g \subset B(0_d,R),
$
where $B(x,y)$ denotes a ball of radius $y > 0$, centered at $x \in \Q$.
\end{enumerate}
We denote by $\set{F}$ the union of all functions $f$ with the above properties.
\end{Definition}

Due to such changes, we also give a generalization towards the definition of $f$-SVD, as in what follows.

\begin{Definition}[Generalization of Definition~\ref{def:svdWRTF}]
\label{def:svdWRTF_Gen}
Let $(P,w,\Q,f)$ be a query space, such that $f \in \set{F}$, and let $g,h,c_1,c_2,c_3,c_4,z$ be defined as in the context of Definition~\ref{def:familyOfConvexFuncs} with respect to $f$. Let $D,V \in \REAL^{d \times d}$ be a diagonal matrix and an orthogonal matrix respectively, and let $\alpha \in \Theta\left( \sqrt{d} \right)$ such that for every $x \in \Q$,
\[
c_1 \term{\term{c_3 \norm{DV^Tx}_2}^z + \sum_{p \in P} w(p) h(p,x)^z} \leq \sum_{p \in P} w(p) f(p,x),
\]
and
\[
\sum_{p \in P} w(p)^{\maxArgs{1, \frac{1}{z}}} g(p,x)^{\max\br{1, z}} \leq \term{c_4 \alpha \norm{DV^Tx}_2}^{\max\br{1, z}}.
\]
Let $U:P\to \Q$ such that $U(p) = \left( V D \right)^{-1} p$ for every $p\in P$. The tuple $\left( U,D,V\right)$ is the $f$-SVD of $(P,w)$.
\end{Definition}

\section{VC dimension}
\label{sec:appendix_VC}

\begin{definition}[VC-dimension~\cite{braverman2016new}]
\label{def:dimension}
For a query space $(P,w,\Q,f)$ and $r \in [0,\infty)$, we define
\[
\RANGES(x,r) = \br{p \in P \mid w(p) f(p,x) \leq r},
\]
for every $x \in \Q$ and $r \geq 0$. The dimension of $(P, w, \Q, f)$ is the size $\abs{S}$ of the largest subset $S \subset P$ such that
\[
\abs{\br{S \cap \RANGES(x,r) \mid x \in \Q, r \geq 0 }} = 2^{\abs{S}},
\]
where $\abs{ A }$ denotes the number of points in $A$ for every $A \subseteq \REAL^d$.
\end{definition}

\section{Existence of $f$-SVD factorization}
\begin{restatable}{lemma}{LownerToF}
\label{lem:LownerToF}
Let $(P,w,\Q,f)$ be a query space, such that $f \in \set{F}$. Let $g,h,c_1,c_2,c_3,c_4,z$ be defined as in the context of Definition~\ref{def:familyOfConvexFuncs_Gen} with respect to $f$, $\alpha \in \Theta \left( \sqrt{d} \right)$ and let $\beta =\maxArgs{1,z}$. Then there exists a diagonal matrix $D \in \REAL^{d \times d}$ and an orthogonal matrix $V \in \REAL^{d \times d}$ such  that for every $x \in \REAL^d$,
\begin{equation}\label{upper_bound}
\sum\limits_{p \in P} w(p)^{\maxArgs{1,\frac{1}{z}}} g(p,x)^{\beta} \leq \term{c_4 \alpha \norm{DV^Tx}_2}^{\beta},
\end{equation}
and
\begin{equation}\label{lower_bound}
c_1 \term{\term{c_3 \norm{DV^Tx}_2}^z + \sum_{p \in P} w(p) h(p,x)^z} \leq \sum_{p \in P} w(p) f(p,x).
\end{equation}
\end{restatable}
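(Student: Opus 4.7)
The plan is to extract the pair $(D,V)$ from the John--L\"owner ellipsoid of the centrally symmetric convex body $\mathcal{X}_g$ of Property~(iv), and then read off the two inequalities from the two standard containments that ellipsoid satisfies. Before invoking John, I will verify that $\mathcal{X}_g$ really is a centrally symmetric convex body: the defining function $x\mapsto \sum_{p\in P} w(p)^{\max\{1,1/z\}}g(p,x)^{\max\{1,z\}}$ is convex because each $g(p,\cdot)$ is a convex loss function (Definition~\ref{def:conv}) and $t\mapsto t^{\max\{1,z\}}$ is convex and nondecreasing on $[0,\infty)$, so the sublevel set is convex; central symmetry, boundedness by $B(0_d,R)$, and non-degeneracy by $B(0_d,r)\subseteq \mathcal{X}_g$ are given directly by Property~(iv). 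John's theorem for centrally symmetric convex bodies then produces an origin-centered ellipsoid $E$ with $\frac{1}{\sqrt{d}}E\subseteq \mathcal{X}_g\subseteq E$. Writing $E=\{x:x^TMx\le 1\}$ with $M\succ 0$ and using the spectral decomposition $M=VD^2V^T$ (with $V$ orthogonal and $D$ diagonal with positive entries) gives $E=\{x:\|DV^Tx\|_2\le 1\}$, and I take these $D$ and $V$ as the ones claimed by the lemma.

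For the upper bound~(\ref{upper_bound}), I will use the inner containment $\frac{1}{\sqrt{d}}E\subseteq \mathcal{X}_g$ together with a scaling argument. Given any $x\ne 0$, set $t=\sqrt{d}\,\|DV^Tx\|_2$ and $y=x/t$; then $\|DV^Ty\|_2=1/\sqrt{d}$, so $y\in\frac{1}{\sqrt{d}}E\subseteq \mathcal{X}_g$, hence $\sum_p w(p)^{\max\{1,1/z\}}g(p,y)^\beta \le 1$, where $\beta=\max\{1,z\}$. The upper half of Property~(ii) yields $g(p,x)=g(p,ty)\le c_4\, t\, g(p,y)$; raising to the $\beta$-th power and summing over $p$ gives
\[
\sum_{p\in P} w(p)^{\max\{1,1/z\}}g(p,x)^\beta \;\le\; (c_4 t)^\beta\sum_{p\in P} w(p)^{\max\{1,1/z\}}g(p,y)^\beta \;\le\; (c_4\,\alpha\,\|DV^Tx\|_2)^\beta
\]
with $\alpha=\sqrt{d}\in\Theta(\sqrt{d})$, which is exactly~(\ref{upper_bound}).

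For the lower bound~(\ref{lower_bound}), I will use the outer containment $\mathcal{X}_g\subseteq E$. For any $x\ne 0$ and small $\epsilon>0$, the point $y_\epsilon=(1+\epsilon)x/\|DV^Tx\|_2$ lies strictly outside $E$, hence strictly outside $\mathcal{X}_g$, so $\sum_p w(p)^{\max\{1,1/z\}}g(p,y_\epsilon)^\beta>1$. Applying Property~(ii) to bound $g(p,y_\epsilon)$ in terms of $g(p,x)$ and letting $\epsilon\downarrow 0$ produces a bound of the form $\sum_p w(p)^{\max\{1,1/z\}}g(p,x)^\beta \ge C\,\|DV^Tx\|_2^\beta$ for a constant $C$ built from $c_3, c_4$. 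When $z\ge 1$ this is already a lower bound on $\sum_p w(p)g(p,x)^z$ since $\max\{1,1/z\}=1$ and $\beta=z$. When $z\in(0,1)$ the exponents do not match ($\beta=1$, $\max\{1,1/z\}=1/z$), so I raise the resulting bound $\sum_p w(p)^{1/z}g(p,x)\ge C\,\|DV^Tx\|_2$ to the $z$-th power and invoke the elementary subadditivity $\sum_p a_p^z\ge(\sum_p a_p)^z$ for $a_p\ge 0$ and $z\in(0,1]$, applied with $a_p=w(p)^{1/z}g(p,x)$, to conclude $\sum_p w(p)g(p,x)^z\ge C^z\,\|DV^Tx\|_2^z$. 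Combining with the lower half of Property~(i), $c_1(g(p,x)^z+h(p,x)^z)\le f(p,x)$, summed over $p$, then yields~(\ref{lower_bound}), after a harmless global rescaling of $D$ to align the leading constant with $c_3$ (such a rescaling preserves orthogonality of $V$, positivity of $D$, and the two ellipsoidal containments up to constants). The main obstacle is the careful tracking of the $c_3, c_4$ factors produced by the approximate homogeneity in Property~(ii): they enter both the scaling steps and the subadditivity conversion, and making the final constants match the $c_3$ and $c_4\alpha$ of Definition~\ref{def:svdWRTF_Gen} hinges on absorbing residual factors into $D$ without disturbing the John-ellipsoid geometry.
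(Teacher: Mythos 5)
Your proposal is correct and follows essentially the same route as the paper: take the L\"{o}wner (John) ellipsoid of the centrally symmetric convex level set $\mathcal{X}_g$, extract $D$ and $V$ from its quadratic form, use the inner containment plus the near-homogeneity of $g$ (Property~(ii) of Definition~\ref{def:familyOfConvexFuncs_Gen}) for the upper bound, and the outer containment for the lower bound, combined with Property~(i); your explicit superadditivity step $\sum_p a_p^z \geq (\sum_p a_p)^z$ for $z\in(0,1)$ even fills in a conversion the paper leaves implicit. One small streamlining: in the lower bound, writing $x$ as a scalar multiple of the point on (or just outside) the boundary and applying the \emph{lower} half of Property~(ii), i.e.\ $g(p,x)\geq c_3 b^{-1} g(p,y_\epsilon)$, yields the constant $c_3$ directly—this is what the paper does—so the final rescaling of $D$ (which silently inflates the upper-bound constant and is only harmless because $\alpha$ is flexible within $\Theta(\sqrt{d})$) is unnecessary.
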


\begin{proof} We prove Lemma~\ref{lem:LownerToF} using L\"{o}wner ellipsoid; See~\cite{john2014extremum}.
\paragraph{Using L\"{o}wner ellipsoid.} Let $\mathcal{X}_g = \br{x \in \Q \middle| \term{\sum_{p \in P}  w(p)^{\maxArgs{1, \frac{1}{z}}} g(p,x)^{\beta}}^{\frac{1}{\beta}} \leq 1}$. Since $f\in F$ (see Definition~\ref{def:familyOfConvexFuncs_Gen}), and $g$ is convex, we have that \begin{enumerate*}[label=(\roman*)]
\item $\set{X}_{g}$ is a convex set, and
\item $\set{X}_{g}$ is centrally symmetric.
\end{enumerate*}
Then by Theorem \rom{3} of~\cite{john2014extremum}, there exists an ellipsoid $E$, known as the L\"owner ellipsoid that is centered at the origin $0_d$, such that
\begin{equation}\label{lowner}
\frac{1}{\sqrt{d}}E\subseteq X_{g} \subseteq E,
\end{equation}
where $\frac{1}{\sqrt{d}}E$ denotes the set $\br{\frac{1}{\sqrt{d}}x \mid x\in E}$.

By combining Property~\ref{prop:defLvl} of Definition~\ref{def:familyOfConvexFuncs_Gen} with~\eqref{lowner}, there exists $r \in (0,\infty)$ such that $B(0_d,r) \subseteq \mathcal{X}_g \subseteq E$. Since $B(0_d,r) \subseteq E$, then
$E$ is an ellipsoid where each of its axes has positive length. By that, there exists a diagonal matrix $D \in \REAL^{d \times d}$ of positive entries and an orthogonal matrix $V \in \REAL^{d \times d}$ such that,
\begin{enumerate*}[label=(\roman*)]
\item $E = \br{y\in \Q \mid \norm{DV^Ty}_2 \leq 1}$, and \label{Edef}
\item $V^T D D V$ is a positive semi-definite matrix.
\end{enumerate*}

Put $x \in \Q$ and now we proceed to derive the bounds.

\paragraph{Proving~\eqref{upper_bound}.} Let $y = \frac{1}{\norm{DV^Tx}_2}x$. By the definition of $E$ in~\ref{Edef} and the definition of $y$, we have that $y \in E$, and by combining~\eqref{lowner} with the assumption that $\alpha \in \Theta\left(\sqrt{d}\right)$ we obtain that $y \in E \subseteq \alpha X_{g}$.
Then
\begin{equation}\label{bounding}
\frac{1}{\alpha} y \in \frac{1}{\alpha}E \subseteq X_{g},
\end{equation}
which consequently leads to
\begin{equation}
\label{eq:bounding_y}
\sum_{p \in P}  w(p)^{\maxArgs{1, \frac{1}{z}}} g\term{p,\frac{1}{\alpha} y}^{\beta} \leq 1,
\end{equation}
where the inequality holds by Property~\ref{prop:defLvl} of Definition~\ref{def:familyOfConvexFuncs_Gen} with respect to $g$.
Hence,
\begin{equation}
\label{eq:upperbound_Gx}
\begin{split}
\sum\limits_{p \in P} w(p)^{\maxArgs{1,\frac{1}{z}}} g(p,x)^{\beta} &\leq \term{c_4\alpha \norm{DV^Tx}_2}^\beta \sum\limits_{p \in P} w(p)^{\maxArgs{1,\frac{1}{z}}} g\term{p,\frac{x}{\alpha\norm{DV^Tx}_2}}^{\beta} \\
&\leq \term{c_4 \alpha \norm{DV^Tx}_2}^{\beta},
\end{split}
\end{equation}
where the first inequality is by substituting $b := \alpha \norm{DV^Tx}_2$ and $x := \frac{x}{b}$ in Property~\ref{prop:def} of $g$ (see Definition~\ref{def:familyOfConvexFuncs_Gen}), and the second inequality is by combining the fact that $y=\frac{1}{\norm{DV^Tx}_2}x$ with~\eqref{eq:bounding_y}.

\paragraph{Proving~\eqref{lower_bound}.}
Let $b^\prime = \alpha\norm{DV^Tx}_2$. By~\eqref{eq:bounding_y}, we get that $\sum_{p \in P} w(p)^{\maxArgs{1,\frac{1}{z}}} g\term{p,\frac{1}{b^\prime}x}^{\beta} \geq 1$. In addition, Property~\ref{prop:defLvl} of Definition~\ref{def:familyOfConvexFuncs_Gen} states that every vector in $\Q$ of norm $r$ is inside $\set{X}_g$. Thus, there exists $b \geq b^\prime$ such that $\sum_{p \in P} w(p)^{\maxArgs{1,\frac{1}{z}}} g\term{p,\frac{1}{b}x}^{\beta} = 1$.

By~\eqref{lowner}, we have that $\norm{DV^Tx}_2 = b \norm{DV^Tz}_2 \leq b$ where $z = \frac{1}{b}x$. Hence, by plugging $x := z$ and $b := b$ in Property~\ref{prop:def} of Definition~\ref{def:familyOfConvexFuncs_Gen}, we obtain that
\begin{equation}
\label{eq:lowerbound_Gx}
\sum_{p \in P} w(p)^{\maxArgs{1,\frac{1}{z}}} g\term{p,x}^{\beta} \geq c_3 b \sum_{p \in P} w(p)^{\maxArgs{1,\frac{1}{z}}} g\term{p, z}^{\beta} = \term{c_3b}^\beta \geq \term{c_3\norm{DV^Tx}_2}^\beta.
\end{equation}

By combining Property~\ref{prop:reductionToFam} of Definition~\ref{def:familyOfConvexFuncs_Gen}, with~\eqref{eq:upperbound_Gx} and~\eqref{eq:lowerbound_Gx}, Lemma~\ref{lem:LownerToF} holds.
\end{proof}

\section{Extension towards Streaming and distributed settings}
\label{sec:appendix_Stream}
\begin{algorithm}[!htb]
\caption{$\streamCore(P,w,f,l,\eps,\delta)$\label{alg:streamAlg}}
\label{algorithmStreams}
{\begin{tabbing}
\textbf{Input:} \quad\quad\= A set $P \subseteq \REAL^{d}$ of $n$ points, a weight function $w: \REAL^d \to [0,\infty)$, a leaf size $\ell > 0$\\
\>a convex loss function function $f : \REAL^d \to [0,\infty)$, an error parameter $\eps \in (0,1)$,\\\>and probability $\delta\in(0,1)$.\\
\textbf{Output:} \>A pair $(S,v)$ which is an $\left(h\right)$-coreset for $(P,\Q,X,f)$,\\
\>with probability of at least $1-\delta h$.
\end{tabbing}}
\vspace{-0.3cm}
$B_i \gets \emptyset$ for every $1 \leq i \leq \infty$ \label{alg2:L1} \\
$h \gets 1$ \label{alg2:L2}\\
\For{\label{alg2:L3} each  set $Q$ of consecutive $2\ell$ points from $P$}{
    $(T,v) := \coreset(Q, w, f, \frac{\eps}{2\log{n}},\frac{\delta}{2\log{n}})$ \label{alg2:LCore1}\\
    $j \gets 1$\\
    $B_j := B_j \cup (T,v)$ \label{alg2:L4} \\
    \For{\label{alg2:L5} each $j \leq h$}{
    	\While{$\abs{B_j} \geq 2$}{
        	$(T_1,v_1) , (T_2,v_2) := $ pop first pair of consecutive items in $B_j$ \label{alg2:LCore2}\\
        	For every $p \in T_1 \cup T_2$, set $v'(p) := \begin{cases}
        	v_1(p) & p \in T_1,\\
        	v_2(p) & \text{Otherwise}
        	\end{cases}$ \\
        	$(T,v) := \coreset(T_1 \cup T_2, v', f, \frac{\eps}{2\log{n}},\frac{\delta}{2\log{n}})$\\
            $B_{j+1} := B_{j+1} \cup (T,v)$\label{alg2:L6}\\
            $h := \max\br{h, j+1}$ \label{alg2:L7}\\
       	}
    }
}
$(S,v) := B_h$ \label{alg2:L8}\\
\Return $(S,v)$ \label{alg2:L9}
\end{algorithm}

Algorithm~\ref{alg:mainAlg} can be easily extended towards streaming and distributed settings as presented at Algorithm~\ref{alg:streamAlg}.
At the beginning, the data arrives in a streaming fashion, e.g. in batches, where our coreset scheme (see Algorithm~\ref{alg:mainAlg}) is applied on each of these batches. When we have two $\eps$-coresets in memory, we merge them and an $\eps$-coreset is constructed upon their merge. This procedure is done until \begin{enumerate*}[label=(\roman*)] \item there is no points left in the stream and \item there is exactly one coreset left in memory. \end{enumerate*}

Algorithm~\ref{alg:streamAlg} begins with initializing the batches to an empty sets as well as setting the height of the tree to $1$; see lines~\ref{alg2:L1}--\ref{alg2:L2}. In what follows, for each $2l$ of streamed points, we generate an $\eps$-coreset on this set as presented at lines~\ref{alg2:L3}--\ref{alg2:L4}. Lines~\ref{alg2:L5}--\ref{alg2:L7} depict the core of the merge-and-reduce tree, which is the binary tree building fashion from the leaves (the incoming batches) towards the root of the tree. Finally, we return the root of the tree as shown at lines~\ref{alg2:L8}--\ref{alg2:L9}.
For much broader and detailed explanation regarding the merge-and-reduce tree, we refer the reader towards~\cite{braverman2016new}.

\subsection{From sublinear to poly-logarithmic coreset size}
\label{sec:thm-polylog-coreset-merge-reduce-proof}

\begin{restatable}[Variant of Lemma 4, ~\cite{tukan2020coresets}]{lemma}{mergeandreduce}
\label{thm:polylog-coreset-merge-reduce}
Let $P \subseteq \REAL^d$ be a set of $n$ points, and let $f \in \set{F}$ be a near-convex loss function. Let $\eps \in \left[ \frac{1}{\log{n}},\frac{1}{2} \right]$, $\delta \in \left[\frac{1}{\log{n}},1 \right)$ and let $t$ denote the total sensitivity from Lemma~\ref{thm:sensitivityBound}. Suppose that there exists some $\beta \in (0.1, 0.8)$ such that $t \in \Theta(n^\beta)$ and let $\ell \geq 2^{\frac{\beta}{1-\beta}}$. Let $(S,v)$ be the output of a call to $\streamCore(P,w,f,\ell,\eps,\delta)$. Then $(S,v)$ is an $\eps$-coreset of size
\[
\abs{S} \in  \left( \log{n} \right)^{\bigO{1}}.
\]
\end{restatable}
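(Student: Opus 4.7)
The plan is to analyze the binary merge-and-reduce tree built by Algorithm~\ref{alg:streamAlg}. The tree has leaves holding coresets of batches of $2\ell$ consecutive points of $P$, and each internal node holds a coreset, produced by a call to \coreset, of the weighted union of its two children; every such call uses parameters $\eps' = \eps/(2\log n)$ and $\delta' = \delta/(2\log n)$. The tree has height $h \leq \log n$ and at most $\bigO{n/\ell}$ internal nodes.

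For correctness, I would invoke the standard composition property of coresets: the weighted union of an $\eps_1$-coreset and an $\eps_2$-coreset is a $\max\{\eps_1,\eps_2\}$-coreset of the combined set, and any $\eps_3$-coreset of that union is an $\bigl((1+\eps_3)(1+\max\{\eps_1,\eps_2\})-1\bigr)$-coreset of the combined set. Applying this inductively across $h$ levels with per-level accuracy $\eps'$ gives a total multiplicative distortion of at most $(1+\eps')^h - 1 \leq \eps$, using $h \leq \log n$ and $\eps \leq 1/2$. A union bound over the $\bigO{n/\ell}$ invocations of \coreset, each failing with probability at most $\delta'$, controls the overall failure probability by $\delta$.

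The main obstacle is the size bound, which reduces to solving a recursion along the levels of the tree. By Theorem~\ref{thm:runTime} and the hypothesis $t(N) \in \Theta(N^\beta)$, one invocation of \coreset on $N$ points returns a coreset of size $m(N) \le C_0 \cdot N^\beta \cdot (\log n)^{\bigO{1}}$, after absorbing the $1/\eps'^{\,2}$ factor, the VC-dimension factor $d'$, and $\log(1/\delta')$ into the polylogarithmic term (each of which is polylogarithmic in $n$). Thus the leaf-level size is $m_0 \le C_0 (2\ell)^\beta (\log n)^{\bigO{1}}$, and walking up the tree gives the recursion $m_{j+1} \leq C_0 \, (2 m_j)^\beta \, (\log n)^{\bigO{1}}$.

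The map $\Phi(m) = C_0 (2m)^\beta (\log n)^{\bigO{1}}$ is a contraction toward its unique positive fixed point
\[
m^* = \bigl( 2^\beta C_0 (\log n)^{\bigO{1}} \bigr)^{1/(1-\beta)} = (\log n)^{\bigO{1}},
\]
because $\beta \in (0.1,0.8)$ is bounded away from $1$, so $1/(1-\beta)$ is a constant exponent. The leaf-size condition $\ell \geq 2^{\beta/(1-\beta)}$ is precisely what keeps $m_0$ within a constant multiple of $m^*$; monotonicity of $\Phi$ then propagates $m_j \leq m^*$ through every subsequent level, yielding the claimed final bound $m_h \leq m^* = (\log n)^{\bigO{1}}$. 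The delicate step is verifying that none of the polylogarithmic factors accumulated along the way—from $\eps' = \eps/(2\log n)$, from the VC-dimension contribution, and from iterating $h$ times—blow up after being raised to the $1/(1-\beta)$ power; this holds precisely because $\beta$ is a constant bounded away from $1$.
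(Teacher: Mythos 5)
Your proposal follows essentially the same route as the paper's proof. The accuracy argument is identical in substance: rescale the error to $\eps'=\eps/(2\log n)$ and compound over the height $h\le\log n$ of the merge-and-reduce tree (the paper does this via $(1+\eps')^{\log n}\le e^{\eps/2}\le 1+\eps$ and the analogous lower bound). The size argument is also the same recursion the paper solves: a node that merges two child coresets of size $m$ outputs a coreset of size roughly $(2m)^\beta$ up to factors polylogarithmic in $n$, and the condition $\ell\ge 2^{\beta/(1-\beta)}$ guarantees the output never exceeds half the merged input, which the paper verifies level by level via $2^{\sum_{j=1}^{i}\beta^j}\ell^{\beta^i}\le 2^{\sum_{k=1}^{i-1}\beta^k}\ell^{\beta^{i-1}}$; your fixed-point/contraction phrasing, with the $1/(\eps')^2$, $d'$ and $\log(1/\delta')$ factors absorbed into $(\log n)^{O(1)}$ and the exponent $1/(1-\beta)$ bounded because $\beta\le 0.8$, is a slightly more careful packaging of the same computation (both proofs implicitly use that the total sensitivity of every node's input scales as its cardinality to the power $\beta$).

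The one step that does not hold as written is your failure-probability accounting: with $\delta'=\delta/(2\log n)$ per call and $\Theta(n/\ell)$ calls to $\coreset$ in the tree (recall $\ell$ is a constant), the union bound gives failure probability $\Theta\!\left(\frac{n\delta}{\ell\log n}\right)$, not $\delta$. To make a union bound over all nodes work you would need $\delta'$ inversely proportional to the number of nodes, e.g.\ $\delta'=\delta\ell/(2n)$, which only changes $\log(1/\delta')$ to $O(\log(n/\delta))$ and hence leaves the $(\log n)^{O(1)}$ size bound intact given $\delta\ge 1/\log n$. The paper itself sidesteps this by compounding the success probability only along the height, $(1-\delta')^{\log n}\ge 1-\delta/2$, and the lemma statement asserts no probability bound, so the slip does not affect the claimed conclusion --- but the union-bound sentence as you wrote it is arithmetically false and should be repaired as above.
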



\begin{proof}
First we note that using Theorem~\ref{thm:runTime} on each node in the merge-and-reduce tree, would attain that the root of the tree, i.e., $(S,v)$ attains that for every $w$
\[
(1-\eps)^{\log{n}} \sum\limits_{p \in P} w(p) f(p,x) \leq \sum\limits_{p \in S} v(p) f(p,x)\leq (1+\eps)^{\log{n}} \sum\limits_{p \in P} w(p) f(p,x),
\]
with probability at least $(1-\delta)^{\log{n}}$.

We observe by the properties of the natural number $e$,
\begin{equation*}
(1+\eps)^{\log{n}} = \left(1 + \frac{\eps \log{n}}{\log{n}} \right)^{\log{n}} \leq e^{\eps\log{n}},
\end{equation*}
which when replacing $\eps$ with $\eps^\prime = \frac{\eps}{2 \log{n}}$ in the above inequality as done at Lines~\ref{alg2:LCore1} and~\ref{alg2:LCore2} of Algorithm~\ref{algorithmStreams}, we obtain that
\begin{align}
    (1+\eps^\prime)^{\log{n}} \leq e^{\frac{\eps}{2}} \leq 1 + \eps,
\end{align}
where the second inequality holds since $\eps \in [\frac{1}{\log{n}}, \frac{1}{2}]$.

As for the lower bound, observe that
\[
(1-\eps)^{\log{n}} \geq 1 - \eps\log{n},
\]
where the inequality holds since $\eps \in [\frac{1}{\log{n}}, \frac{1}{2}]$.

Hence,
\[
(1-\eps^\prime)^{\log{n}} \geq 1 - \eps^\prime\log{n} = 1 - \frac{\eps}{2} \geq 1 - \eps.
\]

Similar arguments holds also for the failure probability $\delta$. What is left for us to do is setting the leaf size which will attain us an $\eps$-coreset of size poly-logarithmic in $n$ (the number of points in $P$).

Let $\ell \in (0,\infty)$ be the size of a leaf in the merge-and-reduce tree. We observe that a coreset of size poly-logarithmic in $n$, can be achieved by solving the inequality
\begin{align*}
\frac{2\ell}{2} \geq (2\ell)^\beta,
\end{align*}
which is invoked when ascending from any two leafs and their parent node at the merge-and-reduce tree.

Rearranging the inequality, we yield that
\begin{align*}
\ell^{1-\beta} \geq 2^\beta.
\end{align*}

Since $\ell \in (0,\infty)$, any $\ell \geq \sqrt[1-\beta]{2^\beta}$ would be sufficient for the inequality to hold. What is left for us to do, is to show that when ascending through the merge-and-reduce tree from the leaves towards the root, each parent node can't be more than half of the merge of it's children (recall that the merge-and-reduce tree is built in a binary tree fashion, as depicted at Algorithm~\ref{algorithmStreams}).

Thus, we need to show that,
\begin{align*}
2^{\sum\limits_{j=1}^{i} \beta^j} \cdot \ell^{\beta^i} \leq \frac{2^{\sum\limits_{k=0}^{i-1} \beta^k} \cdot \ell^{\beta^{i-1}}}{2} = 2^{\sum\limits_{k=1}^{i-1} \beta^k} \cdot \ell^{\beta^{i-1}},
\end{align*}
holds, for any $i \in \left[\ceil{\log{n}} \right]$ where $\log{n}$ is the height of the tree. Note that the left most term is the parent node's size and the right most term represents half the size of both parent's children nodes.

In addition, for $i = 1$, the inequality above represents each node which is a parent of leaves. Thus, we observe that for every $i \geq 1$, the inequality represents ascending from node which is a root of a sub-tree of height $i-1$ to it's parent in the merge-and-reduce tree.

By simplifying the inequality, we obtain the same inequality which only addressed the leaves. Hence, by using any $\ell \geq 2^{\frac{\beta}{1-\beta}}$ as a leaf size in the merge and reduce tree, we obtain an $\eps$-coreset of size poly-logarithmic in $n$.
\end{proof}

\section{Proofs for the Main Theorems}
\label{sec:appendix_proofs}

Throughout this section, we will present generalized versions of the lemmata and theorems that are presented at Section~\ref{sec:method} and Section~\ref{sec:analysis}.

\setcounter{theorem}{17}
\subsection{Generalization of Lemma~\ref{thm:sensitivityBound}}
\begin{lemma}[Equivalence of norms,~\cite{rudin1991functional}]
\label{thm:equivalenceNorms}
Let $a,b > 0$ such that $a \leq b$. Then for every $x \in \REAL^d$,
\[
\norm{x}_b \leq \norm{x}_a \leq d^{\frac{1}{a} - \frac{1}{b}} \norm{x}_b.
\]
\end{lemma}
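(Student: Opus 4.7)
The plan is to prove the two inequalities separately, using standard arguments for $\ell_p$-norm monotonicity and Hölder's inequality. Since this is a classical result (hence the citation to Rudin), the proof should be essentially bookkeeping, but I still want to lay out the key ideas.

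For the left inequality $\norm{x}_b \leq \norm{x}_a$ when $a \leq b$, I would exploit homogeneity: both sides scale the same way in $x$, so it suffices to prove the statement under the normalization $\norm{x}_a = 1$. Under that normalization every coordinate satisfies $|x_i| \leq 1$, and since $b \geq a$ this gives $|x_i|^b \leq |x_i|^a$ coordinate-wise. Summing over $i \in [d]$ yields $\norm{x}_b^b \leq \norm{x}_a^a = 1$, hence $\norm{x}_b \leq 1 = \norm{x}_a$. Rescaling $x$ recovers the general case.

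For the right inequality $\norm{x}_a \leq d^{1/a - 1/b}\norm{x}_b$, I would apply Hölder's inequality to the product of $|x_i|^a$ and $1$, with conjugate exponents $b/a$ and $b/(b-a)$ (the special case $a=b$ being trivial, so assume $a<b$):
\[
\sum_{i=1}^d |x_i|^a \cdot 1 \leq \left(\sum_{i=1}^d |x_i|^{b}\right)^{a/b}\left(\sum_{i=1}^d 1^{b/(b-a)}\right)^{(b-a)/b} = \norm{x}_b^{a} \cdot d^{(b-a)/b}.
\]
Taking the $a$-th root of both sides and simplifying the exponent of $d$ as $\frac{b-a}{ab} = \frac{1}{a} - \frac{1}{b}$ gives the desired bound.

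The only thing that could even be called an obstacle is handling the boundary/degenerate cases cleanly: if $a = b$ the inequality is an equality and there is nothing to prove, and if $x = 0$ both sides are zero, so one can safely assume $a < b$ and $x \neq 0$ throughout. No other subtlety arises, and the two inequalities together yield the stated equivalence of norms.
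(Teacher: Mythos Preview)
Your proof is correct and standard. The paper itself does not supply a proof of this lemma; it is stated with a citation to Rudin and used as a black box, so there is no paper-side argument to compare against. Your normalization argument for the left inequality and H\"older application for the right inequality are exactly the textbook route, and you correctly handled the degenerate cases $a=b$ and $x=0$.
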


\begin{restatable}{claim}{pRootSum}[Result of H\"{o}lder's Inequality]
\label{lem:pRootSum}
Let $\br{a_i}_{i=1}^n$ be a set of $n$ non-negative numbers, $z \in (0,1)$ be a real number. Then
\[
\sum\limits_{i=1}^n |a_i|^z \leq n^{1-z} \left(
\sum\limits_{i=1}^n |a_i| \right)^z.
\]
\end{restatable}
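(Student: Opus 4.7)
The plan is to apply H\"older's inequality in its discrete form with a carefully chosen pair of conjugate exponents. Concretely, I would write $\sum_{i=1}^n |a_i|^z$ as $\sum_{i=1}^n |a_i|^z \cdot 1$ and view the sum as a pairing between the sequences $(|a_i|^z)_{i=1}^n$ and $(1,\ldots,1)$. Set $p = 1/z$, so that $p > 1$ since $z \in (0,1)$, and let $q = 1/(1-z)$ be its H\"older conjugate, i.e., $1/p + 1/q = 1$.

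Applying H\"older's inequality to these two sequences gives
\[
\sum_{i=1}^n |a_i|^z \cdot 1 \;\leq\; \left(\sum_{i=1}^n \bigl(|a_i|^z\bigr)^{1/z}\right)^{z} \left(\sum_{i=1}^n 1^{1/(1-z)}\right)^{1-z} \;=\; \left(\sum_{i=1}^n |a_i|\right)^{z} \cdot n^{1-z},
\]
which is exactly the claimed bound. The only real bookkeeping is checking that the chosen exponents are admissible (i.e., both strictly greater than $1$), which is immediate from the hypothesis $z \in (0,1)$.

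As an alternative I could invoke Jensen's inequality applied to the concave function $\phi(t) = t^z$ on $[0,\infty)$ (concavity holds precisely because $z \in (0,1)$): averaging yields $\tfrac{1}{n}\sum_i |a_i|^z \le \bigl(\tfrac{1}{n}\sum_i |a_i|\bigr)^z$, and multiplying by $n$ produces the factor $n^{1-z}$ on the right-hand side. Either route is essentially one line; I would prefer the H\"older version since the claim is labelled as a consequence of H\"older's inequality. There is no substantive obstacle here\,---\,the only thing to guard against is a sign or exponent typo when identifying the conjugate pair, and the edge cases $a_i = 0$ are handled automatically since all terms are non-negative.
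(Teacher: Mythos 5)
Your Hölder argument is correct and is essentially identical to the paper's own proof, which also pairs the sequence $\left(|a_i|^z\right)_{i=1}^n$ with the all-ones vector and applies Hölder's inequality with the conjugate exponents $1/z$ and $1/(1-z)$, yielding the factor $n^{1-z}$ from the norm of the all-ones vector. The Jensen alternative you sketch is also valid, but no comparison is needed since your primary route matches the paper's.
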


\begin{proof}
Let $z^\prime = \frac{1}{z}$ and for every $i \in [n]$, let $\hat{a_i} = |a_i|^z$. Let $e \in \left[ 1\right]^n$. We have
\[
\sum\limits_{i=1}^n |a_i|^z = \sum\limits_{i=1}^n \hat{a_i} \leq \norm{e}_{\frac{1}{1-z}} \left(\sum\limits_{i=1}^n \hat{a_i}^{z^\prime} \right)^{1/{z}^\prime} = n^{1-z}\left(\sum\limits_{i=1}^n \abs{a_i}\right)^z,
\]
where the first and last equalities are by definition of $\hat{a_i}$, and the inequality is by H\"{o}lder's inequality.
\end{proof}

\begin{lemma}
\label{thm:sensitivityBound:appendix}
Let $(P,w, \Q, f)$ be a query space (see Definition~\ref{def:querySpace}) such that $f \in \set{F}$ as in Definition~\ref{def:familyOfConvexFuncs}. Let $g,h,c_1,c_2,c_3,c_4,c_5,z$ be defined as in the context of Definition~\ref{def:familyOfConvexFuncs} with respect to $f$, $(U,D,V)$ be the $f$-SVD of $(P,w)$, and let $\alpha \in \Theta\term{\sqrt{d}}$ which satisfies the conditions in Definition~\ref{def:svdWRTF}. Suppose that there exists a set of $d$ unit vectors $\br{v_j}_{j=1}^d$ and $c \in (0,\infty)$, such that for every unit vector $y \in \REAL^d$ and $p \in P$,
\begin{equation*}
g(p,(D V^T)^{-1}y)^z \leq c \sum_{j=1}^{d} g(p,(D V^T)^{-1}v_j)^z.
\end{equation*}
Then, for every $p\in P$, the sensitivity of $p$ with respect to the query space $(P,w,\REAL^d,f)$ is bounded by
\begin{equation*}
s(p) \leq \term{\frac{2c_2c_5 w(p)}{c_1 \sum_{q \in P} w(q)}}^z + \frac{cc_2}{c_1c_3^{2z}}\sum_{j=1}^d w(p) \term{g\left( p, \left( DV^T\right)^{-1} v_j\right)}^z,
\end{equation*}
and the total sensitivity is bounded by
\[
\sum_{p \in P} s(p) \leq  \frac{c_2c_5}{c_1}+ \frac{c c_2 c_4^z}{c_1c_3^{2z}}\maxArgs{n^{1-z}, 1}\alpha^z d.
\]
\end{lemma}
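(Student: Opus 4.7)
The argument naturally unfolds in three stages: split the sensitivity ratio into a ``$g$-part'' and an ``$h$-part'', bound each per-point via the $f$-SVD lower bound together with the covering hypothesis, and then sum over $P$ using the $f$-SVD upper bound to control the total.

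First I would start from the definition $s(p) = \sup_{x}\frac{w(p)f(p,x)}{\sum_{q\in P} w(q)f(q,x)}$ and apply the two halves of Property~(i) of Definition~\ref{def:familyOfConvexFuncs_Gen}: the numerator satisfies $w(p)f(p,x)\leq c_2 w(p)\bigl(g(p,x)^z+h(p,x)^z\bigr)$, and the $f$-SVD lower bound from Definition~\ref{def:svdWRTF_Gen} gives $\sum_{q}w(q)f(q,x)\geq c_1\bigl((c_3\|DV^Tx\|_2)^z+\sum_{q}w(q)h(q,x)^z\bigr)$. Using the elementary inequality $\frac{a+b}{c+d}\leq \frac{a}{c}+\frac{b}{d}$ on nonnegatives, the resulting ratio separates into a ``$g$-part'' $\frac{c_2 w(p) g(p,x)^z}{c_1 (c_3\|DV^Tx\|_2)^z}$ and an ``$h$-part'' $\frac{c_2 w(p) h(p,x)^z}{c_1\sum_{q}w(q)h(q,x)^z}$.

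Next, the $h$-part is handled at once by Property~(iii), producing a bound of the form $\frac{c_2 c_5 w(p)}{c_1\sum_q w(q)}$ that matches the first summand of the per-point bound up to the stated constants. For the $g$-part, I would set $b:=\|DV^Tx\|_2$ and write $x = b(DV^T)^{-1}y$ for the unit vector $y:=(DV^Tx)/b$; the pseudo-homogeneity of $g$ in Property~(ii) then gives $g(p,x)\leq c_4 b\, g\bigl(p,(DV^T)^{-1}y\bigr)$. Raising to the $z$-th power and dividing by the $(c_3 b)^z$ in the denominator cancels the factor $b^z$, so only $g(p,(DV^T)^{-1}y)^z$ survives. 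Finally, the covering hypothesis of the lemma replaces the arbitrary direction $y$ by the fixed set $\{v_j\}_{j=1}^d$, which yields the second summand of the per-point bound.

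For the total sensitivity I would sum the per-point bound over $p\in P$. The $h$-contribution telescopes to $O(1)$ since $\sum_{p} w(p)/\sum_{q} w(q)=1$. For the $g$-contribution I swap the order of summation, so it suffices to bound $\sum_{p\in P} w(p)\, g\bigl(p,(DV^T)^{-1}v_j\bigr)^z$ for each $j\in[d]$. I would plug $x_j:=(DV^T)^{-1}v_j$ into the $f$-SVD upper bound; since $\|DV^Tx_j\|_2=\|v_j\|_2=1$, this gives $\sum_{p}w(p)^{\max\{1,1/z\}}g(p,x_j)^{\max\{1,z\}}\leq (c_4\alpha)^{\max\{1,z\}}$. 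When $z\geq 1$ this is already in the desired $z$-power form; when $z\in(0,1)$ it reads $\sum_{p}w(p)^{1/z}g(p,x_j)\leq c_4\alpha$, and I would invoke Claim~\ref{lem:pRootSum} (H\"older with exponent $1/z$) to obtain $\sum_{p}w(p)g(p,x_j)^z\leq n^{1-z}(c_4\alpha)^z$. Multiplying by the $d$ basis vectors and combining the two regimes produces the claimed factor $\max\{n^{1-z},1\}\,\alpha^z\, d$.

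The main obstacle is the case $z\in(0,1)$: the $f$-SVD bounds are stated in the $\max(1,z)$--$\max(1,1/z)$ scaling whereas the sensitivity bound is phrased as a plain $z$-th power sum. Converting between the two is precisely what Claim~\ref{lem:pRootSum} accomplishes, at the price of the $n^{1-z}$ factor that already appears in the statement. Everything else amounts to bookkeeping of the constants $c_1,\dots,c_5, c$, the powers of $c_3$ and $c_4$ coming from Property~(ii) used in both the lower and upper halves of the $f$-SVD, and the dimension-dependent factor $\alpha\in\Theta(\sqrt{d})$.
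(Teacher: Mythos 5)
Your proposal is correct and follows essentially the same route as the paper's proof: the same split of the ratio via $\frac{a_1+r_1}{a_2+r_2}\leq\frac{a_1}{a_2}+\frac{r_1}{r_2}$ after applying Property~(i) to the numerator and the $f$-SVD lower bound to the denominator, the same use of Property~(iii) for the $h$-part and of the covering hypothesis for the $g$-part, and the same H\"older-type step (Claim~\ref{lem:pRootSum}) plus the $f$-SVD upper bound at the unit vectors $(DV^T)^{-1}v_j$ to get the $\max\{n^{1-z},1\}\alpha^z d$ total. The only deviation is cosmetic bookkeeping: in the $g$-part you apply the $c_4$ (upper) half of the near-homogeneity at $(DV^T)^{-1}y$, which yields the factor $c_4^z/c_3^z$, whereas the paper applies the $c_3$ (lower) half at $x$ to get $1/c_3^{2z}$ as in the stated bound; these coincide in the main-text setting $c_3=c_4=1$, but to reproduce the appendix statement's exact constant you should use the paper's direction.
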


\begin{proof}
Let $n$ denote the number of points in $P$. Put $p \in P$, $x \in \Q$ such that $\sum\limits_{q \in P} w(q) f(q,x)>0$, and let $y=\frac{1}{\norm{DV^Tx}_2} DV^Tx$. We observe that,
\begin{align}
\frac{f(p,x)}{\sum\limits_{q \in P} w(q) f(q,x)} &\leq \frac{f(p,x)}{c_1\term{\term{c_3\norm{DV^Tx}_2}^z + \sum\limits_{q \in P} w(q) h(q,x)^z}} \label{eq:bounding_sens_1}\\
&\leq \frac{c_2 g(p,x)^z + c_2 h(p,x)^z}{c_1\term{c_3\norm{DV^Tx}_2}^z + c_1\sum\limits_{q \in P} w(q) h(q,x)^z}\label{eq:bounding_sens_2}\\
&\leq \frac{c_2g(p,x)^z}{c_1\term{c_3\norm{DV^Tx}_2}^z} + \frac{c_2h(p,x)^z}{c_1 \sum\limits_{q \in P}h(q,x)^z},\label{eq:bounding_sens_3}
\end{align}
where~\eqref{eq:bounding_sens_1} holds by Lemma~\ref{lem:LownerToF},~\eqref{eq:bounding_sens_2} holds by Property~\ref{prop:reductionToFam} of Definition~\ref{def:familyOfConvexFuncs} with respect to $f$, and the last inequality follows from plugging $a_1:=c_2 g(p,x)$, $r_1 := c_2 h(p,x)$, $a_2 := c_1 c_3 \norm{DV^Tx}_2$ and $r_2 := c_3 \sum_{q \in P} h(q,x)$ into Claim~\ref{clm:divOfPosSums}.

Note that when $h(q,z) = 0$ for every $q \in P$ and $z \in \Q$, then we obtain from~\eqref{eq:bounding_sens_2}, that the rightmost term of~\eqref{eq:bounding_sens_3} is zero.

We also have,
\begin{align}
\frac{1}{\norm{DV^Tx}_2^z} g\term{p, x}^z&\leq \frac{1}{c_3^z}g\term{p, \frac{x}{\norm{DV^Tx}_2}}^z \label{eq:boundingSens1} \\
&= \frac{1}{c_3^z} g\term{p, \left( DV^T\right)^{-1}y}^z \label{eq:boundingSens2}\\
&\leq \frac{c}{c_3^z}\sum\limits_{j=1}^d g\term{p, \term{ DV^T}^{-1} v_j}^z \label{eq:boundingSens3},
\end{align}
where~\eqref{eq:boundingSens1} follows from substituting $b := \frac{1}{\norm{DV^Tx}_2}$ and $x := \frac{x}{b}$ in Property~\ref{prop:def} of $f$ (see Definition~\ref{def:familyOfConvexFuncs}),~\eqref{eq:boundingSens2} holds since $x = (DV^T)^{-1}(DV^T)x$, and~\eqref{eq:boundingSens3} is by the assumption of Lemma~\ref{thm:sensitivityBound}.

By combining~\eqref{eq:bounding_sens_1}--\eqref{eq:boundingSens3} with Property~\ref{prop:h_trait} of $f$, the sensitivity of $p$ is bounded by
\begin{equation}
\label{eq:sensBound}
s(p) \leq \frac{c_2c_5 w(p)}{c_1 \sum_{q \in P} w(q)} + \frac{c c_2}{c_1c_3^{2z}}\sum\limits_{j=1}^d w(p)g\term{ p, \term{ DV^T}^{-1} v_j}^z.
\end{equation}

As for the total sensitivity, we first observe that if $z \in (0,1)$
\begin{align}
\sum\limits_{q \in P} \sum\limits_{j=1}^d w(q) g\term{q, \term{DV^T}^{-1} v_j}^z &= \sum\limits_{j=1}^d \sum\limits_{q \in P} w(q) g\term{q, \term{DV^T}^{-1} v_j}^z \label{eq:sensSumBoundSmallZ_1}\\
&= \sum\limits_{j=1}^d \sum\limits_{q \in P}  \term{w(q)^{\frac{1}{z}} g\term{q, \term{DV^T}^{-1} v_j}}^z \label{eq:sensSumBoundSmallZ_2}\\
&\leq n^{1-z} \sum\limits_{j=1}^d \term{\sum\limits_{q \in P} w(q)^{\frac{1}{z}} g\term{q, \term{DV^T}^{-1} v_j}}^z \label{eq:sensSumBoundSmallZ_3}\\
&\leq n^{1-z}\term{c_4 \alpha}^z \sum\limits_{j=1}^d \norm{DV^T \left(DV^T\right)^{-1} v_j}_2^z \label{eq:sensSumBoundSmallZ_4}\\
&= n^{1-z}\term{c_4\alpha}^z \sum\limits_{j=1}^d \norm{v_j}_2\label{eq:sensSumBoundSmallZ_5}\\
&= n^{1-z}\term{c_4\alpha}^z d \label{eq:sensSumBoundSmallZ_6},
\end{align}
where~\eqref{eq:sensSumBoundSmallZ_1} holds by the independency between the summation over $q \in P$ and summation over $j \in [d]$,~\eqref{eq:sensSumBoundSmallZ_2} holds since the weights are non-negative by definition,~\eqref{eq:sensSumBoundSmallZ_3} holds by plugging $z:= z$, $n := n$, $a_i := w(q)^{\frac{1}{z}} g\term{q, \term{DV^T}^{-1} v_j}$ for every $i \in [n]$ into Claim~\ref{lem:pRootSum} where $q$ denotes the $i$th point in $P$,~\eqref{eq:sensSumBoundSmallZ_4} holds by Lemma~\ref{lem:LownerToF},~\eqref{eq:sensSumBoundSmallZ_5} follows since $DV^T \left( DV^T\right)^{-1} = I_{d}$, and finally~\eqref{eq:sensSumBoundSmallZ_6} follows from the assumption of Lemma~\ref{thm:sensitivityBound}.

Similarly for the case of $z \geq 1$,
\begin{equation}
\label{eq:sensSumBound}
\sum\limits_{q \in P} \sum\limits_{j=1}^d w(q) g\term{q, \term{DV^T}^{-1} v_j}^z \leq \term{c_4\alpha}^z d.
\end{equation}


Hence, Lemma~\ref{thm:sensitivityBound} holds as
\begin{equation*}
\begin{split}
\sum_{p\in P}s(p) &\leq \sum_{p\in P} \frac{c_2c_5 w(p)}{c_1 \sum_{q \in P} w(q)} +\sum_{p\in P} \frac{c c_2}{c_1c_3^{2z}}\sum\limits_{j=1}^d w(p)g\term{ p, \term{ DV^T}^{-1} v_j}^z \\
&\leq \frac{c_2c_5}{c_1}+ \frac{c c_2 c_4^z}{c_1c_3^{2z}}\maxArgs{n^{1-z}, 1}\alpha^z d,
\end{split}
\end{equation*}
where the first inequality holds by~\eqref{eq:sensBound}, and the second inequality holds by combining~\eqref{eq:sensSumBoundSmallZ_1}--\eqref{eq:sensSumBound}.
\end{proof}






\setcounter{myCounter}{\getrefnumber{thm:sensitivityBound:appendix}}

\subsection{Proof of Theorem~\ref{thm:runTime}}
\runTime*

\begin{proof}
In algorithm~\ref{alg:mainAlg}, we first compute the sensitivity bounds $s(p)$ for every $p \in P$ with respect to the query space $(P,w,\Q,f)$. This is done based on Lemma~\ref{thm:sensitivityBound}; See Line~\ref{algLine:5}.
We then sample a sufficiently large number of points based on those sensitivity bound as Theorem~\ref{thm:coreset} states; See Line~\ref{algLine:donesamle}. Hence, By plugging $P,w,f,\REAL^d,\eps,\delta$ and $s(p)$ for every $p \in P$ into Theorem~\ref{thm:coreset}, we obtain that with probability at least $1 - \delta$, $(S,v)$ is an $\eps$-coreset (see Definition~\ref{def:epsCore}) of size $\abs{S}\in\bigO{ \frac{t}{\eps^2}\left(d'\log t+\log\left(\frac{1}{\delta}\right)\right)}$.

The overall time is dominated by computing the $f$-SVD of $(P,w)$, i.e., $(U,D,V)$ at Line~\ref{algLine:1} of Algorithm~\ref{alg:mainAlg}. This is done by computing the L\"{o}wner ellipsoid, as explained in the proof of Lemma~\ref{lem:LownerToF}.

The computation of the L\"{o}wner ellipsoid,
requires a separation oracle, where we use the gradient of $g$ as a candidate, similarly to~\cite{clarkson2005subgradient}. We refer to ~\cite{lovasz1986algorithmic} for more details on the computation of L\"{o}wner ellipsoid.
\end{proof}

\subsection{Proof of Corollary~\ref{thm:logisticReg}}

\setcounter{theorem}{20}

\begin{restatable}{claim}{divOfPosSums}
\label{clm:divOfPosSums}
Let $a_1, r_1, a_2, r_2 \in [0,\infty)$ such that $a_2,r_2 > 0$. Then,
\[
\frac{a_1+r_1}{a_2+r_2} \leq \frac{a_1}{a_2} + \frac{r_1}{r_2}.
\]
\end{restatable}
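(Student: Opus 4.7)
The plan is to split the numerator on the left-hand side and then bound each piece using the nonnegativity of the remaining summand in the denominator. Concretely, I would first write
\[
\frac{a_1+r_1}{a_2+r_2} \;=\; \frac{a_1}{a_2+r_2} + \frac{r_1}{a_2+r_2}.
\]
This split is valid because $a_2+r_2 > 0$ by the hypothesis $a_2, r_2 > 0$.

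Next, I would use the fact that $a_2, r_2 \geq 0$ implies $a_2+r_2 \geq a_2$ and $a_2+r_2 \geq r_2$, both of which are strictly positive, so taking reciprocals flips the inequalities and gives $\frac{1}{a_2+r_2} \leq \frac{1}{a_2}$ and $\frac{1}{a_2+r_2} \leq \frac{1}{r_2}$. Multiplying the first by $a_1 \geq 0$ and the second by $r_1 \geq 0$ (preserving the inequality because the multipliers are nonnegative) and summing yields
\[
\frac{a_1}{a_2+r_2} + \frac{r_1}{a_2+r_2} \;\leq\; \frac{a_1}{a_2} + \frac{r_1}{r_2},
\]
which combined with the first display gives the claim.

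There is no real obstacle here; the only thing to be careful about is handling the edge cases $a_1 = 0$ or $r_1 = 0$, in which case one of the summands on the left is $0$ and the corresponding term on the right is also $0$, so the inequality still holds trivially. Division by zero never arises because the hypothesis explicitly requires $a_2, r_2 > 0$.
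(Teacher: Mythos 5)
Your proof is correct and follows essentially the same route as the paper: split $\frac{a_1+r_1}{a_2+r_2}$ into $\frac{a_1}{a_2+r_2}+\frac{r_1}{a_2+r_2}$ and bound each summand by dropping the positive extra term in its denominator. You merely spell out the reciprocal/monotonicity step that the paper leaves implicit.
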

\begin{proof}
Observe that,
\[
\frac{a_1 + r_1}{a_2 + r_2} = \frac{a_1}{a_2 + r_2} + \frac{r_1}{a_2 + r_2} \leq \frac{a_1}{a_2} + \frac{r_1}{r_2},
\]
where the inequality holds since $a_2,r_2 > 0$ and $a_1,r_1 \geq 0$.
\end{proof}

\begin{restatable}{claim}{maxProp}
\label{clm:maxProp}
Let $N \geq 2$. For every $i \in [N]$, let $a_i \geq 0$ and $b_i > 0$. Then,
\[
\frac{\max\br{a_1, a_2, \cdots, a_N}}{\max\br{b_1, b_2, \cdots, b_N}} \leq \max\br{\frac{a_1}{b_1}, \frac{a_2}{b_2}, \cdots, \frac{a_N}{b_N}}.
\]
\end{restatable}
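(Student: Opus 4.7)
The plan is to prove the inequality by picking a single index that witnesses the numerator's maximum, and then bounding the denominator from below using the fact that it is itself a maximum.

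First, I would let $k \in \arg\max_{i \in [N]} a_i$, so that $\max\{a_1,\dots,a_N\} = a_k$. By hypothesis $b_k > 0$, and since $b_k$ appears among the $b_j$'s we have $\max\{b_1,\dots,b_N\} \geq b_k > 0$. Consequently
\[
\frac{\max\{a_1,\dots,a_N\}}{\max\{b_1,\dots,b_N\}} \;=\; \frac{a_k}{\max\{b_1,\dots,b_N\}} \;\leq\; \frac{a_k}{b_k},
\]
where the inequality uses $a_k \geq 0$ together with $\max\{b_1,\dots,b_N\} \geq b_k > 0$ (replacing a larger positive denominator by a smaller positive denominator only increases the ratio of a nonnegative numerator).

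Finally, $a_k/b_k$ is one of the terms in $\max\{a_1/b_1,\dots,a_N/b_N\}$, so
\[
\frac{a_k}{b_k} \;\leq\; \max\!\left\{\frac{a_1}{b_1},\dots,\frac{a_N}{b_N}\right\},
\]
and chaining the two inequalities yields the claim. There is no real obstacle here; the only subtlety is to ensure that the denominator $b_k$ used in the chain is strictly positive, which is guaranteed by the hypothesis $b_i > 0$ for all $i$, so the ratio $a_k/b_k$ is well defined and the monotonicity step is valid.
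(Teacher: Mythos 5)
Your proof is correct and follows essentially the same argument as the paper: choose the index $k$ attaining $\max_i a_i$, note $\max_i b_i \geq b_k > 0$, and bound $a_k/\max_i b_i \leq a_k/b_k \leq \max_i a_i/b_i$. The paper merely phrases the middle step by also naming an index attaining $\max_i b_i$, but the reasoning is identical.
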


\begin{proof}
Let $\hat{i} \in \arg\max_{i \in [N]} a_i$ and let $\hat{j} \in \arg\max_{i \in [N]} b_i$. Then,
\[
\frac{\max\br{a_1, a_2, \cdots, a_N}}{\max\br{b_1, b_2, \cdots, b_N}} = \frac{a_{\hat{i}}}{b_{\hat{j}}} \leq \frac{a_{\hat{i}}}{b_{\hat{i}}} \leq \max\br{\frac{a_1}{b_1}, \frac{a_2}{b_2}, \cdots, \frac{a_N}{b_N}},
\]
where the first inequality holds by the definition of $b_{\hat{j}}$.

\end{proof}

\begin{claim}
\label{clm:Logistic_upper_1}
For every $z,b \in \REAL$,
\begin{equation*}
\ln{\left( 1 + e^{z + b}\right)} \leq 2\ln{\left( 1 + e^{z^2}e^b\right)}.
\end{equation*}

\end{claim}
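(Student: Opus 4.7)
The plan is to reduce the inequality to an elementary single-variable statement by exponentiating, using that $\ln$ is monotone and that $2\ln u = \ln u^2$. Concretely, the desired inequality is equivalent to
\[
1 + e^{z+b} \;\leq\; \bigl(1 + e^{z^2} e^b\bigr)^2.
\]

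First I would expand the right-hand side as $1 + 2 e^{z^2+b} + e^{2z^2+2b}$, subtract the $1$'s, and observe that the last term $e^{2z^2+2b}$ is nonnegative and may be discarded. What remains to show is the much simpler inequality
\[
e^{z+b} \;\leq\; 2\, e^{z^2+b},
\]
and cancelling the common factor $e^b$ reduces this to the one-variable claim $e^{z} \leq 2 e^{z^2}$, i.e., $z - z^2 \leq \ln 2$.

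Finally, I would finish by completing the square: $z - z^2 = \tfrac14 - (z - \tfrac12)^2 \leq \tfrac14 < \ln 2$, which holds for every real $z$. Chaining these implications in the reverse order gives the claim. There is no real obstacle here; the only thing to be careful about is that the reduction step in which we drop $e^{2z^2+2b}$ must be done \emph{after} expanding the square, so that we are discarding a nonnegative quantity on the larger side of the inequality rather than losing information on the smaller side.
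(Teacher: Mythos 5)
Your proof is correct and is essentially the paper's argument run in the reverse direction: both reduce the claim to the one-variable inequality $e^{z}\leq 2e^{z^{2}}$ (equivalently $z-z^{2}\leq\ln 2$) together with the expansion of $\bigl(1+e^{z^{2}}e^{b}\bigr)^{2}$, with the extra square term handled as a harmless nonnegative quantity. Your completing-the-square step even makes explicit the elementary fact ($\ln 2+z^{2}-z\geq 0$) that the paper simply asserts.
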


\begin{proof}
Put $b \in \REAL$, and note that for every $z \in \REAL$, we have
$$\ln{(2)} + z^2 - z \geq 0,$$
by rearranging the above, we get that
$$\ln{(2)} + z^2  \geq z.$$

Applying the exponentiation operation on both sides with respect to the natural number $e$ as the base, yields
\[
2e^{z^2} \geq e^z,
\]
and since $e^{z^2 + b} > 0$,
\[
e^z \leq e^{z^2} \left( 2 + e^{z^2 + b}\right).
\]

By multiplying each side by $e^b$ and adding $1$, we obtain that
\[
1 + e^{z+b} \leq 1 + 2e^{z^2 + b} + e^{z^2 + 2b}.
\]
Applying the logarithm function on both sides of the inequality above proves Claim~\ref{clm:Logistic_upper_1} as
$$\ln{\left( 1 + e^{z + b}\right)} \leq 2\ln{\left( 1 + e^{z^2}e^b\right)}.$$
\end{proof}

\begin{lemma}[Bernoulli's inequality,~\cite{kuang2004applied}]
\label{lem:bernoulli}
Let $x \geq -1$ be a real number and let $r \in [0,1]$ be a positive real number. Then,
\[
\left( 1 + x \right)^r \leq 1 + rx
\]
\end{lemma}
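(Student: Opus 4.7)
The plan is to deduce Bernoulli's inequality in the concave regime $r\in[0,1]$ from the concavity of the power function $y\mapsto y^r$ on $(0,\infty)$. The key observation is that for $r\in[0,1]$, the second derivative $r(r-1)y^{r-2}$ is non-positive on $(0,\infty)$, so $g(y)=y^r$ is concave there. A concave differentiable function lies on or below its tangent at every point; taking the tangent at $y=1$, where $g(1)=1$ and $g'(1)=r$, gives $y^r \leq 1 + r(y-1)$ for every $y>0$. Substituting $y = 1+x$ (valid when $x>-1$) yields the desired inequality.

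The next step will be to handle the two boundary/degenerate cases separately so the statement holds on the full range $x\geq -1$, $r\in[0,1]$. For $x=-1$, the left-hand side is $0^r$, which equals $0$ if $r>0$ and equals $1$ if $r=0$; in either case, the right-hand side $1-r$ is at least as large (it equals $1-r\geq 0$ when $r>0$ and equals $1$ when $r=0$), so the inequality holds with equality at $r=0$. For the endpoints $r=0$ and $r=1$, both sides are identically equal to $1$ and $1+x$ respectively, so equality holds throughout.

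As an alternative in case a self-contained calculus argument is preferred (avoiding appeal to concavity), the backup plan is to define $\varphi(x) := 1 + rx - (1+x)^r$ on $[-1,\infty)$ and verify that $\varphi(0)=0$, that $\varphi'(x) = r\bigl(1-(1+x)^{r-1}\bigr)$, and that because $r-1\leq 0$ the factor $(1+x)^{r-1}$ is non-increasing in $x$ and equals $1$ at $x=0$. This gives $\varphi'(x)\leq 0$ on $(-1,0)$ and $\varphi'(x)\geq 0$ on $(0,\infty)$, so $\varphi$ attains its minimum at $x=0$, where its value is $0$; hence $\varphi(x)\geq 0$ everywhere, which is exactly the claimed inequality.

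I do not anticipate any real obstacle: this is a standard convexity lemma, and the only subtlety is recording the $x=-1$ and $r\in\{0,1\}$ boundary cases carefully so the statement as written in the excerpt is covered without implicit smoothness assumptions. Either of the two routes above fits in a few lines and relies only on single-variable calculus, so the proof in the supplementary material can be kept short and self-contained.
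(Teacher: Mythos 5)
Your proof is correct, but there is nothing in the paper to compare it against: the paper states this lemma purely as a citation to a standard inequalities reference (\cite{kuang2004applied}) and gives no proof of its own, treating Bernoulli's inequality as a known textbook fact. Your concavity argument is the standard derivation — $y \mapsto y^r$ has non-positive second derivative on $(0,\infty)$ for $r \in [0,1]$, so it lies below its tangent at $y = 1$, and substituting $y = 1+x$ gives the claim for $x > -1$ — and your handling of the boundary cases $x = -1$ and $r \in \{0,1\}$ is careful and accurate (the only convention you implicitly invoke is $0^0 = 1$, which is harmless since the $r=0$ case is trivial anyway). The monotonicity-of-$\varphi$ backup is equally valid and equally standard. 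Either version would serve as a self-contained replacement for the citation; no gap to report.
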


\begin{lemma}
\label{lem:boundingLog2}
Let $N > 1$, $c \in [1,N]$ and let $p \in \REAL^d$ such that $\norm{p}_2 \leq 1$. Then for every $(x,b) \in \REAL^d \times \REAL$,
\begin{enumerate}[label=(\roman*)]
    \item $\frac{1}{c} \ln{\left( 1 + e^{p^Tx + b}\right)} + \frac{1}{2N}\norm{x}_2^2 \leq \frac{4}{c} (p^Tx)^2 + 4 \maxArgs{\frac{1}{c} \ln{\left( 1 + e^b\right)}, \frac{1}{2N} \norm{x}_2^2}$,
    \item and $\frac{1}{c} \ln{\left( 1 + e^{p^Tx + b}\right)} + \frac{1}{2N}\norm{x}_2^2 \geq \frac{c}{8N}\left(\frac{1}{c} (p^Tx)^2 + \maxArgs{\frac{1}{c} \ln{\left( 1 + e^b\right)}, \frac{1}{2N} \norm{x}_2^2} \right)$.
\end{enumerate}
\end{lemma}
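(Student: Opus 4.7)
The two parts of Lemma~\ref{lem:boundingLog2} are independent and I will prove them separately. Part (i) is a short algebraic consequence of Claim~\ref{clm:Logistic_upper_1}; part (ii) requires combining Bernoulli's inequality (Lemma~\ref{lem:bernoulli}) with a case analysis driven by the size of $|p^Tx|$.

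\textbf{Upper bound (i).} Applying Claim~\ref{clm:Logistic_upper_1} with $z := p^Tx$ gives $\ln(1+e^{p^Tx+b}) \leq 2\ln(1+e^{(p^Tx)^2}e^b)$. Since $e^{(p^Tx)^2}\geq 1$, we have $1+e^{(p^Tx)^2}e^b\leq e^{(p^Tx)^2}(1+e^b)$, whence $\ln(1+e^{(p^Tx)^2}e^b)\leq (p^Tx)^2 + \ln(1+e^b)$. Dividing by $c$ and adding $\tfrac{1}{2N}\norm{x}_2^2$ on both sides, the last two summands on the right are at most $3\max\{\tfrac{1}{c}\ln(1+e^b),\tfrac{1}{2N}\norm{x}_2^2\}$, giving a bound of $\tfrac{2}{c}(p^Tx)^2 + 3\max\{\cdot\}$, which sits inside the claimed $\tfrac{4}{c}(p^Tx)^2 + 4\max\{\cdot\}$.

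\textbf{Lower bound (ii).} The key ingredient is
\[
\ln(1+e^{p^Tx+b}) \geq e^{-|p^Tx|}\ln(1+e^b),
\]
which I would derive from Bernoulli's inequality (Lemma~\ref{lem:bernoulli}) applied with exponent $r := e^{p^Tx}\in(0,1]$ when $p^Tx\leq 0$, and from monotonicity when $p^Tx\geq 0$. I would then upper bound the right hand side of the target inequality by $\tfrac{(p^Tx)^2}{8N} + \tfrac{1}{8N}\ln(1+e^b) + \tfrac{1}{16N}\norm{x}_2^2$ using $\max\{A,B\}\leq A+B$ together with $c\leq N$, and absorb $\tfrac{(p^Tx)^2}{8N}$ into $\tfrac{1}{2N}\norm{x}_2^2$ via $(p^Tx)^2\leq \norm{x}_2^2$, reducing the task to $\tfrac{1}{c}\ln(1+e^{p^Tx+b}) + \tfrac{3}{8N}\norm{x}_2^2 \geq \tfrac{1}{8N}\ln(1+e^b)$. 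A dichotomy on $|p^Tx|$ closes this: if $|p^Tx|\leq \ln 8$, then $e^{-|p^Tx|}\geq 1/8$ and the first summand alone is at least $\tfrac{1}{8c}\ln(1+e^b)\geq \tfrac{1}{8N}\ln(1+e^b)$; if $|p^Tx| > \ln 8$, then $\norm{x}_2^2\geq (p^Tx)^2 > (\ln 8)^2$, and the quadratic slack $\tfrac{3\norm{x}_2^2}{8N}$ combines with the (now weaker) Bernoulli term to cover $\tfrac{1}{8N}\ln(1+e^b)$.

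\textbf{Main obstacle.} The delicate subregime is $|p^Tx|$ large together with $\ln(1+e^b)$ very large: the Bernoulli factor $e^{-|p^Tx|}$ decays and weakens the lower bound on $\ln(1+e^{p^Tx+b})$, yet $\ln(1+e^b)$ is precisely the dominating term on the right. Neither the regularization slack $\tfrac{\norm{x}_2^2}{2N}$ nor the Bernoulli bound alone suffices; one must combine both, track the implicit constraint $\ln(1+e^b) \geq \tfrac{c}{2N}\norm{x}_2^2$ coming from the case assumption, and exploit $c\in[1,N]$ (which caps the prefactor $c/(8N)$ by $1/8$) to verify the inequality in this corner.
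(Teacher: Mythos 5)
Your part (i) is correct and is essentially the paper's own argument: Claim~\ref{clm:Logistic_upper_1}, then $e^{(p^Tx)^2}\geq 1$, then absorbing the two non-quadratic summands into the max. In part (ii) there is first a small bookkeeping slip: after bounding the right-hand side by $\frac{(p^Tx)^2}{8N}+\frac{1}{8N}\ln(1+e^b)+\frac{1}{16N}\norm{x}_2^2$ (using $c\leq N$) and absorbing $(p^Tx)^2\leq\norm{x}_2^2$, the slack actually available on the left is $\frac{5}{16N}\norm{x}_2^2$, not $\frac{3}{8N}\norm{x}_2^2$ with nothing left on the right; this is recoverable and your dichotomy for $\abs{p^Tx}\leq\ln 8$ goes through with either constant.

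The genuine gap is the mechanism you commit to for the case $\abs{p^Tx}>\ln 8$. The quadratic slack plus the variable-exponent Bernoulli bound $e^{-\abs{p^Tx}}\ln(1+e^b)$ provably cannot cover $\frac{1}{8N}\ln(1+e^b)$, because $b$ is free of $x$: take $c=N$, $p$ a unit vector, $x=-Mp$ with $M>\ln 8$, and let $b\to\infty$. The quadratic slack is the fixed quantity $\Theta(M^2/N)$, while the Bernoulli term contributes $\frac{e^{-M}}{N}\ln(1+e^b)$; to dominate $\frac{1}{8N}\ln(1+e^b)$ for large $b$ you would need $e^{-M}\geq\frac{1}{8}$, i.e.\ $M\leq\ln 8$, contradicting the case assumption. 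The constraint you propose to track, $\ln(1+e^b)\geq\frac{c}{2N}\norm{x}_2^2$, is a lower bound on $\ln(1+e^b)$ and cannot cap it. What makes the inequality true in this corner is the linear growth of the logistic term, $\ln(1+e^{p^Tx+b})\geq \max\br{0, p^Tx+b}\geq b-\norm{x}_2$, which is exactly the information you discard when you replace the logistic term by its Bernoulli minorant. The paper keeps it: it lower-bounds via $\norm{x}_2\geq\abs{p^Tx}$, minimizes $r\mapsto\ln(1+e^{-r+b})+r^2$ over $r$ (the minimizer lies in $[0,W(1)]$ with $W$ the Lambert function), and only then applies Bernoulli with the fixed exponent $e^{-W(1)}\geq\frac{1}{2}$ — the minimization step is where the large-$b$ regime is handled. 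Your route can be repaired, but only by adding the bound $\ln(1+e^{p^Tx+b})\geq\max\br{0,p^Tx+b}$ in the large-$\abs{p^Tx}$ case and splitting on whether $b$ exceeds (roughly) $\abs{p^Tx}$; as written, your proposal names the hard corner but the two tools it relies on there do not close it.
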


\begin{proof}
Put $(x,b) \in \REAL^{d} \times \REAL$. We now proceed to prove Lemma~\ref{lem:boundingLog2}.

\paragraph{Proof of Claim (i).}
By plugging $z := p^Tx$ and $b := b$ into Claim~\ref{clm:Logistic_upper_1}, we obtain that
\begin{equation}
\label{eq:Logistic_upper_2}
\ln{\left( 1 + e^{p^Tx + b}\right)} \leq 2\ln{\left( 1 + e^{\left( p^Tx\right)^2 + b}\right)} \leq 2\ln{\term{e^{\left( p^Tx\right)^2}\left( 1 + e^{b}\right)}} = 2\left( p^Tx\right)^2 + 2\ln{\left( 1 + e^b\right)},
\end{equation}
where the second inequality holds since $e^{\left( p^Tx\right)^2} \geq 1$, and the equality follows from properties of the logarithm function.

Thus, Claim (i) holds since
\begin{align}
\frac{1}{c} \ln{\left( 1 + e^{p^Tx + b}\right)} + \frac{1}{2N}\norm{x}_2^2 &\leq \frac{2}{c} \left(p^Tx\right)^2 + \frac{2}{c} \ln{\left(1 + e^b\right)} + \frac{1}{2N}\norm{x}_2^2 \nonumber\\
&\leq \frac{4}{c} \left(p^Tx\right)^2 + 4 \maxArgs{\frac{1}{c} \ln{\left(1 + e^b\right)}, \frac{1}{2N}\norm{x}_2^2} \nonumber,
\end{align}
where the first inequality is by~\eqref{eq:Logistic_upper_2}, the second inequality holds by properties of the \emph{max} operator.

\paragraph{Proof of Claim (ii). } We start by noting that since $\norm{p} \leq 1$, we have that
\begin{equation}
\label{eq:xTopx}
\norm{x}_2 \geq \abs{p^Tx},
\end{equation}
which consequently leads to
\begin{align}\label{eq1prof}
\frac{1}{c}\ln{\left(1+e^{p^T x + b}\right)} \geq \frac{1}{c} \ln{\left(1+e^{-\norm{x}_2 + b}\right)} \geq \frac{1}{2N} \ln{\left(1+e^{-\norm{x}_2 + b}\right)},
\end{align}
where the second inequality holds since $c \leq N$.

We show that
\begin{equation}
\label{eq:Logistic_lower_1}
\ln{\left(1+e^{-\norm{x}_2 + b}\right)} + \norm{x}^2 \geq \frac{1}{2} \ln{\left(1+e^b\right)},
\end{equation}
holds for every $x \in \Q$ and $b \in \REAL$.
In order to to that, we first define the function $q:\REAL\to (0, \infty)$ such that for every $r\in \REAL$, $q(r)=\ln{\left(1+e^{-\abs{r} + b}\right)} + r^2$.

Let $W$ denotes the \emph{Lambert W function}
( see~\cite{corless1996lambertw}).
Minimizing $q(r)$ over $r \in \REAL$, requires computing the derivative of $q(r)$ with respect to $r$, and setting it to zero. We observe that when setting the derivative to zero we obtain that $r^* \in [-W(1), W(1)]$, i.e., the left term of~\eqref{eq:Logistic_lower_1} attains its minimal value at some $x^* \in \Q$ such that $\norm{x^*}_2 \in [0, W(1)]$.

Observe that for every $x\in \Q$
\begin{align*}
\ln{\left(1+e^{-\norm{x}_2 + b}\right)} + \norm{x}_2^2 \geq \ln{\left(1+e^{-\norm{x^*}_2 + b}\right)} + \norm{x^*}_2^2 \geq   \ln{\left(1+e^{-\norm{x^*}_2 + b}\right)} \geq \ln{\left(1+e^{-W(1) + b}\right)},
\end{align*}
where the first inequality holds by the definition of $x^*$, the second inequality holds since $\norm{x^*}_2^2 \geq 0$, and the last inequality follows from the observation that $\norm{x^*} \in [0, W(1)]$.

Since $e^{-W(1)} \in (0,1)$, we have that
\[
\ln{\left(1+e^{-W(1) + b}\right)} \geq \ln{\left(\left(1+e^{b}\right)^{e^{-W(1)}}\right)} = e^{-W(1)} \ln{\left(1+e^{b}\right)} \geq \frac{1}{2} \ln{\left(1+e^{b}\right)},
\]
where the first inequality holds by plugging $r := e^{-W(1)}$ and $x := e^b$ into Lemma~\ref{lem:bernoulli}, the equality holds by properties of the logarithm function, and the last inequality holds since $e^{-W(1)} \geq \frac{1}{2}$.

We also observe that
\begin{equation}
\label{eq:xTopx2}
\frac{1}{2N} \norm{x}_2^2  \geq \frac{1}{2N} \abs{p^Tx}^2 = \frac{c}{2N} \left( \frac{1}{c} \abs{p^Tx}^2 \right),
\end{equation}
where the first inequality holds by~\eqref{eq:xTopx}, and the equality holds since $\frac{c}{2N} \cdot \frac{1}{c} = \frac{1}{2N}$.

Thus by combining~\eqref{eq1prof},~\eqref{eq:Logistic_lower_1}, and~\eqref{eq:xTopx2}, Claim~(ii) holds as
\[
\frac{1}{c}\ln{\left(1+e^{p^T x + b}\right)} + \frac{1}{2N}\norm{x}_2^2 \geq \frac{c}{4N}\left( \frac{1}{2c} \abs{p^Tx}^2 + \maxArgs{\frac{1}{2c} \ln{\left( 1 + e^b\right)}, \frac{1}{2N}\norm{x}_2^2} \right).
\]
\end{proof}

\begin{lemma}
\label{lem:sensBoundLogistic}
Let $(P,w,\REAL^{d+1},\flog)$ be a query space, $y : P \to \br{1,-1}$ be a labelling function, $\lambda \geq 1$ be a regularization parameter, such that for every $p \in P$, $b \in \REAL$ and $x\in \REAL^d$, $$\flog(p,(x\mid b)) = \frac{1}{2\sum\limits_{q \in P} w(q)}\norm{x}_2^2 + \frac{1}{\lambda}\ln{\left( 1+e^{p^Tx + y(p)b}\right)}.$$
For every $p \in P$, let $P_{y(p)} = \br{q \mid q \in P, y(q) = y(p)}$ denote the set of points with the same label as the label assigned to $p$.
Let $(U,D,V)$ be the $f$-SVD of $(P,w)$ with respect to $\flog[f]$. Then, claims (i) -- (ii) hold as follows:
\begin{enumerate}[label=(\roman*)]
\item for every $p \in P$, the sensitivity of $p$ with respect to the query space $(P,w, \REAL^{d+1},\flog)$ is bounded by
\[
s(p) = \frac{32}{\lambda} \term{\frac{2w(p)}{\sum\limits_{q \in P_{y(p)}} w(q)} +  w(p)\norm{U(p)}_2^2} \sum\limits_{q \in P_{y(p)}} w(q),
\]
\item and the total sensitivity is bounded by
\[
\sum\limits_{p \in P} s(p) \leq \frac{32}{\lambda} \left(2 + d \right) \sum\limits_{p \in P}w(p).
\]
\end{enumerate}
\end{lemma}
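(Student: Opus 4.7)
The plan is to exploit the label structure of the logistic loss and invoke Lemma~\ref{lem:boundingLog2} to cast $\flog$ as a near-convex function on a single-label subset, then apply the sensitivity bound from Lemma~\ref{thm:sensitivityBound:appendix}.

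First, I would bound the sensitivity by restricting the denominator to the subset $P_{y(p)}$ of same-label points: since $\flog\geq 0$, dropping the contribution of points with the opposite label only increases the ratio, so
\[
s(p)\;\le\;\sup_{(x,b)}\,\frac{\flog(p,(x\mid b))}{\sum_{q\in P_{y(p)}}w(q)\,\flog(q,(x\mid b))}.
\]
Because every label in $P_{y(p)}$ equals $y(p)$, the substitution $b':=y(p)b$ reduces the restricted sum to a standard single-label logistic loss, at which point I can apply Lemma~\ref{lem:boundingLog2} verbatim with $c=\lambda$ and $N=W:=\sum_{q\in P}w(q)$.

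Second, writing $M:=\max\{\tfrac{1}{\lambda}\ln(1+e^{b'}),\tfrac{1}{2W}\norm{x}_2^2\}$, Lemma~\ref{lem:boundingLog2} bounds the numerator above by $\tfrac{4}{\lambda}(p^Tx)^2+4M$ and each summand of the denominator below by $\tfrac{1}{8W}(q^Tx)^2+\tfrac{\lambda}{8W}M$. Summing the latter over $q\in P_{y(p)}$ and invoking Claim~\ref{clm:divOfPosSums} to split the resulting ratio into a ``quadratic'' part and an ``$M$'' part, the problem reduces to bounding
\[
\sup_x \frac{(p^Tx)^2}{\sum_{q\in P_{y(p)}}w(q)\,(q^Tx)^2},
\]
while the $M$-part immediately contributes an additive constant matching the first summand of part~(i).

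Third, to handle the quadratic supremum I would invoke the $f$-SVD $(U,D,V)$ of $(P,w)$: by Definition~\ref{def:svdWRTF_Gen}, the matrix $(VD)(VD)^T$ is equivalent up to the $\alpha$-factor to the weighted second-moment matrix $\sum_{q\in P}w(q)\,qq^T$, so the supremum reduces to a quadratic form in $(VD)^{-1}p=U(p)$, producing the $w(p)W_{y(p)}\norm{U(p)}_2^2$ contribution of part~(i). For part~(ii), summing the per-point bound over $p$, the first piece adds up to $O(\tfrac{1}{\lambda}\sum_p w(p))$ and the $\norm{U(p)}^2$ piece to $\sum_p w(p)\norm{U(p)}_2^2 = O(d)$ via the defining inequality of Definition~\ref{def:svdWRTF_Gen} together with $\alpha\in\Theta(\sqrt{d})$ for $z=2$, giving the desired $(2+d)$ factor.

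The main obstacle is the mismatch between the \emph{global} $f$-SVD on $(P,w)$ and the \emph{restricted} quadratic form $\sum_{q\in P_{y(p)}}w(q)(q^Tx)^2$ appearing after the label-based reduction: the $f$-SVD controls the full-$P$ second moment but the denominator only uses the $P_{y(p)}$ portion. Reconciling the two without losing the stated constants will require carefully packaging the contributions of both label classes, likely by exploiting that $\ln(1+e^{y(q)b})$ is bounded away from zero on at least one class for every $b$, so the full-$P$ moment is comparable (up to the $W_{y(p)}$ factor) to its per-label restriction, making the global $U(p)$ a faithful surrogate for the local leverage score.
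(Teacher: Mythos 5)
Your opening moves coincide with the paper's: you restrict the denominator to the same-label class $P_{y(p)}$, use Lemma~\ref{lem:boundingLog2} to sandwich the logistic loss between a quadratic term and a max term, and split the resulting ratio with Claim~\ref{clm:divOfPosSums} (for the max part you also implicitly need that the max term is the same for every $q$ in the class, which is what Claim~\ref{clm:maxProp} supplies with the factor $2$). The genuine gap is exactly the obstacle you name at the end and never close: after the label restriction, the quadratic part of the denominator is $\sum_{q\in P_{y(p)}}w(q)\,|q^Tx|^2$, whereas the $f$-SVD $(U,D,V)$ of $(P,w)$ only certifies a lower bound of the form $\|DV^Tx\|_2^2\lesssim \sum_{q\in P}w(q)\,|q^Tx|^2$ over the \emph{whole} set. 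Your proposed repair, that the full-$P$ second moment is comparable to its per-label restriction up to a factor depending on the class weights, is false for the bare quadratic forms: if all points of $P_{y(p)}$ lie in a proper subspace while the opposite class does not, then for $x$ orthogonal to that subspace the restricted form vanishes while the full form is positive, and no weight-dependent factor bridges this. The fallback you sketch, that $\ln(1+e^{y(q)b})$ is bounded away from zero on one class, cannot restore the comparison either: that term is controlled only through $b$ and $\|x\|_2^2$, and replacing the missing quadratic mass by $\|x\|_2^2$ gives a sensitivity bound of order $w(p)/\lambda$ times a weight ratio with no $\|U(p)\|_2^2$ leverage structure, so the total sensitivity would scale with $n$ rather than with $d$, which is what claim (ii) needs.

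The paper never makes this moment-matrix comparison. Instead it instantiates the generic framework on the restricted class: it checks the near-convexity conditions with $g(q,(x\mid b))=\frac{1}{\sqrt{\lambda}}|q^Tx|$, $h(q,(x\mid b))=\max\left\{\sqrt{\tfrac{1}{\lambda}\ln(1+e^{b})},\ \sqrt{\tfrac{1}{2\sum_{q\in P_{y(p)}}w(q)}\|x\|_2^2}\right\}$, $z=2$, with sandwich constants $c_1=\lambda/(8N)$ and $c_2=4$ coming from Lemma~\ref{lem:boundingLog2}, the $h$-property with $c_5=2$ from Claim~\ref{clm:maxProp}, and $v_j=e_j$, $c=1$ by Cauchy--Schwarz; plugging these into Lemma~\ref{thm:sensitivityBound} yields both the per-point bound (this is where $\|U(p)\|_2^2$ enters) and, through the $f$-SVD upper bound with $\alpha$ and $z=2$, the inequality $\sum_{p}w(p)\|U(p)\|_2^2\le d$ that gives claim (ii). In other words, the denominator lower bound is certified inside the framework by the L\"{o}wner-ellipsoid construction for the function actually being summed, not by relating the restricted quadratic form to the global one. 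To salvage your direct route you would have to certify a lower bound on $\sum_{q\in P_{y(p)}}w(q)|q^Tx|^2$ itself (e.g.\ an $f$-SVD built per label class), rather than reuse the global factorization.
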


\begin{proof}
Put $p \in P$ and let $P_{y(p)}$ denote the subset of points from $P$ with same label as $p$, $P_{y(p)} = \br{q \mid q \in P, y(q) = y(p)}$. Observe that for every $q \in P_{p}$
\[
\sup\limits_{(x , b) \in \REAL^{d} \times \REAL} \frac{w(p) \flog(p,(x \mid b))}{\sum\limits_{q \in P} w(q) \flog(q,(x\mid b))} \leq \sup\limits_{(x , b) \in \REAL^{d} \times \REAL} \frac{w(p) \flog(p,(x \mid b))}{\sum\limits_{q \in P_{y(p)}} w(q) \flog(q,(x\mid b))},
\]
where the inequality holds since $P_{y(p)} \subseteq P$, and $\flog(q,(x \mid b)) \geq 0$ for every $q \in P$, and $(x \mid b) \in \REAL^d \times \REAL$.

Note the following:
\begin{enumerate}[label=(\alph*)]
    \item For every $q \in P$, $ x \in \Q$ and $\gamma \geq  0$ we have $\abs{q^T \gamma x} = \gamma \abs{q^T x}$.
    \item Since $\abs{q^T x}$ is convex function, it also holds that $\sum\limits_{q \in P} \abs{q^T x}^2$ is convex due to the fact that sum of convex functions is also convex,
    \item The level set $\br{x \middle| x \in \Q, \sum\limits_{q \in P} w(q) \abs{q^Tx}^2 \leq 1}$ is convex and is centrally symmetric.
    \item For every $x\in \REAL^{d}$, and $b\in \REAL$ we have that
    \[
    \frac{w(q) \maxArgs{\frac{1}{\lambda} \ln{\left( 1 + e^b\right)}, \frac{1}{2\sum\limits_{q \in P_{y(p)}} w(q)} \norm{x}_2^2}}{\sum\limits_{\Tilde{q} \in P_{y(p)}} w(\Tilde{q}) \maxArgs{\frac{1}{\lambda} \ln{\left( 1 + e^b\right)}, \frac{1}{2\sum\limits_{q \in P_{y(p)}} w(q)} \norm{x}_2^2}} \leq 2 \frac{w(q)}{\sum\limits_{\Tilde{q} \in P_{y(p)}} w\term{\Tilde{q}}},
    \]
    where the inequality holds by plugging $a_1 := w(q) \frac{1}{\lambda} \ln{\left( 1 + e^b\right)}$, $a_2 := \frac{1}{2\sum\limits_{q \in P_{y(p)}} w(q)} \norm{x}_2^2$, $b_1 := \frac{\sum\limits_{\Tilde{q} \in P_{y(p)}} w\term{\Tilde{q}}}{\lambda} \ln{\left( 1 + e^b\right)}$, $b_2 := \frac{1}{2} \norm{x}_2^2$ into Claim~\ref{clm:maxProp}.
\end{enumerate}

Thus, combining (a), (b), (c), (d) and Lemma~\ref{lem:boundingLog2}, allows us to plug
\begin{itemize}
    \item $f(p,(x \mid b)) := \flog[f](p, (x \mid b))$, $g(p,(x \mid b)) := \frac{1}{\sqrt{\lambda}}\abs{p^Tx}$ and $h(p,(x \mid b)) := \maxArgs{\sqrt{\frac{1}{\lambda} \ln{\left( 1 + e^b\right)}}, \sqrt{\frac{1}{2\sum\limits_{q \in P_{y(p)}} w(q)} \norm{x}_2^2}}$, for every $p \in P$, $x \in \REAL^{d}$, $b\in \REAL$,
    \item $\alpha = 1$,
    \item $c_1 := \frac{\lambda}{8N}$ and $c_2 = 4$,
    \item $c_i := 1$ for every $i \in [3,4]$
    \item $c_5 := 2$,
    \item $z := 2$,
    \item $v_j := e_j$ for every $j \in [d]$ where $e_j$ denotes the vector with a $1$ in the $j$th coordinate and $0$'s elsewhere,
    \item and $c := 1$,
\end{itemize}
into Lemma~\ref{thm:sensitivityBound}, which yields that $\flog \in \set{F}$ and the sensitivity of each point $q \in P_{y(p)}$ is bounded by
$$s(q) =  \frac{32}{\lambda} \term{2 \frac{w(q)}{\sum\limits_{\Tilde{q} \in P_{y(p)}} w\term{\Tilde{q}}} + w(q)\sum\limits_{j = 1}^d\abs{U(q)^Te_j}^2}\sum\limits_{\Tilde{q} \in P_{y(p)}} w(\Tilde{q}).$$

Claim (i) now holds since for every $q \in P$,
\[
\sum\limits_{j = 1}^d\abs{U(q)^Te_j}^2 = \norm{U(q)}_2^2,
\]
where the equality follows from definition of $e_j$ for every $j \in [d]$.


As for the total sensitivity, we have by Lemma~\ref{thm:sensitivityBound},
\[
\sum\limits_{q \in P_{y(p)}} s(q) \leq \frac{32}{\lambda} \left(2 \frac{w(q)}{\sum\limits_{\Tilde{q} \in P_{y(p)}} w\term{\Tilde{q}}} + d \right) \sum\limits_{q \in P_{y(p)}} w(p),
\]
and
\[
\sum\limits_{q \in P \setminus P_{y(p)}} s(q) \leq \frac{32}{\lambda} \left(2 \frac{w(q)}{\sum\limits_{\Tilde{q} \in P \setminus P_{y(p)}} w\term{\Tilde{q}}} + d \right) \sum\limits_{q \in P \setminus P_{y(p)}} w(p).
\]

Hence, Claim (ii) holds as
\[
\sum\limits_{q \in P} s(q) \leq \frac{32}{\lambda} \left(2 + d \right) \sum\limits_{q \in P} w(q).
\]
\end{proof}

\setcounter{myCounter}{\getrefnumber{lem:sensBoundLogistic}}

\setcounter{theorem}{\getrefnumber{thm:logisticReg}-1}
\logisticReg*
\begin{proof}
First, observe that by Lemma~\ref{lem:sensBoundLogistic} the total sensitivity is bounded by $t := \frac{32}{\lambda} \left(2 + d \right) \sum\limits_{q \in P} w(p)$. Hence, plugging $s(p)$ for every $p \in P$ from Lemma~\ref{lem:sensBoundLogistic}, $t:=t$,  $\eps:=\eps$ and $\delta:=\delta$ into Theorem~\ref{thm:runTime}, yields that $(S,v)$ is an $\eps$-coreset of size $\bigO{\frac{d\W}{\C \eps^2} \term{d \log{\term{\frac{dC}{\C}}} + \log{\term{\frac{1}{\delta}}}}}$. 
\end{proof}

\setcounter{theorem}{\themyCounter}

\subsection{Proof of Corollary~\ref{thm:lpRegressionNC}}

\setcounter{theorem}{26}

\begin{lemma}
\label{lem:sensBoundLZNCReg}
Let $(P,w,\REAL^{d+1},\fnclz)$ be a query space, such that for every $p \in P$ and $x\in \REAL^d$, $$\fnclz(p,x) = \abs{p^Tx}^z.$$
Let $(U,D,V)$ be the $f$-SVD of $(P,w)$ with respect to $\fnclz[f]$. Then, claims (i) -- (ii) hold as follows:
\begin{enumerate}[label=(\roman*)]
\item for every $p \in P$, the sensitivity of $p$ with respect to the query space $(P,w, \Q,\fnclz)$ is bounded by
\[
s(p) = w(p)\norm{U(p)}_z^z,
\]
\item and the total sensitivity is bounded by
\[
\sum\limits_{p \in P} s(p) \leq n^{1-z} d^{\frac{z}{2} + 1}.
\]
\end{enumerate}

\end{lemma}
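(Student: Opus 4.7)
The plan is to instantiate the general framework of Lemma~\ref{thm:sensitivityBound:appendix} with explicit choices of the parameters, and then reduce everything to a clean norm inequality between $\ell_2$ and $\ell_z$ on $\REAL^d$.

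First I would identify the near-convex structure of $\fnclz$ by setting $g(p,x):=|p^Tx|$, $h\equiv 0$, exponent $z:=z$, and constants $c_1=c_2=c_3=c_4=1$ (while $c_5$ is vacuous since $h\equiv 0$). Then $\fnclz(p,x)=g(p,x)^z+h(p,x)^z$ (Property (i)), $g(p,bx)=b\,g(p,x)$ (Property (ii)), Property (iii) is trivial, and since $\max(1,z)=1$ for $z\in(0,1)$, the level set $\mathcal{X}_g=\{x:\sum_p w(p)^{1/z}|p^Tx|\le 1\}$ is convex and centrally symmetric, satisfying Property (iv) whenever $P$ spans $\REAL^d$. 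This certifies $\fnclz\in\set{F}$ and legitimizes the use of the $f$-SVD $(U,D,V)$ of $(P,w)$.

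Next, I would compute the key quantity appearing in the hypothesis of Lemma~\ref{thm:sensitivityBound:appendix}. Since $V$ is orthogonal, $(DV^T)^{-1}=VD^{-1}$, and therefore for any unit vector $y\in\REAL^d$,
\[
g\bigl(p,(DV^T)^{-1}y\bigr)=\bigl|p^T V D^{-1}y\bigr|=\bigl|(D^{-1}V^Tp)^T y\bigr|=|U(p)^T y|.
\]
I would then choose $v_j:=e_j$ (standard basis) for $j\in[d]$, so that $g(p,(DV^T)^{-1}v_j)=|U(p)_j|$. The hypothesis of Lemma~\ref{thm:sensitivityBound:appendix} becomes $|U(p)^T y|^z\le c\sum_{j=1}^d |U(p)_j|^z$ for every unit $y$. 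The verification uses Cauchy--Schwarz to bound $|U(p)^T y|\le\|U(p)\|_2$, followed by the standard monotonicity of $\ell_q$-norms for $0<q\le 2$, namely $\|U(p)\|_2\le\|U(p)\|_z$. Raising to the $z$-th power yields
\[
|U(p)^T y|^z\le\|U(p)\|_2^z\le\|U(p)\|_z^z=\sum_{j=1}^d|U(p)_j|^z,
\]
so $c=1$ suffices. This norm comparison (available precisely because $z<2$ and in particular $z\in(0,1)$) is the one substantive ingredient.

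Finally, plugging $c_1=c_2=c_3=c_4=c=1$, $c_5=0$, and the $v_j=e_j$ into Lemma~\ref{thm:sensitivityBound:appendix} directly yields the pointwise bound
\[
s(p)\le w(p)\sum_{j=1}^d|U(p)_j|^z=w(p)\,\|U(p)\|_z^z,
\]
establishing Claim (i). For Claim (ii), the same substitution in the total-sensitivity bound gives $\sum_{p\in P}s(p)\le\max\{n^{1-z},1\}\,\alpha^z d$; since $z\in(0,1)$ implies $n^{1-z}\ge 1$, and since $\alpha\in\Theta(\sqrt d)$ from the $f$-SVD, we obtain $\sum_p s(p)\le n^{1-z}d^{z/2+1}$. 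The only mild obstacle in the whole argument is bookkeeping the constants against those of Definition~\ref{def:familyOfConvexFuncs_Gen}; the analytical content reduces to the $\ell_z$--$\ell_2$ inequality.
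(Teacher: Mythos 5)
Your proof is correct and follows essentially the same route as the paper's: both instantiate the general sensitivity lemma with $g(p,x)=|p^Tx|$, $h\equiv 0$, $v_j=e_j$, $c=1$, $\alpha=\sqrt d$, and both reduce the key hypothesis to Cauchy--Schwarz plus the $\ell_z\ge\ell_2$ norm comparison (valid since $z\le 2$). The only cosmetic difference is that you track the substitution $p\mapsto U(p)$ more explicitly when verifying $g(p,(DV^T)^{-1}y)=|U(p)^Ty|$, whereas the paper states the inequality for a generic vector and leaves that substitution implicit.
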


\begin{proof}
Let $g : P \to [0, \infty)$ such that for every $p \in P$ and $x \in \REAL^d$, $g(p,x) = \abs{p^Tx}$, and for every $ i \in [d]$ let $e_i$ denote the vector with $1$ in the $i$th coordinate and $0$'s elsewhere. Observe that:
\begin{enumerate}[label=(\alph*)]
    \item For every $q \in P$, $ x \in \Q$ and $b \geq  0$ we have $g(p, b\cdot x) = b \cdot g(p,x)$.
    \item Since $g(q,x)$ is a convex function for every $q \in P$, it also holds that $\sum\limits_{q \in P} w(q)^{\frac{1}{z}} g(q,x)$ is convex due to the fact that sum of convex functions is also convex.
    \item The level set $\br{x \middle| x \in \Q, \sum\limits_{q \in P} w(q)^{\frac{1}{z}} g(q, x) \leq 1}$ is convex and is centrally symmetric.
    \item In addition, for every unit vector $y \in \REAL^d$
\begin{equation*}
\abs{p^Ty}^z \leq \norm{p}_2^z \leq \norm{p}_z^z = \sum\limits_{i=1}^d \abs{p^Te_i}^z,
\end{equation*}
where the first inequality holds by Cauchy's inequality, the second inequality is by Lemma~\ref{thm:equivalenceNorms}, and the equality is by properties of norm.
\end{enumerate}

Hence combining (a), (b), (c) and (d), allows us to plug
\begin{itemize}
\item $f(q,x) := \fnclz[f](q,x)$, $g(q,x) := \abs{{q}^{T}x}$ and $h(q,x) := 0$ for every $q \in P$ and $x \in \Q$,
\item $c_i := 1$ for every $i \in [4]$,
\item $c_5 := 0$,
\item $\alpha := \sqrt{d}$,
\item $v_j := e_j$ for every $j \in [d]$ where $e_j$ denotes the vector with a $1$ in the $j$th coordinate and $0$'s elsewhere,
\item $c := 1$, and
\item $z := z$
\end{itemize}
into Lemma~\ref{thm:sensitivityBound}, which yields that $\fnclz \in \set{F}$ and the sensitivity of each point $p \in P$ is bounded by by
\begin{equation*}
\label{eq:L1SVD}
s(p) = \sum\limits_{i=1}^d \abs{U(p)^Te_i}_z^z,
\end{equation*}
and the total sensitivity is bounded by
\begin{equation*}
\sum\limits_{q \in P} s(q) \leq n^{1-z} d^{\frac{z}{2}+1}.
\end{equation*}
\end{proof}

\setcounter{myCounter}{\getrefnumber{lem:sensBoundLZNCReg}}

\setcounter{theorem}{\getrefnumber{thm:lpRegressionNC}-1}

\lpRegressionNC*

\begin{proof}
First, observe that by Lemma~\ref{lem:sensBoundLZNCReg}, the total sensitivity is bounded by $t := n^{1-z} d^{\frac{z}{2} + 1}$. Plugging $s(p)$ for every $p \in P$ from Lemma~\ref{lem:sensBoundLZNCReg}, $t:=t$ , $\eps:=\eps$ and $\delta:=\delta$ into Theorem~\ref{thm:runTime}, yields an $\eps$-coreset of size $\bigO{\frac{n^{1-z}d^{\frac{z}{2} + 1}}{\eps^2} \term{d \log{\term{n^{1-z}d^{\frac{z}{2} + 1}}} + \log{\term{\frac{1}{\delta}}}}}$.
\end{proof}

\setcounter{theorem}{27}

\subsection{Proof of Corollary~\ref{lem:SVM}}

\begin{restatable}{lemma}{boundingSVMs}
\label{lem:boundingSVMs}
Let $\gamma \in (0,1]$, $N \geq 1$,  and let $c \in [1,N]$. Let $p \in \REAL^d$ such that $\norm{p}_2 \leq 1$ and let $X = \left\lbrace (x,b) \in \REAL^d \times \REAL \mid \norm{x}_2 \geq \gamma, \abs{b} \leq 9\norm{x}_2 \right\rbrace$. Then, for every $(x,b) \in X$, claims (i) -- (ii) hold as follows:

\begin{enumerate}[label=(\roman*)]
\item $\frac{\norm{x}_2^2}{N} + \frac{1}{c} \maxArgs{0, 1 + p^Tx + b} \leq \frac{2}{c}\abs{p^Tx}^2 + 2 \maxArgs{\frac{1}{c},\frac{b}{c}, \frac{\norm{x}_2^2}{N}}$,
\item $\frac{\norm{x}_2^2}{N} + \frac{1}{c} \maxArgs{0, 1 + p^Tx + b} \geq \frac{c\gamma^2}{\term{1 + 10 \gamma}N} \left( \frac{1}{c}\abs{p^Tx}^2 + \maxArgs{\frac{1}{c}, \frac{b}{c}, \frac{\norm{x}_2^2}{N}} \right)$.
\end{enumerate}
\end{restatable}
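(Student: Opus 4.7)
I will prove both (i) and (ii) by a case analysis on the sign of the hinge argument $1+p^Tx+b$, combined with a sub-analysis on which of the three quantities $\tfrac{1}{c}$, $\tfrac{b}{c}$, $\tfrac{\|x\|_2^2}{N}$ realises $M:=\max\{\tfrac{1}{c},\tfrac{b}{c},\tfrac{\|x\|_2^2}{N}\}$. The overall objective is to put the (scaled) SVM loss $\tfrac{\|x\|_2^2}{N}+\tfrac{1}{c}(1+p^Tx+b)_+$ into bi-Lipschitz equivalence with the surrogate $\tfrac{|p^Tx|^2}{c}+M$, so that it fits the $g^z+h^z$ decomposition of Definition~\ref{def:familyOfConvexFuncs} with $z=2$, $g(p,(x,b))=\tfrac{1}{\sqrt{c}}|p^Tx|$ (convex in $x$), and an $h$ equivalent to $\sqrt{M}$. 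The hypothesis $\|p\|_2\leq 1$ will be used throughout via $|p^Tx|\leq\|x\|_2$.

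\textbf{Upper bound (i).} When $1+p^Tx+b\leq 0$ the hinge term vanishes and the claim reduces to $\tfrac{\|x\|_2^2}{N}\leq 2M$, which is immediate since $\tfrac{\|x\|_2^2}{N}\leq M$. Otherwise I will combine the subadditivity $(1+p^Tx+b)_+\leq 1+|p^Tx|+\max\{0,b\}$ with the AM-GM-type inequality $|p^Tx|\leq 1+|p^Tx|^2$; this isolates a single $\tfrac{|p^Tx|^2}{c}$ contribution (absorbed into the right-hand side's $\tfrac{2}{c}|p^Tx|^2$), while each of the remaining terms $\tfrac{1}{c}$, $\tfrac{\max\{0,b\}}{c}$, $\tfrac{\|x\|_2^2}{N}$ is individually at most $M$ and so can be bundled into $2M$.

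\textbf{Lower bound (ii).} I will split on which of the three terms realises $M$. The easiest case is $M=\tfrac{\|x\|_2^2}{N}$: since $|p^Tx|^2\leq\|x\|_2^2$ and $c\leq N$, $\tfrac{|p^Tx|^2}{c}+M\leq\tfrac{N+c}{c}M$, so the regularizer $\tfrac{\|x\|_2^2}{N}=M$ alone supplies the bound, with coefficient $\tfrac{c}{N+c}$ which exceeds $\kappa:=\tfrac{c\gamma^2}{(1+10\gamma)N}$. For $M\in\{\tfrac{1}{c},\tfrac{b}{c}\}$, I will use $|p^Tx|\leq\|x\|_2$ and $b\leq 9\|x\|_2$ to bound $\tfrac{|p^Tx|^2}{c}+M$ above by a multiple of $\tfrac{\|x\|_2^2}{c}\cdot(1+\tfrac{9}{\|x\|_2}+\tfrac{1}{\|x\|_2^2})$, and then use $\|x\|_2\geq\gamma$ to turn this into a quantity proportional to $\tfrac{\|x\|_2^2}{c}\cdot\tfrac{1+10\gamma}{\gamma^2}$ (this is where the combination $1+10\gamma$ emerges). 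Combined with $\tfrac{\|x\|_2^2}{N}\leq\text{LHS}$, the resulting lower bound has coefficient $\tfrac{c\gamma^2}{N(1+10\gamma)}=\kappa$. In the sub-case where the hinge loss actually vanishes, I will replace the upper bound on $\tfrac{|p^Tx|^2}{c}+M$ by the observation that $1+p^Tx+b\leq 0$ forces $|p^Tx|\geq|1+b|$, hence $\|x\|_2\geq|1+b|$, so that the regularizer simultaneously dominates both the $\tfrac{|p^Tx|^2}{c}$ and the $M$ contributions.

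\textbf{Main obstacle.} The tightest step is the sub-case $M=\tfrac{b}{c}$ with $\|x\|_2\gg 1$ and the hinge loss positive: both terms on the left can contribute, and arriving at the specific denominator $1+10\gamma$ requires careful balance between the additive offset $1$ inside the hinge and the multiplicative slope $9$ in the constraint $|b|\leq 9\|x\|_2$ (giving the $10=9+1$), together with the lower bound $\|x\|_2\geq\gamma$. Tracking these constants through each sub-case without introducing slack, in particular keeping the $\gamma^2$ numerator rather than a weaker $\gamma$, is the most delicate computational part of the proof.
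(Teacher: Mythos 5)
Your plan follows essentially the same route as the paper's. For claim~(ii) the driving estimates coincide exactly: lower-bound the left side by the regularizer $\norm{x}_2^2/N$ alone, and upper-bound $\frac{1}{c}\abs{p^Tx}^2+\max\{\frac{1}{c},\frac{b}{c},\frac{\norm{x}_2^2}{N}\}$ using $\abs{p^Tx}\leq\norm{x}_2$, $b\leq 9\norm{x}_2\leq 9\norm{x}_2^2/\gamma$ and $1\leq\norm{x}_2^2/\gamma^2$, which is precisely how the paper produces the factor $\frac{c\gamma^2}{(1+10\gamma)N}$; your split on which term attains the max (as opposed to the paper's split on the sign of $1+p^Tx+b$) is cosmetic, and your extra sub-case when the hinge vanishes is superfluous since the regularizer lower bound works unconditionally. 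For claim~(i) both you and the paper split on the sign of $1+p^Tx+b$ and use a quadratic bound on the positive branch.

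One caveat that affects both your sketch and the paper's proof of~(i): after isolating the $\frac{1}{c}\abs{p^Tx}^2$ term you are left with $\frac{2}{c}+\frac{\max\{0,b\}}{c}+\frac{\norm{x}_2^2}{N}$, and ``each $\leq M$, bundle into $2M$'' actually gives up to $4M$. Indeed~(i) with the stated constant $2$ is false: take $p=0_d$, $b=1$, $\norm{x}_2=\gamma=1$, $c=1$, $N\geq 2$, so the left side is $2+\frac{1}{N}>2=$ right side. The paper's own step $2+2\abs{p^Tx}^2+b\leq 2\abs{p^Tx}^2+2\max\{1,b\}$ fails at the same example. Since downstream one only needs \emph{some} constants $c_1,c_2>0$, this slack is a defect of the lemma's statement rather than of your plan, but be aware that as stated~(i) cannot be established with the constant~$2$; you should either enlarge the constant or not expect the clean $2M$ to come out.
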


\begin{proof}
Put $(x,b) \in X$.

\paragraph{Proof of Claim (i).} The proof is by the following case analysis:
\begin{enumerate}
\item If $p^Tx + b \geq 0$, we have
\[
\maxArgs{0,1+p^Tx+b} = 1 + p^Tx + b \leq 2 + 2 \abs{p^Tx}^2 + b \leq 2\abs{p^Tx}^2 + 2 \maxArgs{1, b},
\]
where the equality holds by the assumption of the case, and the first inequality holds since for every $z\in \REAL$, we have $1 + z \leq 2z^2 + 2.$

\item Otherwise,
\[
\maxArgs{0,1+p^Tx+b} \leq 1 \leq  2\maxArgs{1, b} \leq 2 \abs{p^Tx}^2 + 2\maxArgs{1, b}
\]
where the first inequality holds by the assumption of the case.
\end{enumerate}
By taking both the above cases in mind, Claim (i) holds.

\paragraph{Proof of Claim (ii).} Similar to the proof of Claim~(i), we use the same case analysis:

\begin{enumerate}
\item If $p^Tx + b \geq 0$, we observe that
\begin{equation}
\begin{split}
\frac{1}{c}\maxArgs{0,1+p^Tx+b} + \frac{\norm{x}_2^2}{N} &= \frac{1}{c} \left( 1 + p^Tx + b \right) + \frac{\norm{x}_2^2}{N} \\
&\geq \frac{c}{2N} \left( \frac{1}{c} \abs{p^Tx}^2 + \maxArgs{1, b, \frac{1}{2N}\norm{x}_2^2}\right),
\end{split}
\end{equation}
where the equality holds by the assumption of this case, and the inequality holds since $\abs{p^Tx}^2 \leq \norm{x}_2^2$ due to the assumption that $\norm{p}_2 \leq 1$.

\item Otherwise,
\begin{equation}
\label{eq:svm_lower_1}
\frac{\abs{p^Tx}^2}{c} + \max\br{\frac{1}{c}, \frac{b}{c}, \frac{\norm{x}_2^2}{N}} \leq \frac{1}{c} + \frac{10\norm{x}_2^2}{\gamma c},
\end{equation}
where the inequality follows since by the definition of the set $X$, we have that $b \leq 9\norm{x}_2 \leq \frac{9\norm{x}_2^2}{\gamma}$.

We also note that,
\begin{equation}
\label{eq:svm_lower_2}
\frac{1}{c}\maxArgs{0,1+p^Tx+b} + \frac{\norm{x}_2^2}{N} \geq \frac{\norm{x}_2^2}{N},
\end{equation}
holds since the \emph{$\max$} term is non-negative.

Let $l = \frac{c\gamma^2}{N \left(1 + 10\gamma \right)}$. Observe that
\begin{equation}
\label{eq:svm_lower_3}
\frac{l}{c} \left(1 + \frac{10\norm{x}^2}{\gamma} \right) \leq \frac{\norm{x}_2^2}{N},
\end{equation}
since $\norm{x}_2 \geq \gamma$.

Hence, we obtain that
\[
\frac{1}{c}\maxArgs{0,1+p^Tx+b} + \frac{\norm{x}_2^2}{N} \geq \frac{c\gamma^2}{N \left(1 + 10\gamma \right)} \left( \frac{\abs{p^Tx}^2}{c} + \maxArgs{\frac{1}{c}, \frac{b}{c}, \frac{1}{2N} \norm{x}_2^2}\right),
\]
where the inequality holds by combining~\eqref{eq:svm_lower_1},~\eqref{eq:svm_lower_2} and~\eqref{eq:svm_lower_3}.

Combining both cases proves Claim (ii).


\end{enumerate}

\end{proof}

\begin{lemma}
\label{lem:sensBoundSVM}
Let $(P,w, \REAL^{d+1}, \fsvm)$ be a query space, $y : P \to \br{1,-1}$ be a labelling function, $\C \geq 1$ be a regularization parameter such that for every $p \in P$, $x \in \REAL^d$, and $b \in \REAL$,
$$\fsvm(p,(x\mid b)) = \frac{1}{2\sum_{q \in P} w(q)}\norm{x}_2^2 +  \lambda \max\br{0, 1-\left( p^Tx + y(p)b \right)}.$$
For every $p \in P$, let $P_{y(p)} = \br{q \mid q \in P, y(q) = y(p)}$ denote the set of points with the same label as the label assigned to $p$.

Let $(U,D,V)$ be the $f$-SVD of $(P,w)$ with respect to $\fsvm[f]$. Then, claims (i) -- (ii) hold as follows:

\begin{enumerate}[label=(\roman*)]
\item \label{claim:svm_1}for every $p \in P$, the sensitivity of $p$ with respect to the query space $(P,w, \REAL^{d+1},\fsvm)$ is bounded by
\begin{align*}
s(p)  = &\maxArgs{\frac{9w(p)}{\sum\limits_{q \in P_{y(p)}} w(q)}, \frac{2w(p)}{\sum\limits_{q \in P \setminus P_{y(p)}} w(q)}} + \frac{13w(p)}{4 \sum\limits_{q \in P_{y(p)}} w(q)} \\
&+ \frac{125 \sum\limits_{q \in P} w(q)}{4\lambda} \cdot \term{w(q) \norm{U(p)}_2^2 + \frac{w(p)}{\sum\limits_{q \in P} w(q)}},
\end{align*}

\item \label{claim:svm_2} and the total sensitivity is bounded by
\[
\sum\limits_{p \in P} s(p) \leq 25 +  \frac{\sum\limits_{p \in P_{y(p)}}w(p)}{\sum\limits_{q \in P \setminus P_{y(p)}} w(q)} + \frac{\sum\limits_{q \in P \setminus P_{y(p)}}w(p)}{\sum\limits_{p \in P} w(q)} + \frac{125 \sum\limits_{q \in P} w(q)}{4C} \cdot \term{d + 2}.
\]
\end{enumerate}
\end{lemma}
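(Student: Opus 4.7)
The proof follows the template of Lemma~\ref{lem:sensBoundLogistic}: exhibit $\fsvm$ as a near-convex loss (Definition~\ref{def:familyOfConvexFuncs_Gen}) by choosing a convex, $1$-homogeneous component $g$ and a residual $h$, and then apply Lemma~\ref{thm:sensitivityBound}. Set $g(p,(x\mid b)) := \abs{p^T x}$, which is convex and $1$-homogeneous in $(x\mid b)$, and bundle the remaining pieces --- the constant $1$ inside the hinge, the bias $y(p) b$, and the regularization $\norm{x}_2^2/(2\sum_q w(q))$ --- into $h(p,(x\mid b))$, defined as a max of the form $\max\{\sqrt{\lambda},\ \sqrt{\lambda \,|b|},\ \norm{x}_2/\sqrt{2\sum_q w(q)}\}$, with exponent $z := 2$. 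Property~(ii) of Definition~\ref{def:familyOfConvexFuncs_Gen} is immediate from the definition of $g$, while the two-sided sandwich of Property~(i) is exactly what Lemma~\ref{lem:boundingSVMs} supplies, after identifying $c := 1/\lambda$, $N := 2\sum_q w(q)$, and fixing a concrete $\gamma \in (0,1]$; this yields constants $c_1, c_2$ valid on the region $\{\norm{x}_2 \geq \gamma,\ |b| \leq 9\norm{x}_2\}$.

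The reason for the classwise split $P = P_{y(p)} \cup (P\setminus P_{y(p)})$ appearing in the statement is that the bias $y(p) b$ is the only term that distinguishes the two classes: for $q$ of the same label as $p$ the contribution of $|b|$ inside $h(q,\cdot)^2$ lines up with that of $p$, whereas for $q$ of the opposite label it may cancel against the constant $1$. Hence, in verifying Property~(iii) of Definition~\ref{def:familyOfConvexFuncs_Gen}, the tightest bound is obtained by splitting the denominator $\sum_q w(q)h(q,\cdot)^2$ between the two subsets and applying Claim~\ref{clm:maxProp} with different normalizations on each side. This split produces exactly the $\max\bigl\{9w(p)/\sum_{P_{y(p)}} w,\ 2w(p)/\sum_{P\setminus P_{y(p)}} w\bigr\}$ structure in Claim~(i), while the residual ``pure regularization'' direction contributes the $13w(p)/(4\sum_{P_{y(p)}} w)$ additive term.

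With these ingredients, plug the $f$-SVD $(U,D,V)$ of $(P,w)$ into Lemma~\ref{thm:sensitivityBound} with $z=2$, $\alpha=1$, $c=1$, and $v_j := e_j$ the standard basis, exactly as in the logistic regression proof, so that $\sum_{j=1}^{d+1} |U(p)^T e_j|^2 = \norm{U(p)}_2^2$. The $g$-contribution then produces the $\norm{U(p)}_2^2$ term multiplied by $c_2/c_1 \sim \sum_q w(q)/\lambda$, which after tracking the explicit constants of Lemma~\ref{lem:boundingSVMs} (with the chosen $\gamma$) gives the numerical factor $125/4$, while the $h$-contribution together with the classwise analysis above yields the other three terms. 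Claim~(ii) then follows by summing the per-point bound over $p\in P$: the $\norm{U(p)}_2^2$ sum collapses to $d$ via the $f$-SVD upper bound (as in~\eqref{eq:sensSumBound}), the max-term is absorbed using Claim~\ref{clm:divOfPosSums} and cancels internally for same-label contributions while producing the cross ratios $\sum_{P_+} w / \sum_{P_-} w$ (and its inverse) for opposite-label ones, and the constant $w(p)/\sum_q w(q)$ sums to a bounded number, giving the stated $25 + \tfrac{125\sum_q w(q)}{4\lambda}(d+2)$ shape.

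The main obstacle is the query-region case analysis needed to invoke Lemma~\ref{lem:boundingSVMs} legitimately: its hypotheses $\norm{x}_2 \geq \gamma$ and $|b| \leq 9\norm{x}_2$ exclude two corner regions --- small $\norm{x}_2$, where the regularizer alone dominates, and bias-dominated queries, where the hinge essentially collapses to $1 - y(p) b$. These corner cases must be treated separately and it is exactly their sensitivity contribution (analyzed against the same-label and opposite-label subsums) that produces the first three terms of the per-point bound; matching the precise numerical constants ($9$, $2$, $13/4$, $125/4$) is then a tedious but mechanical bookkeeping step once the decomposition and the classwise split are fixed.
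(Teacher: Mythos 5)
Your plan follows the paper's proof essentially verbatim: the same near-convex decomposition ($g(p,(x\mid b)) = |p^Tx|$ up to scaling, a max-type $h$, $z=2$) sandwiched via Lemma~\ref{lem:boundingSVMs} with a concrete $\gamma$ chosen so that $(1+10\gamma)/\gamma^2 = 125/4$, the same invocation of Lemma~\ref{thm:sensitivityBound} with the $f$-SVD and $v_j = e_j$, the same separate treatment of the two excluded query regions ($\|x\|_2 < \gamma$ and $|b| > 9\|x\|_2$) via classwise case analysis, and the same summation argument for Claim (ii). The only quibble is your middle paragraph, which attributes the $\max\left\{9w(p)/\sum_{P_{y(p)}}w,\,2w(p)/\sum_{P\setminus P_{y(p)}}w\right\}$ and $13/4$ terms to verifying Property (iii) via Claim~\ref{clm:maxProp}; in the paper (and as your final paragraph correctly states) these arise from the direct case analysis of the corner regions, while the Property (iii)-type bound only supplies the $2w(p)/\sum_{q\in P}w(q)$ contribution inside the $\tfrac{125}{4\lambda}$ term.
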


\begin{proof}
Put $p \in P$, let $P_{y(p)}$ denote the subset of points from $P$ with same label as $p$, i.e., $P_{y(p)} = \br{q \mid q \in P, y(q) = y(p)}$, let $\gamma=0.4$, and let $X = \br{(x,b) \mid x \in \REAL, b \in \REAL, \norm{x}_2  \leq \gamma}.$ We have that
\begin{align}
&\sup\limits_{(x,b) \in \REAL^d \times \REAL} \frac{w(p) \fsvm(p,(x\mid b))}{\sum\limits_{q \in P} w(q) \fsvm(q,(x,b))}\nonumber \\
&\leq \sup\limits_{(x,b) \in X} \frac{w(p) \fsvm(p,(x \mid b))}{\sum\limits_{q \in P} w(q) \fsvm(q,(x \mid b))} + \sup\limits_{(x,b) \in \REAL^d \times \REAL \setminus X} \frac{w(p) \fsvm(p,(x \mid b))}{\sum\limits_{q \in P} w(q) \fsvm(q,(x \mid b))}, \label{lefthandside}
\end{align}
\paragraph{Proof of Claim~\ref{claim:svm_1}.}
By the above inequality, in order to bound the sensitivity of a point $p\in P$, we can bound the term in~\ref{lefthandside}. For that, we first proceed to bound the left hand side of~\eqref{lefthandside}.

\subparagraph{Handling queries from $X$.} We observe that for every $q \in P$

\begin{equation}
\label{eq:pxTox}
-\gamma \leq -\gamma \norm{q}_2 \leq -\norm{x}_2 \norm{q}_2 \leq q^Tx \leq \norm{x}_2 \norm{q}_2 \leq \gamma \norm{q}_2 \leq \gamma,
\end{equation}
where the first and last inequalities hold since $\norm{q} \leq 1$ for every $q \in P$, the second and fifth inequalities hold since $\norm{x}_2 \leq \gamma$, and the third and forth inequalities hold by Cauchy-Schwartz's inequality.

In addition,
\begin{align}
\label{eq:boundingX1_1}
\sup\limits_{(x,b) \in X} \frac{w(p) \fsvm(p,(x \mid b))}{\sum\limits_{q \in P} w(q) \fsvm(q,(x \mid b))}
\leq \sup\limits_{(x,b) \in X}\frac{w(p) \norm{x}_2^2}{\sum\limits_{q \in P} w(q) \norm{x}_2^2} + \sup\limits_{(x,b) \in X} \frac{w(p)\maxArgs{0, 1 + p^Tx + y(p)b}}{\sum\limits_{q \in P} w(q)\maxArgs{0, 1 + q^Tx + y(q)b}},
\end{align}
where the inequality holds by plugging $a_1 := \frac{w(p)}{2\sum\limits_{q \in P} w(q)}\norm{x}_2^2$, $r_1 := w(p)\maxArgs{0, 1 + p^Tx +y(p)b}$, $a_2 := \frac{1}{2}\norm{x}_2^2$ and $r_2 := \sum\limits_{q \in P} w(q) \fsvm(q, (x,b)) - \frac{1}{2}\norm{x}_2$ into Claim~\ref{clm:divOfPosSums}.

Bounding the rightmost term of~\eqref{eq:boundingX1_1} requires carefully checking three cases:
\begin{enumerate}[label=(\alph*)]
\item If $y(p) b > 0$, then we have
\begin{equation}
\begin{split}
\frac{w(p)\maxArgs{0, 1 + p^Tx + y(p)b}}{\sum\limits_{q \in P} w(q)\maxArgs{0, 1 + q^Tx + y(q)b}} &\leq \frac{w(p)\maxArgs{0,1 + p^Tx + y(p)b} }{\sum\limits_{q \in P_{y(p)}} w(q)\maxArgs{1 + q^Tx + y(q)b}}
\\
&= \frac{w(p)\term{1 + p^Tx + y(p)b} }{\sum\limits_{q \in P_{y(p)}} w(q)\term{1 + q^Tx + y(q)b}} \\
&\leq \frac{w(p) \term{1 + p^Tx} }{\sum\limits_{q \in P_{y(p)}} w(q)\term{1 + q^Tx}} + \frac{y(p) w(p) b}{\sum\limits_{q \in P_{y(p)}} y(q) w(q) b},
\end{split}
\end{equation}
where the first inequality holds since $P_{y(p)} \subseteq P$, the equality follows from combining the assumption that $\gamma \in (0,1)$ and~\eqref{eq:pxTox}, and the last inequality holds by combining the fact that $1 + q^Tx \geq 0$ for every $q \in P_{y(q)}$, the assumption of the case, and the result of plugging $a_1 := w(p)\left(1 + p^Tx \right)$, $r_1 := w(p) y(p) b$, $a_2 := \sum\limits_{q \in P_{y(p)}} w(q)\left( 1 + q^Tx \right)$ and $r_2 := \sum\limits_{q \in P_{y(p)}} y(q) w(q)b$ into Claim~\ref{clm:divOfPosSums}.

We also have
\begin{equation*}
\begin{split}
\frac{w(p)\maxArgs{0, 1 + p^Tx}}{\sum\limits_{q \in P} w(q)\maxArgs{0, 1 + q^Tx}} &\leq \frac{w(p)\maxArgs{0, 1 + \gamma \norm{p}_2}}{\sum\limits_{q \in P} w(q)\maxArgs{0, 1 - \gamma \norm{q}_2}} \\
&\leq \frac{w(p) \left( 1 + \gamma \right)}{\sum\limits_{q \in P} w(q) \left(1 - \gamma \right)},
\end{split}
\end{equation*}
where the first inequality holds by~\eqref{eq:pxTox}, and the second inequality follows from the assumption that for every $q \in P$, $\norm{q} \leq 1$.

\item If $y(p) b \in \left[-\gamma, 0 \right]$, then
\begin{align*}
\label{eq:boundingX1_2}
\begin{split}
&\frac{w(p)\maxArgs{0, 1 + p^Tx + y(p)b}}{\sum\limits_{q \in P} w(q)\maxArgs{0, 1 + q^Tx + y(q)b}}\leq \frac{w(p)\maxArgs{0, 1 + \gamma \norm{p}_2 + \gamma}}{\sum\limits_{q \in P_{y(p)}} w(q)\maxArgs{0, 1 - \gamma \norm{q}_2 - \gamma}}  \leq \frac{w(p)\maxArgs{0, 1 + 2\gamma}}{\sum\limits_{q \in P_{y(p)}} w(q) \left( 1-2 \gamma \right)},
\end{split}
\end{align*}
where the first inequality holds since $\abs{y(p) b} \leq \gamma$ and $P_{y(p)} \subseteq P$, and the second inequality holds since $\gamma \in \left(0, \frac{1}{2} \right)$.

\item Otherwise, we have $-\gamma > y(p)b$, which means that for every $q \in P$ such that $y(q) \neq y(p)$, we have $\gamma < y(q)b$.

Thus,
\begin{equation*}
\begin{split}
\frac{w(p)\maxArgs{0, 1 + p^Tx + y(p)b}}{\sum\limits_{q \in P} w(q)\maxArgs{0, 1 + q^Tx - \gamma}} &\leq \frac{w(p)\maxArgs{0, 1 + p^Tx + y(p)b}}{\sum\limits_{q \in P \setminus P_{y(p)}} w(q)\maxArgs{0, 1 + q^Tx - \gamma}} \\
&\leq \frac{w(p)\maxArgs{0, 1 + \gamma \norm{p}_2 + \gamma}}{\sum\limits_{q \in P \setminus P_{y(p)}} w(q)\maxArgs{0, 1 - \norm{q}_2 \gamma + \gamma}}\\
&\leq \frac{w(p)}{\sum\limits_{q \in P \setminus P_{y(p)}} w(q)},
\end{split}
\end{equation*}
where the first inequality holds since $P \setminus P_{y(p)} \subseteq P$, the second inequality follows from~\eqref{eq:pxTox}, and the last inequality holds by the assumption that $\norm{q}_2 \leq $ for every $q \in P$.

\end{enumerate}

Since $\gamma \in \left( 0, \frac{1}{2}\right)$, we have $1\leq \frac{1 + \gamma}{1 - \gamma} \leq
\frac{1 + 2 \gamma }{1 - 2 \gamma}$, and by that we get
\begin{equation}
\label{eq:less_gamma}
\frac{w(p)\maxArgs{0, 1 + \gamma}}{\sum\limits_{q \in P} w(q) \left( 1- \gamma \right)} \leq \frac{w(p)\maxArgs{0, 1 + 2\gamma}}{\sum\limits_{q \in P} w(q) \left( 1-2 \gamma \right)}.
\end{equation}

Combining the cases above with~\eqref{eq:less_gamma}, yields that
\begin{equation}
\begin{split}
\label{eq:boundingX1_3}
\sup_{(x,b) \in X} \frac{w(p) \fsvm(p,(x \mid b)}{\sum\limits_{q \in P} w(q) \fsvm(q,(x \mid b))} \leq 2 \maxArgs{\frac{w(p)\term{1 + 2\gamma}}{\sum\limits_{q \in P_{y(p)}} w(q) \left( 1-2 \gamma \right)}, \frac{w(p)}{\sum\limits_{q \in P \setminus P_{y(p)}} w(q)}}.
\end{split}
\end{equation}

\subparagraph{Handling queries from $\REAL^d \times \REAL \setminus X$.}
Put $(x,b) \in \REAL^d \times \REAL \setminus X$, and consider the following case analysis:

\begin{enumerate}[label=(\alph*)]
\item If $\abs{b} \leq 9\norm{x}_2$, then we note the following:
\begin{enumerate}[label=(\Alph*)]
    \item For every $q \in P$, $ x \in \Q$ and $\beta \geq  0$ we have $\abs{q^T \beta x} = \beta \cdot \abs{q^T x}$.
    \item Since $\abs{q^T x}$ is a convex function for every $q \in P$, it also holds that $\sum\limits_{q \in P} w(q) \abs{q^T x}^2$ is convex due to the fact that sum of convex functions is also convex.
    \item The level set $\br{x \middle| x \in \Q, \sum\limits_{q \in P} w(q) \abs{q^Tx}^2 \leq 1}$ is convex and is centrally symmetric.
    \item In addition, for every unit vector $y \in \REAL^d$
\begin{equation*}
\abs{q^Ty}^2 \leq \norm{q}_2^2 = \sum\limits_{i=1}^d \abs{q^Te_i}^2,
\end{equation*}
where the inequality holds by Cauchy's inequality and the equality holds by properties of norm.
\end{enumerate}

By combining (A), (B), (C), (D) and the result of substituting $c := \C$, $N:= 2\sum\limits_{q \in P} w(q)$ and $\gamma := 0.4$ into Lemma~\ref{lem:boundingSVMs}, we get that we can plug
\begin{itemize}
    \item $f(p, (x \mid b)) := \fsvm(p, (x \mid b))$, $g(p,(x \mid b)):= \frac{1}{\C}\abs{p^Tx}$, and $h(p,(x \mid b)) := \maxArgs{\frac{1}{\C}, \frac{b}{\C}, \frac{\norm{x}_2^2}{N}}$
    \item $\alpha := 1$,
    \item $c_1 := \frac{\C\gamma^2}{\term{1 + 10\gamma} \sum\limits_{q \in P} w(q)}$ and $c_2 := 2$,
    \item $c_i := 1$ for every $i \in [3,4]$
    \item $c_5 := 2$,
    \item $v_j := e_j$ for every $j \in [d]$ where $e_j$ denotes the vector with a $1$ in the $j$th coordinate and $0$'s elsewhere,
    \item and $c := 1$,
\end{itemize}
into Lemma~\ref{thm:sensitivityBound}, to obtain that $\fsvm \in \set{F}$ with respect to any $x \in \REAL^d \times \REAL \setminus X$ and the sensitivity $p$ is bounded by
\begin{equation}
\label{eq:svm_cost}
\begin{split}
\frac{w(p)\fsvm(p,(x \mid b))}{\sum_{q \in P} w(q)\fsvm(q,(x \mid b))} \leq \frac{\term{1 + 10\gamma} \sum\limits_{q \in P} w(q)}{\gamma^2\C} \term{\frac{2w(p)}{\sum_{q \in P} w(q)} + \sum\limits_{j=1}^d \abs{U(p)^Te_j}^2},
\end{split}
\end{equation}
with respect to any query in $\REAL^d \times \REAL \setminus X$.

\item If $y(p) b \geq 9 \norm{x}_2$ then we have that
\begin{equation}
\label{eq:svm_sens_1}
\begin{split}
\frac{w(p)\fsvm(p,(x \mid b))}{\sum_{q \in P} w(q)\fsvm(q,(x \mid b))} &\leq \frac{w(p) \norm{x}_2^2}{\norm{x}_2^2 \sum\limits_{q \in P} w(q)} + \frac{w(p)\maxArgs{0, 1 + p^Tx + y(p)b}}{\sum\limits_{q \in P}w(q) \maxArgs{0, 1 + q^Tx + y(q)b}} \\
&= \frac{w(p)}{\sum\limits_{q \in P} w(q)} + \frac{w(p)\maxArgs{0, 1 + p^Tx + y(p)b}}{\sum\limits_{q \in P}w(q) \maxArgs{0, 1 + q^Tx + y(q)b}},
\end{split}
\end{equation}
where the inequality holds by plugging $a_1 := \frac{w(p) \norm{x}_2^2}{2 \sum\limits_{q \in P} w(q)}$, $r_1 := w(p)\maxArgs{0, 1 + p^Tx + y(p)b}$, $a_2 := \frac{1}{2} \norm{x}_2^2$ and $r_2 = \sum\limits_{q \in P} w(q)\maxArgs{0, 1 + q^Tx + y(q)b}$ into Claim~\ref{clm:divOfPosSums}, and the equality holds since $\norm{x}_2 \geq \gamma$.

In addition, we observe that
\begin{equation}
\label{eq:svm_sens_2}
\begin{split}
\frac{w(p)\maxArgs{0, 1 + p^Tx + y(p)b}}{\sum\limits_{q \in P}w(q) \maxArgs{0, 1 + q^Tx + y(q)b}} &\leq \frac{w(p)\maxArgs{0, 1 + p^Tx + y(p)b}}{\sum\limits_{q \in P_{y(p)}}w(q) \maxArgs{0, 1 + q^Tx + y(q)b}} \\
&\leq \frac{w(p)\maxArgs{0, 1 + \norm{x}_2 + y(p)b}}{\sum\limits_{q \in P_{y(p)}}w(q) \maxArgs{0, 1 - \norm{x}_2 + y(q)b}} \\
&\leq \frac{w(p)\maxArgs{0, 1 + \frac{10y(p)}{9}b}}{\sum\limits_{q \in P_{y(p)}}w(q) \maxArgs{0, 1 + \frac{8y(q)}{9}b}}
\\
&= \frac{w(p)\term{1 + \frac{10y(p)}{9}b}}{\sum\limits_{q \in P_{y(p)}}w(q) \term{1 + \frac{8y(q)}{9}b}} \\
&\leq \frac{w(p)}{\sum\limits_{q \in P_{y(p)}} w(q)} + \frac{5 w(p)}{4 \sum\limits_{q \in P_{y(p)}} w(q)} \\
&= \frac{9w(p)}{4 \sum\limits_{q \in P_{y(p)}} w(q)}
\end{split}
\end{equation}
where the first inequality holds since $P \subseteq P_{y(p)}$, the second inequality holds since $\norm{q}_2 \leq 1$ for every $q \in P$, both the third inequality and the equality is by the assumption of the case, and the last inequality follows from plugging $a_1 := w(p)$, $r_1 := w(p) \frac{11 y(p)}{10} b$, $a_2 := \sum\limits_{q \in P} w(q)$, and $r_2 := \frac{8}{9} \sum\limits_{q \in P} w(q)$ into Claim~\ref{clm:divOfPosSums}.

Combining~\eqref{eq:svm_sens_1} and~\eqref{eq:svm_sens_2}, yields that
\[
\frac{w(p)\fsvm(p,(x \mid b))}{\sum_{q \in P} w(q)\fsvm(q,(x \mid b))} \leq \frac{13w(p)}{4 \sum\limits_{q \in P_{y(p)}} w(q)}
\]

\item Otherwise, i.e., $y(p) b \leq -9 \norm{x}_2$, we have that for every $q \in P_{y(p)}$
\[
\maxArgs{0, 1 + q^Tx + y(q)b} \leq \maxArgs{0, 1 + \norm{x}_2 + y(q)b} = 0,
\]
where the first inequality holds since $\norm{q}_2 \leq 1$ for every $q \in P$, and the fact that $1 - 8\gamma < 0$.

Thus,
\begin{equation}
\frac{w(p)\fsvm(p,(x \mid b))}{\sum\limits_{q \in P} \fsvm(p,(x\mid b))} \leq \frac{w(p)\fsvm(p,(x \mid b))}{\sum\limits_{q \in P_{y(p)}} w(q)\fsvm(q,(x\mid b))} = \frac{w(p)}{\sum\limits_{q \in P_{y(p)}} w(q)}.
\end{equation}
\end{enumerate}

By combining the three cases above, we obtain that
\begin{equation}
\begin{split}
\label{eq:svm_bound_sense_final}
s(p)  \leq &2\maxArgs{\frac{w(p)\term{1 + 2\gamma}}{\sum\limits_{q \in P_{y(p)}} w(q) \left( 1-2 \gamma \right)}, \frac{w(p)}{\sum\limits_{q \in P \setminus P_{y(p)}} w(q)}} + \frac{13w(p)}{4 \sum\limits_{q \in P_{y(p)}} w(q)} \\
&+ \frac{\term{1 + 10\gamma} \sum\limits_{q \in P} w(q)}{\gamma^2\C} \cdot \term{w(q) \norm{U(p)}_2^2 + \frac{w(p)}{\sum\limits_{q \in P} w(q)}},
\end{split}
\end{equation}

Claim~\ref{claim:svm_1} now holds as
\[
\sum\limits_{j = 1}^d \abs{U(p) e_j}^2 = \norm{U(p)}_2^2,
\]
where the equality follows from the definition of $e_j$ for every $j \in [d]$.

\paragraph{Proof of Claim~\ref{claim:svm_2}. }
As for the total sensitivity, we first note that that
\begin{equation}
\label{eq:sumWPerPy}
\sum\limits_{q \in P_{y(p)}} \frac{w(q)}{\sum\limits_{q^\prime \in P_{y(p)}} w(q^\prime)} = 1.
\end{equation}

In addition, by Lemma~\ref{thm:sensitivityBound},
\begin{equation}
\label{eq:svm_sum_U}
\sum\limits_{q \in P} w(q) \norm{U(q)}_2^2 \leq d.
\end{equation}

Hence,
\begin{align}
\label{eq:svm_sum_sens_1}\sum\limits_{q \in P} s(q) &\leq \sum\limits_{q \in P} 2\term{\frac{w(p)\term{1 + 2\gamma}}{\sum\limits_{q^\prime \in P_{y(q)}} w(q^\prime) \left( 1-2 \gamma \right)} +  \frac{w(q)}{\sum\limits_{q^\prime \in P \setminus P_{y(q)}} w(q^\prime)}} + \frac{13w(p)}{4 \sum\limits_{q \in P_{y(p)}} w(q)} \\
& \quad + \frac{\term{1 + 10\gamma} \sum\limits_{q \in P} w(q)}{\gamma^2C} \sum\limits_{q \in P} \term{w(q) \norm{U(p)}_2^2 + \frac{w(p)}{\sum\limits_{q \in P} w(q)}} \nonumber
\\
&\leq \frac{4\term{1+2\gamma}}{1-2\gamma} + \frac{\sum\limits_{q \in P_{y(p)}} w(q)}{\sum\limits_{q^\prime \in P \setminus P_{y(p)}} w(q^\prime)} + \frac{\sum\limits_{q^\prime \in P \setminus P_{y(p)}} w(q^\prime)}{\sum\limits_{q \in P_{y(p)}} w(q)} + \frac{13}{2} \label{eq:svm_sum_sens_2} \\
&+ \frac{\term{1 + 10\gamma} \sum\limits_{q \in P} w(q)}{\gamma^2\C} \cdot \term{d + 2} \nonumber
\end{align}
where~\eqref{eq:svm_sum_sens_1} holds since both arguments of the max operator at~\eqref{eq:svm_bound_sense_final} are non-negative and their sum exceeds the max among them, and~\eqref{eq:svm_sum_sens_2} holds by combining~\eqref{eq:sumWPerPy} with~\eqref{eq:svm_sum_U}

Claim~\ref{claim:svm_2} now holds by substituting $\gamma=0.4$.
\end{proof}

\setcounter{myCounter}{\getrefnumber{lem:boundingSVMs}}

\setcounter{theorem}{\getrefnumber{lem:SVM}-1}

\SVM*
\begin{proof}
First, observe that by Lemma~\ref{lem:sensBoundSVM} the total sensitivity of the query space $(P,w,\REAL^d,\fsvm)$ is bounded by $\bigO{\term{\frac{d\W}{\C} + \frac{\Tilde{\W}^2+1}{\Tilde{\W}}}}$. Let $s(p)$ be the upper bound on the sensitivity of each point $p \in P$ as in Lemma~\ref{lem:sensBoundSVM}, and let $t=\sum_{q\in P}s(q)$. Let $S$ be an i.i.d random sample of size $\bigO{\frac{1}{\eps^2}\term{\frac{d\W}{\C} + \frac{\Tilde{\W}^2+1}{\Tilde{\W}}} \term{d \log{\term{\frac{d\W}{\C} + \frac{\Tilde{\W}^2+1}{\Tilde{\W}}}} + \log{\frac{1}{\delta}}}}$, where each point $p \in P$ is sampled with probability $\frac{s(p)}{t}$, and let $v(p)=\frac{w(p)t}{s(p)|S|}$. Hence by Theorem~\ref{thm:coreset}, we get that with probability at least $1-\delta$, $(S,v)$ is an $\eps$-coreset for the query space $(P,w,\REAL^d,\fsvm)$ of size  $|S| \in \bigO{\frac{1}{\eps^2}\term{\frac{d\W}{\C} + \frac{\Tilde{\W}^2+1}{\Tilde{\W}}} \term{d \log{\term{\frac{d\W}{\C} + \frac{\Tilde{\W}^2+1}{\Tilde{\W}}}} + \log{\frac{1}{\delta}}}}$.

\end{proof}

\subsection{Proof of Corollary~\ref{thm:restrictedLp}}

First, we provide the following definitions.
\setcounter{theorem}{29}
\begin{definition}[Induced matrix norm]
\label{def:lpInducedNormMat}
Let $z \in [1, \infty]$. Then the \emph{$\ell_z$ induced norm} for any matrix $A \in \REAL^{d \times d}$, is defined by,
\[
\norm{A}_z = \max_{\substack{x \in \REAL^d \\ \norm{x}_z = 1}} \norm{Ax}_z.
\]
\end{definition}

\begin{definition}[SVD factorization of a square matrix]
\label{def:svdFac}
Let $A \in \REAL^{d \times d}$ be matrix. The \emph{SVD factorization} of $A$ is defined to be
\[
A = \set{U} \Sigma \set{V}^T,
\]
where $\set{U} \in \REAL^{d \times d}$ is an orthogonal matrix, $\Sigma \in \REAL^{d \times d}$ is a diagonal matrix of non-negative entries in a descending order, i.e, for every $i,j \in [d]$ such that $i \leq j$, $\Sigma_{i,i} \geq  \Sigma_{j,j}$, and finally $\set{V} \in \REAL^{d \times d}$ is an orthogonal matrix.
\end{definition}

\begin{restatable}{lemma}{boundingMinOperator}
\label{lem:boundingMinOperator}
For every vector $x=(x_1,\cdots,x_d)\in\REAL^d$ there is $j\in[d]$ such that
\[
\norm{x}_1
\leq  \left(\frac{3d}{2}-1\right)\cdot \abs{x_1+x_j}.
\]
Equality holds for $x=(1,-3,\cdots,-3)$ and every $j\in[d]$, i.e., the bound is tight.
\end{restatable}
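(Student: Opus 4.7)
The plan is to rephrase the conclusion as a lower bound, then argue by contradiction using the triangle inequality. Note that showing $\norm{x}_1 \leq \left(\frac{3d}{2}-1\right)\abs{x_1+x_j}$ for some $j$ is equivalent to exhibiting $j\in[d]$ with $\abs{x_1+x_j} \geq \frac{2\norm{x}_1}{3d-2}$. If $\norm{x}_1=0$ the statement is trivial, so assume $\norm{x}_1>0$.

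Suppose, toward a contradiction, that for every $j\in[d]$ we have $\abs{x_1+x_j} < \frac{2\norm{x}_1}{3d-2}$. Specialising to $j=1$ gives $2\abs{x_1} < \frac{2\norm{x}_1}{3d-2}$, hence
\[
\abs{x_1} < \frac{\norm{x}_1}{3d-2}.
\]
For each $j\in\{2,\dots,d\}$ the triangle inequality then yields
\[
\abs{x_j} \;\leq\; \abs{x_1+x_j} + \abs{x_1} \;<\; \frac{2\norm{x}_1}{3d-2}+\frac{\norm{x}_1}{3d-2} \;=\; \frac{3\norm{x}_1}{3d-2}.
\]
Summing these bounds over $j\in[d]$,
\[
\norm{x}_1 \;=\; \abs{x_1}+\sum_{j=2}^{d}\abs{x_j} \;<\; \frac{\norm{x}_1}{3d-2} + (d-1)\cdot\frac{3\norm{x}_1}{3d-2} \;=\; \frac{(3d-2)\norm{x}_1}{3d-2} \;=\; \norm{x}_1,
\]
a contradiction. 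Hence some $j\in[d]$ satisfies the required inequality.

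For tightness, plugging in $x=(1,-3,\dots,-3)$ gives $\norm{x}_1 = 1 + 3(d-1) = 3d-2$ while $\abs{x_1+x_j}=2$ for every $j\in[d]$ (including $j=1$, where $\abs{2x_1}=2$), so both sides equal $3d-2$. There is no real obstacle here: the key observation is that the $j=1$ choice is what lets us control $\abs{x_1}$ cheaply, and the triangle inequality then transfers that control to every other coordinate; the constants $1$ and $3(d-1)$ in the final sum add to exactly $3d-2$, which is precisely why the bound is tight.
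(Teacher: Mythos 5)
Your proof is correct, and it takes a genuinely different route from the paper's. You argue by contradiction with an averaging-style use of the triangle inequality: if $\abs{x_1+x_j}<\frac{2\norm{x}_1}{3d-2}$ for every $j\in[d]$, then the case $j=1$ forces $\abs{x_1}<\frac{\norm{x}_1}{3d-2}$, the triangle inequality transfers this to $\abs{x_j}<\frac{3\norm{x}_1}{3d-2}$ for each $j\geq 2$, and summing gives the strict contradiction $\norm{x}_1<\norm{x}_1$ (with the cases $x=0$ and $d=1$ handled trivially). The paper instead proceeds constructively: it normalizes so that $x_1\in\br{0,1}$ (dividing by $x_1$ leaves the ratio $\norm{x}_1/\abs{x_1+x_j}$ unchanged), chooses $j\in\arg\max_i\abs{x_1+x_i}$, and runs a three-case analysis ($x_1=0$; $\abs{1+x_j}=1+a$ with $a=\max_i x_i$; $\abs{1+x_j}=b-1$ with $b=\max_i(-x_i)$), bounding the worst-case ratio by showing, via a derivative computation, that it is monotone in $a$ (resp.\ $b$) and evaluating it at the extreme value. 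What the paper's route buys is an explicit identification of a witnessing index (the maximizer of $\abs{x_1+x_i}$) and a worst-case analysis that exhibits exactly where the bound $\frac{3d}{2}-1$ is attained; what your route buys is brevity and robustness, since it needs no normalization, no choice of a specific $j$, and no calculus, only that the constants $1$ and $3(d-1)$ sum to $3d-2$. Your direct verification of tightness on $x=(1,-3,\cdots,-3)$, where both sides equal $3d-2$ for every $j$, matches the paper's claim and completes the statement.
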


\begin{proof}
Without loss of generality, assume that $x_1\in\br{0,1}$.
Otherwise, we divide $x$ by $x_1$.

Let $j\in\arg\max_{i\in [d]}\abs{x_1+x_i}$, $m\in\arg\max_{i\in[d]} \abs{x_i}$, $a=\max_{i\in[d]} x_i$, and $b=\max_{i\in[d]} -x_i$.
The proof is by case analysis of three cases: (i) $x_1=0$, 
(ii) $x_1=1$ and $\abs{1+x_j}=1+a$, and (iii) $x_1=1$ and $\abs{1+x_j}=b-1$.

There are no other cases, since if $x_1=1$,
\[
\abs{x_1+x_j}=\abs{1+x_j}=\max\br{1+x_j,-x_j-1}=\max\br{1+a, b-1}.
\]

We observe that:
\begin{enumerate}[label=(\roman*)]
\item If $x_1 = 0$,
\[
\norm{x}_1=\sum_{i=1}^d \abs{x_i}\leq d\abs{x_{m}}= d\abs{x_1+x_{m}}\leq ((3d/2)-1)\cdot\abs{x_1+x_{m}},
\]
where the last inequality is by assumption $d\geq2$, otherwise the lemma is trivial.

\item If $\abs{1+x_j}=1+a$ and $x_1=1$, then for every $i \in [d]$, \begin{equation}\label{abbc}
\abs{x_i}=\abs{1+x_i-1}\leq \abs{1+x_i}+1\leq \abs{1+x_j}+1=2+a,
\end{equation}
where the first inequality is by the triangle inequality, and the last equality holds by the assumption of the case. Hence
\begin{equation}\label{ll}
\frac{\norm{x}_1}{\abs{x_1+x_j}}
\leq \frac{1+ (d-1)(a+2)}{1+ a}.
\end{equation}
The right hand side is decreasing with $a$ since the numerator of its derivative is
\begin{equation*}\label{lll}
(d-1)(1+a) -(1+ (d-1)(a + 2))=-d<0.
\end{equation*}
Its maximum is achieved at $a\geq x_1=1$ by the assumption $x_1=1$ of this case. By this and~\eqref{ll},
\[
\frac{\norm{x}_1}{\abs{x_1+x_j}}
\leq \frac{1+ (d-1)(a + 2)}{1+ a}
\leq \frac{1+ 3(d-1)}{2}= \frac{3d}{2}-1.
\]

\item $\abs{1+x_j}=b-1$ and $x_1=1$. For every $i\in[d]$ we thus have $\abs{x_i}\leq \abs{1+x_j}+1=b$, similarly to~\eqref{abbc}. Hence
\begin{equation}\label{ll2}
\frac{\norm{x}_1}{\abs{x_1+x_j}}
\leq \frac{1+ b(d-1)}{b-1}.
\end{equation}
The right hand side is decreasing with $b$ since the enumerator of its derivative is
\[
(d-1)(b-1)-(1+b(d-1))
=-d<0.
\]
Its maximum is achieved at $b= \abs{1+x_j}+1\geq \abs{1+x_1}+1=3$, where the first equality is by the assumptions of Case (iii).
By this and~\eqref{ll2},
\[
\frac{\norm{x}_1}{\abs{x_1+x_j}}
\leq \frac{1+b(d-1)}{b-1}
\leq \frac{1+3(d-1)}{2}
= \frac{3d}{2}-1.
\]
\end{enumerate}
\end{proof}

\begin{restatable}{claim}{matProp}
\label{clm:matProp}
Let $A \in \REAL^{d \times d}$ be an invertible matrix, and let $A = \set{U}\Sigma\set{V}$ be the \emph{SVD factorization} of $A$ (see Definition~\ref{def:svdFac}). Then for every $i \in [2,d]$,
\[
\norm{A}_2 \leq \norm{A(\set{V}_{1} + \set{V}_{i})}_2,
\]
where $\set{V}_j$ denotes the $j$th column of $\set{V}$ for every $j \in [d]$.
\end{restatable}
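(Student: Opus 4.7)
The plan is to prove the inequality by direct computation using the orthogonality of $\mathcal{U}$ and $\mathcal{V}$. Write the SVD as $A = \mathcal{U}\Sigma\mathcal{V}^T$ with singular values $\sigma_1 \geq \sigma_2 \geq \cdots \geq \sigma_d$ on the diagonal of $\Sigma$. Since $A$ is invertible, all $\sigma_j > 0$. Recall that the induced $\ell_2$ norm of $A$ equals its largest singular value, i.e., $\|A\|_2 = \sigma_1$.

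Next, I would evaluate $A(\mathcal{V}_1 + \mathcal{V}_i)$ explicitly. Since $\mathcal{V}$ is orthogonal, $\mathcal{V}^T \mathcal{V}_j = e_j$ (the $j$th standard basis vector) for every $j\in[d]$. Therefore
\[
A(\mathcal{V}_1 + \mathcal{V}_i) = \mathcal{U}\Sigma\mathcal{V}^T(\mathcal{V}_1 + \mathcal{V}_i) = \mathcal{U}\Sigma(e_1 + e_i) = \sigma_1 \mathcal{U}_1 + \sigma_i \mathcal{U}_i,
\]
where $\mathcal{U}_j$ is the $j$th column of $\mathcal{U}$. Since the columns of $\mathcal{U}$ are orthonormal, the Pythagorean identity gives $\|\sigma_1\mathcal{U}_1 + \sigma_i\mathcal{U}_i\|_2 = \sqrt{\sigma_1^2 + \sigma_i^2}$.

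Combining both computations, I would conclude
\[
\|A\|_2 = \sigma_1 \leq \sqrt{\sigma_1^2 + \sigma_i^2} = \|A(\mathcal{V}_1 + \mathcal{V}_i)\|_2,
\]
which is exactly the desired bound. There is no real obstacle here: the argument is a one-line computation once the action of $A$ on $\mathcal{V}_1+\mathcal{V}_i$ is unfolded via the SVD, and the inequality reduces to the trivial fact $\sigma_i^2 \geq 0$. The only subtlety worth flagging is to be careful about the convention for $\mathcal{V}$ versus $\mathcal{V}^T$ in the statement, which is resolved by Definition~\ref{def:svdFac}.
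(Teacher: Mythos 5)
Your proof is correct and follows essentially the same route as the paper: the paper writes $\norm{A(\set{V}_1+\set{V}_i)}_2^2 = \norm{A\set{V}_1}_2^2 + \norm{A\set{V}_i}_2^2$ by showing the cross term $\set{V}_1^TA^TA\set{V}_i$ vanishes, which is exactly your observation that $A\set{V}_1 = \sigma_1\set{U}_1$ and $A\set{V}_i = \sigma_i\set{U}_i$ are orthogonal, and both arguments then conclude via $\norm{A}_2 = \norm{A\set{V}_1}_2 = \sigma_1 \leq \sqrt{\sigma_1^2+\sigma_i^2}$.
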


\begin{proof}
First, put $i \in [2,d]$, and note that by~\cite{meyer2000matrix}, we have that
\[
\norm{A}_2 = \norm{A\set{V}_{1}}_2.
\]

For every $j \in [d]$, let $e_j$ denotes the vector with a $1$ in the $j$th coordinate and $0$'s elsewhere. We observe that
\begin{align}
\label{eq:clm_1}
\norm{A\set{V}_{1} + A\set{V}_{i}}_2^2 = \norm{A\set{V}_{1}}_2^2 + 2\set{V}_{1}^T A^T A\set{V}_{i} + \norm{A \set{V}_{i}}_2^2.
\end{align}

By orthogonality of $\set{V}$ and $\set{U}$,
\begin{align}
\label{eq:clm_2}
\set{V}_{1}^T A^T A\set{V}_{i} = \set{V}_{1}^T \set{V} \Sigma^T \set{U}^T \set{U} \Sigma \set{V}^T \set{V}_{i} = \set{V}_{1}^T \set{V} \Sigma^T \Sigma \set{V}^T \set{V}_{i} = e_1^T \Sigma^T \Sigma e_i = \Sigma_{1,1} \Sigma_{i,i} e_1 e_i^T = 0,
\end{align}
where the first equality holds by Definition~\ref{def:svdFac}, the second equality is by orthogonality of $\set{U}$, the third equality is by orthogonality of $\set{V}$, the forth equality holds since $\Sigma$ is a diagonal matrix and the last equality holds by definition of $e_j$ for every $j \in [d]$.

Combining~\eqref{eq:clm_1} and~\eqref{eq:clm_2}, yields that
\begin{align*}
\norm{A\set{V}_{1} + A\set{V}_{i}}_2 = \sqrt{\norm{A\set{V}_{1}}_2^2 + \norm{A\set{V}_{i}}_2^2} \geq \norm{A\set{V}_{1}}_2 = \norm{A}_2.
\end{align*}
\end{proof}

\begin{lemma}
\label{lem:sensBoundRestrictedLp}
Let $(P,w, \REAL^d, \fmestimator)$ be a query space as in Definition~\ref{def:querySpace}, such that for every $x \in \REAL^d$, and $p \in P$, the loss function $\fmestimator$ is defined to be
\[
\fmestimator(p,x) = \min\br{\abs{p^T x}, \norm{x}_z}.
\]

Let $\fmestimator[g] \in \set{F}$ such that for every $x \in \REAL^d$ and $p \in P$, $\fmestimator[g](p,x) = \abs{p^Tx}$. Let $(U,D,V)$ be the $F$-SVD of $P$ with respect to $\fmestimator[g]$. Let $\gamma  = \maxArgs{1, \frac{2\pi d^{\abs{\frac{1}{2} - \frac{1}{z}}}}{\norm{DV^T}_2}}$. Then claims (i) -- (ii) hold as follows:
\begin{enumerate}[label=(\roman*)]
\item \label{case:lzres_1} For every $p\in P$, its sensitivity with respect to the query space $(P,w, \REAL^d, \fmestimator)$ is bounded by
\[
s(p) = w(p)\min\br{\norm{U(p)}_2, d^{\abs{\frac{1}{2} - \frac{1}{z}}}\norm{\term{DV^T}^{-1}}_2},
\]
\item \label{case:lzres_2} and the total sensitivity is bounded by
\[
\sum\limits_{p \in P} s(p) \leq 4\gamma d^{2+\abs{\frac{1}{2} - \frac{1}{z}}}.
\]
\end{enumerate}
\end{lemma}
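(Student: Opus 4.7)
The plan is to reduce the outlier-resistant loss $\fmestimator$ to the auxiliary $\ell_1$-regression loss $\fmestimator[g](p, x) = |p^Tx|$, and then to handle the clipping introduced by the outer $\min$ separately. Since $\fmestimator[g]$ is the $z = 1$ instance of Corollary~\ref{thm:lpRegressionNC}, it belongs to $\set{F}$ and by Lemma~\ref{lem:LownerToF} its $f$-SVD $(U, D, V)$ satisfies $\|DV^Tx\|_2 \leq \sum_{q\in P} w(q) |q^Tx| \leq c_4 \alpha \|DV^T x\|_2$ for every $x \in \REAL^d$. Combining this with the pointwise inequality $\fmestimator(p, x) \leq \min\{|p^Tx|, \|x\|_z\}$, the plan is to prove two independent upper bounds on $s(p)$, one matching each argument of the outer $\min$, and then take the minimum.

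For the first bound $w(p)\|U(p)\|_2$, I would use Cauchy--Schwarz to write $|p^Tx| = |U(p)^T DV^Tx| \leq \|U(p)\|_2 \|DV^Tx\|_2$ and combine this with a lower bound of the form $\sum_{q\in P} w(q) \fmestimator(q, x) \gtrsim \|DV^Tx\|_2$. For the second bound $w(p)d^{|1/2-1/z|}\|(DV^T)^{-1}\|_2$, I would apply Lemma~\ref{thm:equivalenceNorms} to deduce $\|x\|_z \leq d^{|1/z-1/2|} \|x\|_2 \leq d^{|1/z-1/2|}\|(DV^T)^{-1}\|_2 \|DV^Tx\|_2$, and again use the same lower bound on $\sum_q w(q) \fmestimator(q,x)$ in the denominator.

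The hard step is the robust lower bound $\sum_{q\in P} w(q)\fmestimator(q,x) \gtrsim \|DV^Tx\|_2$, because the clipping at $\|x\|_z$ could in principle shrink each individual loss. To handle this I would split $P = Q_1(x) \cup Q_2(x)$ with $Q_1(x) = \{q : |q^Tx| \leq \|x\|_z\}$ and $Q_2(x) = P \setminus Q_1(x)$: on $Q_1$ the loss equals $|q^Tx|$ and inherits the $f$-SVD lower bound, while on $Q_2$ each loss equals the constant $\|x\|_z$ so the contribution grows with $|Q_2(x)|\|x\|_z$. I would then case-split on whether $\|x\|_z$ is smaller or larger than a suitable $d$-dependent multiple of $\|DV^Tx\|_2/n$. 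To reduce the supremum over $x \in \REAL^d$ to a finite set of test directions, I would rotate to the SVD basis of $DV^T$ and invoke Lemma~\ref{lem:boundingMinOperator} together with Claim~\ref{clm:matProp}: the former ensures that for some coordinate $j$ the quantity $|x_1 + x_j|$ controls $\|x\|_1$ up to a factor of $3d/2 - 1$, and the latter guarantees that the direction $V_1 + V_j$ achieves $\|DV^T\|_2$. This is what produces the $d^{|1/2 - 1/z|}$ factor and, after normalizing by $\|DV^T\|_2$, the $\gamma$ factor.

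For the total sensitivity I would sum the pointwise bound over $p \in P$, exploiting the $\min$ structure. The first argument contributes $\sum_p w(p) \|U(p)\|_2 \leq \sum_p w(p) \|U(p)\|_1 \lesssim d^{3/2}$, obtained by summing the $z=1$ inequality of Lemma~\ref{lem:sensBoundLZNCReg} coordinate by coordinate ($d$ directions, each contributing $\alpha = O(\sqrt{d})$). The second argument contributes $d^{|1/2-1/z|}\|(DV^T)^{-1}\|_2 \sum_p w(p)$, which gets absorbed into $4\gamma d^{2 + |1/2 - 1/z|}$ via the definition $\gamma = \max\{1,\; 2\pi d^{|1/2 - 1/z|}/\|DV^T\|_2\}$. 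I expect the clean case-analysis for the robust lower bound to be the main obstacle, since it is the one step that genuinely requires both Lemma~\ref{lem:boundingMinOperator} and Claim~\ref{clm:matProp} in tandem; everything else is bookkeeping on top of the $f$-SVD framework already established.
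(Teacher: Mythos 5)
Your high-level skeleton is close to the paper's: both work with the $f$-SVD of the auxiliary loss $\fmestimator[g](p,x)=|p^Tx|$, both use Cauchy--Schwarz for the branch $|p^Tx|\le\|U(p)\|_2\|DV^Tx\|_2$, and both use Lemma~\ref{thm:equivalenceNorms} to control the branch $\|x\|_z\le d^{|1/2-1/z|}\|(DV^T)^{-1}\|_2\|DV^Tx\|_2$. You also correctly single out the robust lower bound on $\sum_q w(q)\fmestimator(q,x)$ as the real obstacle. However, the way you plan to attack that obstacle and the way you deploy the two auxiliary lemmas diverge from the paper in ways that matter.

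First, the robust lower bound. The paper does \emph{not} split $P=Q_1(x)\cup Q_2(x)$ and case-split on $\|x\|_z$ versus $\|DV^Tx\|_2/n$. Instead it looks at the level set $L=\{x:\sum_q w(q)\fmestimator(q,x)\le 1\}$, notes $\set{X}_g\subseteq L$ (since $\fmestimator\le g$ pointwise), and argues that the Löwner ellipsoid $E$ of $\set{X}_g$ satisfies $\tfrac{1}{\sqrt d}E\subseteq\set{X}_g\subseteq L\subseteq\sqrt d\,\gamma E$. It then introduces a rescaled diagonal matrix $D'=\tfrac{1}{2\gamma}D$ so that~\eqref{eq:semiLowner}, namely $\|D'V^Tx\|_2\le\sum_q w(q)\fmestimator(q,x)\le\gamma\sqrt d\,\|D'V^Tx\|_2$, holds for every $x$. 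The entire $\gamma$ factor in the lemma is manufactured here, by the outer containment $L\subseteq\sqrt d\,\gamma E$. Your split-and-case-analyze route might be salvageable, but you would have to show that it produces precisely $\gamma=\max\{1,\,2\pi d^{|1/2-1/z|}/\|DV^T\|_2\}$ (or a bound of that order), and you would also have to remember that after this step the matrix $D$ and map $U$ in all subsequent inequalities are really the rescaled $D'=D/(2\gamma)$ and $U(p)=(VD')^{-1}p$; your later bound ``$\sum_p w(p)\|U(p)\|_1\lesssim d^{3/2}$'' quietly uses the unrescaled map and therefore drops a factor of $2\gamma$.

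Second, you misplace Lemma~\ref{lem:boundingMinOperator} and Claim~\ref{clm:matProp}. You propose them as tools ``to reduce the supremum over $x\in\REAL^d$ to a finite set of test directions'' inside the pointwise lower bound, and say you would rotate to the SVD basis of $DV^T$. In the paper these two results play no role in the pointwise bound at all --- once~\eqref{eq:semiLowner} is in hand, the pointwise bound $s(p)\le w(p)\min\{\|U(p)\|_2,\,d^{|1/2-1/z|}\|(D'V^T)^{-1}\|_2\}$ follows immediately for every $x$. They are used only in the \emph{total}-sensitivity step, and with a different rotation: $M$ is taken to be the right singular basis of $(D'V^T)^{-1}$, \emph{not} of $DV^T$. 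The chain in~\eqref{eq:restrictedLp_2} then applies Lemma~\ref{lem:boundingMinOperator} to the vector $MU(p)$ (giving, for some $j$, $\|MU(p)\|_1\le 2d\,|U(p)^T(M_1+M_j)|$), and Claim~\ref{clm:matProp} to guarantee $\|(D'V^T)^{-1}\|_2\le\|(D'V^T)^{-1}(M_1+M_j)\|_2$. The point of this maneuver is to turn the $\min$ of two \emph{scalar} bounds into a $\min$ that is evaluated on the \emph{same} test direction $M_1+M_j$ in both slots, so that $\sum_p w(p)\min\{|U(p)^T(M_1+M_j)|,\|(D'V^T)^{-1}(M_1+M_j)\|_z\}=\sum_p w(p)\fmestimator(p,(D'V^T)^{-1}(M_1+M_j))$ and the upper bound in~\eqref{eq:semiLowner} can then close the argument. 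Your alternative plan --- sum the first branch via the $\ell_1$ result and absorb the second branch into $\gamma$ --- is arguably cleaner and, after the $D'$ rescaling is accounted for, also yields a bound dominated by $4\gamma d^{2+|1/2-1/z|}$; but it is a genuinely different (and simpler) summation than the one the paper carries out, and it does not explain why the paper bothers with Lemma~\ref{lem:boundingMinOperator} and Claim~\ref{clm:matProp} in this proof. Make that choice consciously rather than invoking those two lemmas in a place where they do not apply.

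A small bookkeeping note: $\fmestimator[g](p,x)=|p^Tx|$ is the $z=1$ instance of Lemma~\ref{lem:sensBoundLPReg} in the $z\ge 1$ section, not of Corollary~\ref{thm:lpRegressionNC} (which is stated for $z\in(0,1)$).
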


\begin{proof}
First, we observe that the level set $\set{X}_{\fmestimator[g]}$ (see Definition~\ref{def:familyOfConvexFuncs}) is contained in the level set $L = \br{x \middle| x \in \REAL^d, \sum\limits_{p \in P} w(p)\fmestimator[f](p,x) \leq 1}$. By Theorem~\rom{3} of~\cite{john2014extremum}, the L\"{o}wner ellipsoid which contains the level set $\set{X}_{\fmestimator[g]}$ will also contain the level set $L$, when setting the dilation factor, i.e., $\alpha$ to
$\gamma \sqrt{d}$. In other words,
\[
\frac{1}{\sqrt{d}} E \subseteq \set{X}_{\fmestimator[g]} \subseteq L \subseteq \sqrt{d} \gamma E,
\]
where $E$ denotes the L\"{o}wner ellipsoid of the level set $\set{X}_{\fmestimator[g]}$. Since $L$ is contained in the ellipsoid $\sqrt{d} \gamma E$, and contains the ellipsoid $\frac{1}{\sqrt{d}} E$, using similar arguments to those established at the proof of Lemma~\ref{lem:LownerToF}, we obtain that there exists a diagonal matrix $D \in \REAL^{d \times d}$ and an orthogonal matrix $V \in \REAL^{d \times d}$ such that for every $x \in \REAL^d$,
\begin{equation}
\label{eq:semiLowner}
\norm{D^\prime V^T x}_2 \leq \sum\limits_{q \in P} w(q) \fmestimator[f](q,x) \leq \gamma \sqrt{d} \norm{D^\prime {V}^T x}_2,
\end{equation}
where $D^\prime := \frac{1}{2\gamma} D$.

With this, we proceed to bound the sensitivity of each point $p \in P$.

\paragraph{Proof of Claim~\ref{case:lzres_1}.}
Put $p \in P$, and let $U(q) := \term{V D^\prime}^{-1}q $ for every $q \in P$. Observe that,
\begin{equation}
\label{eq:restrictedLp_1}
\begin{split}
\sup_{\substack{x \in \REAL^d\\ \fmestimator(p,x) > 0}} \frac{w(p)\fmestimator(p,x)}{\sum\limits_{q \in P} w(q)\fmestimator(q,x)}
&\leq \sup_{\substack{x \in \REAL^d, \\ \fmestimator(p,x) > 0}} \frac{w(p)\fmestimator(p,x)}{\norm{D^\prime{V}^Tx}_2} \\
&= \sup_{\substack{x \in \REAL^d, \\ \fmestimator(p,x) > 0}} w(p)\min\br{\frac{\abs{U(p)^T D^\prime V^T x}}{\norm{D^\prime {V}^Tx}_2}, \frac{\norm{x}_z}{\norm{D^\prime {V}^T x}_2}} \\
&\leq w(p)\min\br{\norm{U(p)}_2, d^{\abs{\frac{1}{2} - \frac{1}{z}}}\norm{\term{D^\prime {V}^T}^{-1}}_2}
\end{split}
\end{equation}
where the first inequality is by~\eqref{eq:semiLowner}, the equality is by definition of $\fmestimator$, and the last inequality follows from combining Lemma~\ref{thm:equivalenceNorms} with the fact that $\frac{D^\prime {V^\prime}^Tx}{\norm{D^\prime {V^\prime}^Tx}_2}$ is a unit vector and $\term{D^\prime V^T}^{-1} D^\prime V^T = I_d$.

\paragraph{Proof of Claim~\ref{case:lzres_2}.}
In order to bound the total sensitivity, we first let $\beta_z = d^{\abs{\frac{1}{2} - \frac{1}{z}}}$, $M \in \REAL^{d \times d}$ be an orthogonal matrix that corresponds to the matrix $\set{V}$ of the \emph{SVD} factorization of $\term{D^\prime V^T}^{-1}$ (See Definition~\ref{def:svdFac}), and let $M_i$ denote the $i$th column of $M$ for every $i \in [d]$. Thus,
\begin{equation}\label{eq:restrictedLp_2}
\begin{split}
\min\br{\norm{U(p)}_2,\beta_z \norm{\term{D^\prime V^T}^{-1}}_2} &=\min\br{\norm{MU(p)}_2, \beta_z \norm{\term{D^\prime V^T}^{-1 }M_{1*}}_2} \\
&\leq \min\br{\norm{MU(p)}_1, \beta_z \norm{\term{D^\prime V^T}^{-1}M_{1*}}_2}\\
&\leq \min\br{2d\abs{U(p)^T(M_{1}+M_{j})},\beta_z\norm{\term{D^\prime V^T}^{-1}(M_{1} + M_{j})}_2} \\ &\leq 2d \min\br{\abs{U(p)^T(M_{1}+M_{j})},\norm{\term{D^\prime V^T}^{-1}(M_{1} + M_{j})}_2}  ,
\end{split}
\end{equation}
where the equality holds by definition of $M$, the first inequality holds by Lemma~\ref{thm:equivalenceNorms}, the second inequality holds by Lemma~\ref{lem:boundingMinOperator} and by Claim~\ref{clm:matProp}, and the last inequality follows from the fact that $\beta_z \leq 2d$.

By combining~\eqref{eq:restrictedLp_1} and \eqref{eq:restrictedLp_2}, we have that
\begin{equation}
\label{eq:restrictedLp_3}
\begin{split}
s(p) \leq 2d^{1 + \abs{\frac{1}{2} - \frac{1}{z}}}w(p)\sum\limits_{j=1}^d \min\br{\abs{U(p)^T(M_{*1}+M_{*j})},\norm{\term{D^\prime V^T}^{-1}(M_{*1} + M_{*j})}_z},
\end{split}
\end{equation}
where the inequality follows from invoking Lemma~\ref{thm:equivalenceNorms}.

Summing~\eqref{eq:restrictedLp_3} over every $p \in P$, we obtain that
\[
\sum\limits_{p \in P} s(p) \leq 2\gamma d^{1 + \abs{\frac{1}{2} - \frac{1}{z}}} \sum_{j=1}^d \norm{M_{1} + M_{j}}_2 \leq 4\gamma d^{2+\abs{\frac{1}{2} - \frac{1}{z}}},
\]
where the first inequality is by~\eqref{eq:semiLowner} and the second inequality holds since $\norm{x + y}_2 \leq 2$ for any pair of unit vectors $x,y \in \REAL^d$.
\end{proof}

\setcounter{theorem}{\getrefnumber{thm:restrictedLp}-1}
\setcounter{myCounter}{34}

\restrictedLp*
\begin{proof}
First, observe that by Lemma~\ref{lem:sensBoundRestrictedLp} the total sensitivity of the query space $(P,w,\REAL^d,\fmestimator)$ is bounded by $\bigO{\gamma d^{2+\abs{\frac{1}{2} - \frac{1}{z}}}}$. Let $s(p)$ be the upper bound on the sensitivity of each point $p \in P$ as in Lemma~\ref{lem:sensBoundRestrictedLp}, and let $t=\sum_{q\in P}s(q)$. Let $S$ be an i.i.d random sample of size $\bigO{\frac{\gamma d^{2+\abs{\frac{1}{2} - \frac{1}{z}}}}{\eps^2} \term{d \log{\term{\gamma d^{2+\abs{\frac{1}{2} - \frac{1}{z}}}}} + \log{\frac{1}{\delta}}}}$, where each point is sampled with probability $\frac{s(p)}{t}$, and let $v(P)=\frac{w(p)t}{s(p)|S|}$. Hence by Theorem~\ref{thm:coreset}, we get that with probability at least $1-\delta$, $(S,v)$ is an $\eps$-coreset for the query space $(P,w,\REAL^d,\fmestimator)$.

\end{proof}

\section{\say{Easy} examples covered by our framework}
\label{sec:easy}

\subsection{$\ell_z$-Regression for $z \in [1,\infty)$}

\setcounter{theorem}{\themyCounter}

\begin{lemma}
\label{lem:sensBoundLPReg}
Let $z\in[1,\infty)$, $(P,w,\REAL^d,\flz)$ be a query space, such that fo every $x \in \REAL^d$ and $p \in P$  the loss function $\flz:P\times \REAL^d \to [0,\infty)$ is defined to be
$\flz(p,x) = \abs{p^Tx}^z.$
Let $(U,D,V)$ be the $f$-SVD of $(P,w)$ with respect to $\flz$ (see Definition~\ref{def:svdWRTF}). Then, claims (i) -- (ii) hold as follows:

\begin{enumerate}[label=(\roman*)]
\item \label{claim:lzreg1}for every $p\in P$, the sensitivity of $p$ with respect to the query space $(P,w,\REAL^d,\flz)$ is bounded by
\[
s(p) \leq
\begin{cases}
w(p)\norm{U(p)}_z^z & z \in [1,2] \\
\sqrt{d^z} w(p)\norm{U(p)}_z^z & \text{otherwise}
\end{cases} ,
\]

\item \label{claim:lzreg2}and the total sensitivity is bounded by
\[
\sum\limits_{p \in P} s(p) \leq
\begin{cases}
d^{\frac{z}{2} + 1} & z \in [1,2) \\
d & z = 2 \\
d^{z+1} & \text{otherwise}
\end{cases}.
\]
\end{enumerate}
\end{lemma}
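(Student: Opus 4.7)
The plan is to specialize Lemma~\ref{thm:sensitivityBound} to $\ell_z$-regression with $z \geq 1$, following essentially the same template as Lemma~\ref{lem:sensBoundLZNCReg}, while exploiting two simplifications: the function $\abs{p^Tx}^z$ is itself convex, and the factor $\maxArgs{n^{1-z},1}$ appearing in Lemma~\ref{thm:sensitivityBound} collapses to $1$. First I would verify that $\flz\in\set{F}$ using the choices $g(p,x)=\abs{p^Tx}$, $h\equiv 0$, $c_1=c_2=1$, and exponent $z$: Property~\ref{prop:reductionToFam} holds with equality, Property~\ref{prop:def} by homogeneity of $\abs{p^Tx}$, Property~\ref{prop:h_trait} trivially, and Property~\ref{prop:defLvl} because $\sum_p w(p)\abs{p^Tx}^z$ is an even convex function of $x$ whose sublevel set (assuming $P$ spans $\REAL^d$, otherwise project onto its span) sandwiches a ball between two balls.

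Next I would choose $v_j=e_j$ for $j\in[d]$ and determine the constant $c$ in Lemma~\ref{thm:sensitivityBound} by comparing $\abs{U(p)^Ty}^z$ for a unit $y$ against $\sum_{j=1}^d\abs{U(p)^Te_j}^z=\norm{U(p)}_z^z$. Cauchy--Schwarz gives $\abs{U(p)^Ty}\leq\norm{U(p)}_2$, and by Lemma~\ref{thm:equivalenceNorms}: for $z\in[1,2]$ one has $\norm{U(p)}_2\leq\norm{U(p)}_z$, so $c=1$ suffices; whereas for $z>2$ one has $\norm{U(p)}_2\leq d^{1/2-1/z}\norm{U(p)}_z$, so $c=d^{z/2-1}$ (absorbed into the looser $d^{z/2}$ stated in the lemma) suffices. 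Plugging $c_1=c_2=1$, $\alpha\in\Theta(\sqrt{d})$, and the appropriate $c$ into Lemma~\ref{thm:sensitivityBound}, with $\maxArgs{n^{1-z},1}=1$, immediately yields the per-point sensitivity bound $s(p)\leq w(p)\norm{U(p)}_z^z$ (scaled by $d^{z/2}$ when $z>2$) and the total-sensitivity bound $d^{z/2+1}$ for $z\in[1,2)$ or $d^{z+1}$ for $z>2$.

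The case $z=2$ deserves separate treatment, because the norm-equivalence step above is an equality and the claimed bound is $d$ rather than $d^2$. Here I would invoke the fact that the $f$-SVD of $P$ with respect to $\abs{p^Tx}^2$ reduces to the classical SVD of the weighted design matrix $W^{1/2}P$, so that $\sum_p w(p)\norm{U(p)}_2^2$ is the trace of a $d\times d$ orthogonal projection and therefore equals $d$. This classical leverage-score identity supplies Claim~(ii) for $z=2$ without slack.

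The main obstacle is the bookkeeping: ensuring the norm-equivalence exponent lines up with the stated $d^{z/2}$ factor for $z>2$, and carefully checking Property~\ref{prop:defLvl} of Definition~\ref{def:familyOfConvexFuncs_Gen}, which reduces to the standing assumption that $P$ spans $\REAL^d$. The remainder is a direct plug-in into Lemma~\ref{thm:sensitivityBound} with the constants identified above.
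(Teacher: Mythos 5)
Your proposal is correct and follows the same overall strategy as the paper: verify $\flz\in\set{F}$ via $g(p,x)=\abs{p^Tx}$, $h\equiv 0$, $c_1=c_2=1$, pick $v_j=e_j$, determine the constant $c$ by Cauchy--Schwarz and the $\ell_2$--$\ell_z$ norm-equivalence inequality, and plug into Lemma~\ref{thm:sensitivityBound} noting that $\maxArgs{n^{1-z},1}=1$ for $z\geq 1$. Your value $c=d^{z/2-1}$ for $z>2$ is the correct one (you rightly raise the norm-equivalence inequality to the $z$th power), and it is consistent with — indeed slightly tighter than — the $d^{z/2}$ stated in the lemma.

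The one place you genuinely diverge from the paper is the $z=2$ case. The paper handles it by observing that the level set $\set{X}_g$ is already an ellipsoid, so the L\"owner ellipsoid coincides with it and the dilation factor $\alpha$ can be taken to be $1$ rather than $\sqrt{d}$, collapsing the total-sensitivity bound $c\,\alpha^z d$ to $d$. You instead invoke the classical leverage-score identity: the $f$-SVD for $z=2$ reduces to the SVD of $W^{1/2}P$, so $\sum_p w(p)\norm{U(p)}_2^2=\operatorname{tr}(I_d)=d$. Both arguments are valid and yield the same constant; yours is a direct computation that bypasses the $\alpha$-bookkeeping, while the paper's keeps the derivation uniform across all $z$ by only adjusting a single parameter. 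Either way, your claim (ii) for $z=2$ is correctly established, and the remaining cases match the paper's derivation.
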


\begin{proof}

Note the following:
\begin{enumerate}[label=(\alph*)]
    \item For every $q \in P$, $ x \in \Q$ and $b \geq  0$ we have $\abs{q^T b x} = b \abs{q^T x}$.
    \item Since $\abs{q^T x}$ is convex function, it also holds that $\sum\limits_{q \in P} w(q) \abs{q^T x}^z$ is convex due to the fact that sum of convex functions is also convex,
    \item The level set $\br{x \middle| x \in \Q, \sum\limits_{q \in P} w(q) \abs{q^Tx}^z \leq 1}$ is convex and is centrally symmetric.
    \item For any unit vector $x \in \REAL^d$ and $q \in P$,
\[
\abs{U(q)^T x}^z \leq \norm{U(q)}_2^z \norm{x}_{2}  = \norm{U(q)}_2^z \leq \begin{cases}
\norm{U(q)}_z^z & z \in [1, 2]\\
d^{\frac{1}{2} - \frac{1}{z}} \norm{U(q)}_z^z & z > 2
\end{cases},
\]
where the first inequality holds by Cauchy Schwartz's inequality, the equality is by the assumption that $x$ is a unit vector, and the last inequality holds by plugging $a := 2$ and $b := z$ for $z > 2$ and $a := z$ and $b := 2$ for $z \in [1,2]$ into Lemma~\ref{thm:equivalenceNorms}.
\end{enumerate}

Hence, plugging
\begin{itemize}
    \item $f(p,x) := \flz[f](p,x)$, $g(p,x):= \flz[f](p,x)$, $h(p,x) := 0$ for every $p \in P$, and $x \in \Q$,
    \item $c_i := 1$ for every $i \in [5]$,
    \item $z := z$,
    \item $\alpha := \sqrt{d}$ for $z \neq 2$ and $\alpha := 1$ for $z = 2$,
    \item $v_i := e_i$ where $e_i$ denotes a vector which at its $i$th entry there is $1$, and $0$'s elsewhere,
    \item and $c := \begin{cases}
    1 & z \in [1,2] \\
    d^{\frac{1}{2} - \frac{1}{z}} & z > 2
    \end{cases}$,
\end{itemize}
into Lemma~\ref{thm:sensitivityBound}, yields that
\[
s(p) = w(p)\sum\limits_{i \in [d]} \abs{U(p)^T D V^T e_i}^z \cdot \begin{cases}
1 & z \in [1,2] \\
d^{\frac{1}{2} - \frac{1}{z}} & z > 2
\end{cases}
\]

This satisfies (\romannumeral 1) as
\[
\sum\limits_{i \in [d]}\abs{U(q)^T e_i}^z = \norm{U(q)}_z^z,
\]
holds for every $q \in P$ by definition of norms.

As for the sum of sensitivities, Claim (\romannumeral 2) follows from Lemma~\ref{thm:sensitivityBound}.
\end{proof}

\begin{restatable}{corollary}{LPReg}
\label{coro:LPReg}
Let $(P,w, \REAL^d, \flz)$ be a query space, such that for every $x \in \REAL^d$, and $p \in P$, the loss function $\fnclz$ is defined to be
\[
\flz(p,x) = \abs{p^Tx}^z.
\]

Let $\eps, \delta \in (0,1)$, and let $(S,v)$ be the output of a call to $\coreset\term{P,w,\flz, \eps, \delta}$. Then, with probability at least $1 - \delta$, $(S,v)$ is an $\eps$-coreset for the query space $(P,w, \REAL^d, \flz)$, and the size of the coreset is
\[
\abs{S}\in \begin{cases}
\bigO{\frac{d^{\frac{z}{2} + 1}}{\eps^2} \term{d \log{\term{d^{\frac{z}{2} + 1}}} + \log{\term{\frac{1}{\delta}}}}} & z \in [1, 2)\\
\bigO{\frac{d}{\eps^2} \term{d \log{\term{d}} + \log{\term{\frac{1}{\delta}}}}} & z = 2\\
\bigO{\frac{d^{z + 1}}{\eps^2} \term{d \log{\term{d^{z + 1}}} + \log{\term{\frac{1}{\delta}}}}} & z \in (2, \infty)
\end{cases}.
\]
\end{restatable}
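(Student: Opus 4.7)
The plan is to follow the same pattern used for the previous corollaries (Corollary~\ref{thm:logisticReg}, Corollary~\ref{thm:lpRegressionNC}, Corollary~\ref{lem:SVM}, Corollary~\ref{thm:restrictedLp}): first verify that $\flz$ belongs to the family $\set{F}$ of near-convex functions so that our generic sensitivity-bounding machinery applies, then invoke Lemma~\ref{lem:sensBoundLPReg} to get per-point sensitivity bounds together with a bound $t$ on the total sensitivity, and finally plug these into Theorem~\ref{thm:runTime} to obtain an $\eps$-coreset of the claimed size with probability at least $1-\delta$.

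First I would note that $\flz$ satisfies the properties of Definition~\ref{def:familyOfConvexFuncs} with $g(p,x) = |p^Tx|$, $h \equiv 0$, $c_1 = c_2 = 1$, and the given parameter $z$: Property~\ref{prop:reductionToFam} holds trivially, Property~\ref{prop:def} is the positive homogeneity of $|p^Tx|$, Property~\ref{prop:h_trait} is vacuous since $h \equiv 0$, and Property~\ref{prop:defLvl} is standard (the level set $\{x \colon \sum_{p\in P}|p^Tx|^{\max\{1,z\}} \leq 1\}$ is a centrally symmetric convex body, bounded and with nonempty interior provided $P$ spans $\REAL^d$; this is already used implicitly in Lemma~\ref{lem:sensBoundLPReg}). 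Thus the $f$-SVD $(U,D,V)$ of $(P,w)$ is well-defined via Lemma~\ref{lem:LownerToF}, and Lemma~\ref{lem:sensBoundLPReg} applies.

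Next, Lemma~\ref{lem:sensBoundLPReg} directly supplies the upper bound $s(p)$ on the sensitivity of each point and the corresponding total sensitivity
\[
t = \sum_{p\in P} s(p) \;\leq\; \begin{cases} d^{z/2+1}, & z \in [1,2),\\ d, & z=2,\\ d^{z+1}, & z > 2.\end{cases}
\]
Since the coreset construction in Algorithm~\ref{alg:mainAlg} samples according to exactly these sensitivity upper bounds (Line~\ref{algLine:5}), I would then invoke Theorem~\ref{thm:runTime} with the computed $t$, the given $\eps$ and $\delta$, and the observation that the VC dimension of the triplet $(P, \flz, \REAL^d)$ is $\bigO{d}$ (as the range space is defined by sublevel sets of $x \mapsto |p^Tx|^z$, which for fixed $p$ is determined by a one-dimensional threshold on a linear functional). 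This yields that with probability at least $1-\delta$, the output $(S,v)$ of $\coreset(P,w,\flz,\eps,\delta)$ is an $\eps$-coreset whose size is $\bigO{\tfrac{t}{\eps^2}(d\log t + \log(1/\delta))}$, and substituting the three cases for $t$ gives precisely the stated three-case expression for $\abs{S}$.

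The proof is therefore essentially bookkeeping: the real technical work has already been done in Lemma~\ref{thm:sensitivityBound} and Lemma~\ref{lem:sensBoundLPReg}. The only mildly non-trivial step is handling the case split in $z$ uniformly — in particular making sure that the factor $\alpha = \sqrt{d}$ used for $z \neq 2$ (versus $\alpha = 1$ for $z = 2$), together with the constant $c = d^{1/2 - 1/z}$ for $z > 2$ coming from the norm-equivalence bound of Lemma~\ref{thm:equivalenceNorms}, combine to give the cleaner $d$-dependence for $z = 2$ and the $d^{z+1}$ scaling for $z > 2$. No single step is a genuine obstacle here; the proof is a direct application of the framework established in the main body of the paper.
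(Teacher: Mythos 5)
Your proposal is correct and follows the paper's own proof essentially verbatim: read off the per-point sensitivity bounds and the three-case total sensitivity $t$ from Lemma~\ref{lem:sensBoundLPReg}, then plug $t$, $\eps$, $\delta$ into Theorem~\ref{thm:runTime} to get the $\eps$-coreset of size $\bigO{\frac{t}{\eps^2}(d\log t + \log(1/\delta))}$. The extra detail you supply — verifying that $\flz$ satisfies Definition~\ref{def:familyOfConvexFuncs}, noting the $\bigO{d}$ VC dimension, and the remark about how $\alpha$ and the norm-equivalence constant $c$ interact across the three $z$-regimes — is consistent with and implicit in the paper's argument (those points are handled inside the proof of Lemma~\ref{lem:sensBoundLPReg}), so it is simply a more explicit unpacking of the same route rather than a different one.
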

\begin{proof}
First, observe that by Lemma~\ref{lem:sensBoundLPReg}, the total sensitivity is bounded by $t := \begin{cases}
d^{\frac{z}{2} + 1} & z \in [1,2) \\
d & z = 2 \\
d^{z+1} & \text{otherwise}
\end{cases}
$.

Plugging $s(p)$ for every $p \in P$ from Lemma~\ref{lem:sensBoundLPReg}, $t:=t$, $\eps:=\eps$ and $\delta:=\delta$ into Theorem~\ref{thm:runTime}, yields that with probability at least $1 - \delta$, $(S,v)$ is an $\eps$-coreset of size $\bigO{\frac{t}{\eps^2} \term{d \log{\term{t}} + \log{\term{\frac{1}{\delta}}}}}$.
\end{proof}

\subsection{Least squared errors}
\begin{restatable}{lemma}{lse}
\label{lem:sensBoundLSE}
Let $(P,w, \REAL^d, \flse)$ be a query space, such that for every $x \in \REAL^d$ and $p \in P$, the loss function $\flse$ is defined to be $\flse(p,x) = \norm{p - x}_2^2.$
Let $P^\prime = \br{ p^\prime = \begin{bmatrix} \norm{p}_2^2 \\ 2p \\ 1 \end{bmatrix} \middle| p \in P}$ and let $\flse[g]: P'\times \REAL^{d+2} \to [0,\infty)$ such that for every $y \in \REAL^{d+2}$ and $p \in P^\prime$, $\flse[g](p,y) = \abs{p^Ty}$. Let $(U,D,V)$ be the $f$-SVD of $P'$ with respect to $\flse[g]$. Then, claims (i) -- (ii) hold as follows:
\begin{enumerate}[label=(\roman*)]
\item for every $p \in P$, the sensitivity of $p$ with respect to the query space $(P,w, \REAL^{d},\flse)$ is bounded by $s(p) \leq  w(p) \norm{U(p^\prime)}_1,$
\item and the total sensitivity is bounded by  $\sum\limits_{p \in P} s(p) \in \bigO{d^{1.5}}.$
\end{enumerate}
\end{restatable}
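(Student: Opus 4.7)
The plan is to linearize the squared loss via a lifting into $\REAL^{d+2}$ and then reduce to the already-proved $\ell_1$-regression case. First I would verify the algebraic identity
$$\norm{p-x}_2^2 = \norm{p}_2^2 - 2 p^T x + \norm{x}_2^2 = (p')^T y(x),$$
where $p' = (\norm{p}_2^2,\, 2p,\, 1)^T \in \REAL^{d+2}$ is exactly the lifted point defined in the lemma and $y(x) := (1,\, -x,\, \norm{x}_2^2)^T \in \REAL^{d+2}$. Because the left-hand side is non-negative, $\abs{(p')^T y(x)} = (p')^T y(x) = \flse(p,x)$, so on the set $Y := \br{y(x) \mid x \in \REAL^d}$ the lifted $\ell_1$ loss $\flse[g](p',y) = \abs{(p')^T y}$ coincides pointwise with the original squared loss.

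Next I would exploit monotonicity of the sensitivity in the query set. Since $Y \subseteq \REAL^{d+2}$, for every $p \in P$,
$$\sup_{x \in \REAL^d} \frac{w(p)\flse(p,x)}{\sum_{q \in P} w(q)\flse(q,x)} = \sup_{y \in Y} \frac{w(p)\abs{(p')^T y}}{\sum_{q \in P} w(q)\abs{(q')^T y}} \leq \sup_{y \in \REAL^{d+2}} \frac{w(p)\abs{(p')^T y}}{\sum_{q \in P} w(q)\abs{(q')^T y}},$$
and the right-hand side is exactly the sensitivity of $p'$ in the query space $(P',w,\REAL^{d+2},\flse[g])$ where $\flse[g]$ is the $\ell_1$-regression loss.

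I would then apply Lemma~\ref{lem:sensBoundLPReg} to $(P',w,\REAL^{d+2},\flse[g])$ with $z=1$ and ambient dimension $d+2$. Its first branch yields $s(p) \leq w(p)\norm{U(p')}_1^1 = w(p)\norm{U(p')}_1$, which is Claim~(i); summing over $p \in P$ and using the corresponding total-sensitivity bound from the same branch gives $\sum_{p \in P} s(p) \leq (d+2)^{3/2} \in \bigO{d^{1.5}}$, which is Claim~(ii). The only non-trivial ingredient is the linearization identity together with the observation that restricting the query set to $Y \subseteq \REAL^{d+2}$ can only decrease the sup; once these are stated, the bound follows by a direct reduction to the $\ell_1$-regression sensitivity result, so I do not anticipate any serious obstacle beyond carefully writing out the reduction.
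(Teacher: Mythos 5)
Your proposal is correct and follows essentially the same route as the paper's proof: the same lifting of $\flse(p,x)$ to the linear form $\abs{(p')^T y}$ with $y=(1,\,-x,\,\norm{x}_2^2)^T$, the same observation that restricting queries to the lifted set only decreases the supremum, and the same reduction to Lemma~\ref{lem:sensBoundLPReg} with $z=1$ in dimension $d+2$, yielding the $(d+2)^{3/2}\in\bigO{d^{1.5}}$ total-sensitivity bound. Your explicit note that non-negativity of $\norm{p-x}_2^2$ makes the absolute value harmless is a small point the paper leaves implicit, but it is not a departure from the argument.
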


\begin{proof}
Put $p \in P$, and observe that for every $y \in \REAL^d$, $\norm{p - y}_2^2 = \norm{p}_2^2 - 2p^Ty + \norm{y}_2$, which enables us to rewrite the problem by reformulating the query space and the input space ($\REAL^d$ and $P$ respectively). Let $X' = \br{\begin{bmatrix} 1 \\ -x \\ \norm{x}_2^2 \end{bmatrix} \middle| x \in \Q}$. Then, we obtain that for every $x \in X^\prime$
\[
\frac{w(p) \flse(p,x)}{\sum\limits_{q \in P} w(q) \flse(q,x)} = \frac{w(p)\abs{{p^\prime}^T x}}{\sum\limits_{q \in P} w(q) \abs{{q^\prime}^T y}} \leq \sup_{y \in \REAL^{d+2}} \frac{w(p) \abs{{p^\prime}^T y}}{\sum\limits_{q \in P} w(q) \abs{{q^\prime}^Ty}},
\]
where the second inequality is by rewriting the cost function and setting $y \in X'$ and the last inequality follows from $\sup$ operator.

Finally, the upper bound on the sensitivity of each point $p \in P$ and an upper bound on the total sensitivity follows from plugging $P'$, $\REAL^{d+2}$ as the query space, and $z:=1$ into Corollary~\ref{lem:sensBoundLPReg}.
\end{proof}

\begin{corollary}
Let $(P,w, \REAL^d, \flse)$ be a query space, such that for every $x \in \REAL^d$, and $p \in P$, the loss function $\flse$ is defined to be
\[
\flse(p,x) = \norm{p - x}_2^2.
\]

Let $P^\prime = \br{ \begin{bmatrix} p^\prime = \norm{p}_2^2 \\ 2p \\ 1 \end{bmatrix}  \middle| p \in P}$ and let $\flse[g]: P'\times \REAL^{d+2} \to [0,\infty)$ such that for every $x \in \REAL^{d+2}$ and $p \in P^\prime$, $\flse[g](p,x) = \abs{p^Tx}$. For every $p\in P$ and $p' = (\norm{p}^2_2 \mid 2p \mid 1)$ we define $w'(p') = w(p)$. Let $\eps, \delta \in (0,1)$, and let $(S',v')$ be a coreset for the query space $\term{P^\prime, w^\prime, \REAL^{d+2}, \flse[g]}$ by Corollary~\ref{coro:LPReg}. Let $S = \br{p \mid (\norm{p}^2_2 \mid 2p \mid 1)\in S'}$, and for every $p\in S$, and $p'= (\norm{p}^2_2 \mid 2p \mid 1)\in S'$ let $v(p) = v'(p')$. Then, with probability at least $1 - \delta$, $(S,v)$ is an $\eps$-coreset for the query space $(P,w, \REAL^d, \flse)$, and the size of the coreset is $\abs{S}\in \bigO{\frac{\term{d+2}^{2.5}}{\eps^2} \term{d \log{\term{\term{d+2}^{2.5}}} + \log{\term{\frac{1}{\delta}}}}}$.
\end{corollary}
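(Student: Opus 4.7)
The corollary is essentially a reduction of the $\ell_2$-squared loss to $\ell_1$-regression in an augmented space, so the plan is to (i) exhibit an explicit bijective correspondence between queries in $\REAL^d$ (for $\flse$) and a family of queries in $\REAL^{d+2}$ (for $\flse[g]$), (ii) argue that the per-point losses agree under this correspondence, and then (iii) invoke Corollary~\ref{coro:LPReg} with $z=1$ on $(P',w',\REAL^{d+2},\flse[g])$ as a black box.

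\textbf{Step 1: Algebraic identity.} Expand $\flse(p,x) = \norm{p-x}_2^2 = \norm{p}_2^2 - 2p^Tx + \norm{x}_2^2$. For every $x \in \REAL^d$ define $y(x) = (1,\, -x^T,\, \norm{x}_2^2)^T \in \REAL^{d+2}$. Then for every $p\in P$ with $p' = (\norm{p}_2^2,\, 2p^T,\, 1)^T$ we obtain ${p'}^T y(x) = \norm{p}_2^2 - 2p^Tx + \norm{x}_2^2 = \norm{p-x}_2^2 \geq 0$, so $\flse[g](p', y(x)) = \abs{{p'}^T y(x)} = \flse(p,x)$. Summing over the points (using $w'(p') = w(p)$ and $v(p) = v'(p')$) gives the two key identities
\[
\sum_{p \in P} w(p)\,\flse(p,x) \;=\; \sum_{p' \in P'} w'(p')\,\flse[g](p', y(x)),
\]
\[
\sum_{q \in S} v(q)\,\flse(q,x) \;=\; \sum_{q' \in S'} v'(q')\,\flse[g](q', y(x)).
\]

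\textbf{Step 2: Transferring the coreset guarantee.} By Corollary~\ref{coro:LPReg} applied to the query space $(P',w',\REAL^{d+2},\flse[g])$ with $z=1$, with probability at least $1-\delta$ we have for \emph{every} $y \in \REAL^{d+2}$
\[
\left| 1 - \frac{\sum_{q' \in S'} v'(q')\,\flse[g](q', y)}{\sum_{p' \in P'} w'(p')\,\flse[g](p', y)} \right| \;\leq\; \eps.
\]
In particular, this holds for every $y$ of the form $y(x)$ with $x\in\REAL^d$, and plugging the identities of Step~1 into this inequality yields $\left| 1 - \frac{\sum_{q\in S} v(q)\,\flse(q,x)}{\sum_{p\in P} w(p)\,\flse(p,x)} \right| \leq \eps$ for every $x \in \REAL^d$, which is precisely the $\eps$-coreset condition (Definition~\ref{def:epsCore}) for $(P,w,\REAL^d,\flse)$.

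\textbf{Step 3: Size bound.} Since $|S| = |S'|$ by construction, the size follows directly from the $z=1$ branch of Corollary~\ref{coro:LPReg} applied in ambient dimension $d+2$, giving $|S| \in \bigO{\frac{(d+2)^{2.5}}{\eps^2}\bigl(d\log((d+2)^{2.5}) + \log(1/\delta)\bigr)}$.

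\textbf{Main (minor) obstacle.} The only subtlety is that the family $\{y(x) : x \in \REAL^d\}$ is a $d$-dimensional sub-variety of $\REAL^{d+2}$ (constrained by the first coordinate being $1$ and the last being the squared norm of minus the middle block); one might worry whether the coreset we use must be tailored to this restricted family. It is not: Corollary~\ref{coro:LPReg} produces a coreset that works uniformly over all of $\REAL^{d+2}$, so restricting attention to the image of $x\mapsto y(x)$ loses nothing and we pay no price for the over-parametrization beyond the mild $d+2$ appearing in the size bound.
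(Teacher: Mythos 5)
Your proposal is correct and follows essentially the same route as the paper: lift each $p$ to $p'=(\norm{p}_2^2\mid 2p\mid 1)$ and each query $x$ to $(1\mid -x\mid \norm{x}_2^2)$ so that $\flse(p,x)=\abs{{p'}^Tx'}$, apply Corollary~\ref{coro:LPReg} with $z=1$ in $\REAL^{d+2}$, and restrict the resulting uniform-over-$\REAL^{d+2}$ guarantee to the lifted queries. The only cosmetic difference is that the paper also cites Lemma~\ref{lem:sensBoundLSE} for the sensitivity bound, which your direct black-box use of Corollary~\ref{coro:LPReg} subsumes.
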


\begin{proof}
First, observe that by Lemma~\ref{lem:sensBoundLSE}, the total sensitivity is bounded by $t := \term{d+2}^{1.5}$ of $\term{P^\prime, w^\prime, \REAL^{d+2}, \flse[g]}$. Plugging $P:=P^\prime$, $t:=t$ , $\eps:=\eps$ and $\delta:=\delta$ into Corollary~\ref{coro:LPReg}, yields that $S',v'$ is an $\eps$-coreset of size $\bigO{\frac{\term{d+2}^{2.5}}{\eps^2} \term{d \log{\term{\term{d+2}^{2.5}}} + \log{\term{\frac{1}{\delta}}}}}$ for the query space $\term{P^\prime, w^\prime, \REAL^{d+2}, \flse[g]}$.

By construction of $P^\prime$, it holds that for every $p^\prime \in S^\prime$ and $x^\prime = \begin{bmatrix} 1\\ -x \\ \norm{x}_2^2\end{bmatrix}$ where $x \in \REAL^d$,
\[
v\term{p^\prime}\flse[g](p^\prime,x^\prime) = v\term{p^\prime} \abs{p^\prime x^\prime} = v\term{p^\prime} \norm{p - x}_2^2 = v(p) \norm{p - x}_2^2 = v(p) \flse(p,x).
\]

Thus we obtain that for every $x \in \REAL^d$
\[
\abs{\sum\limits_{p \in P} w(p)\norm{p - x}_2^2 - \sum\limits_{p \in S} w(p)\norm{p - x}_2^2} \leq \eps \sum\limits_{p \in P} w(p)\norm{p - x}_2^2,
\]
hold with probability at least $1 - \delta$, i.e., $(S,v)$ is an
$\eps$-coreset for the query space $\term{P, w, \Q, \flse}$.
\end{proof}

\section{Experimental setup}
\label{sec:sup_experimental}

\paragraph{Preprocessing step.} We applied a standardization step, i.e., each input point has zero mean and unit variance. In addition, specifically for the problem of \emph{SVM} and \emph{Logistic regression}, the points were normalized such that the maximal norm of a point in the dataset will be $1$.

\paragraph{Faster algorithms for computing the $f$-SVD} Problems which can be reduced to the $\ell_2$-regression problem, are easier to deal with, since the $f$-SVD can be computed using the SVD factorization which is can be computed in $O\term{n^2d}$, e.g., we showed that both logistic regression and SVM can be reduced to $\ell_2$-regression as discussed in Lemma~\ref{lem:boundingLog2} and Lemma~\ref{lem:boundingSVMs}.

As for our aforementioned problems, we shown a reduction to $\ell_1$ regression, which using~\cite{clarkson2017low}, we can compute the $f$-SVD in roughly $O\left(nd + poly(d) \right)$ time (worst case scenario).

Note that~\cite{clarkson2017low} can accelerate the computation time of the $f$-SVD if the problem can be reduced to $\ell_z$ regression for any $z \geq 1$, due to the fact that it computes an approximated L\"{o}wner ellipsoid using randomized algorithm. For other problems, the time needed for computing the $f$-SVD is mentioned at Theorem~\ref{thm:runTime}.

\end{document}